\documentclass[10pt,twocolumn,letterpaper]{article}

\usepackage{cvpr}
\usepackage{times}
\usepackage{graphicx}
\usepackage{amsmath}
\usepackage{amssymb}
\usepackage{color}
\usepackage{makecell}
\usepackage{multicol,lipsum}
\usepackage{titling}

\newtheorem{theorem}{Theorem}

\newtheorem{lemma}{Lemma}
\newtheorem{proposition}{Proposition}

\newtheorem{definition}{Definition}
\newtheorem{remark}{Remark}
\newenvironment{proof}{\paragraph{Proof:}}{\hfill$\square$}

\setlength{\belowcaptionskip}{-12pt}

\newcommand{\bigzero}{\mbox{\normalfont\Huge  $0$}}
\newcommand{\bigeye}{\mbox{\normalfont\Huge  $I$}}
\newcommand{\rvline}{\hspace*{-\arraycolsep}\vline\hspace*{-\arraycolsep}}
\usepackage[pagebackref=true,breaklinks=true,colorlinks,bookmarks=false]{hyperref}

\date{}

\begin{document}

\title{\textbf{On the Instability of Relative Pose Estimation and RANSAC’s Role}}

\author{Hongyi Fan\\
School of Engineering\\
Brown University\\
{\tt\small hongyi\_fan@brown.edu}
\and
Joe Kileel\\
Department of Mathematics\\
University of Texas at Austin\\
{\tt\small jkileel@math.utexas.edu}
\and
Benjamin Kimia\\
School of Engineering\\
Brown University\\
{\tt\small benjamin\_kimia@brown.edu}
}

\maketitle
\pagenumbering{arabic}

\begin{abstract}
   In this paper we study the numerical instabilities of the 5- and 7-point problems for essential and fundamental matrix estimation in multiview geometry. In both cases we characterize the ill-posed world scenes where the condition number for epipolar estimation is infinite.  We also characterize the ill-posed instances in terms of the given image data. To arrive at these results, we present a general framework for analyzing the conditioning of minimal problems in multiview geometry, based on Riemannian manifolds. Experiments with synthetic and real-world data then reveal a striking conclusion: that Random Sample Consensus (RANSAC) in Structure-from-Motion (SfM) does not only serve to filter out outliers, but RANSAC also selects for well-conditioned image data, sufficiently separated from the ill-posed locus that our theory predicts.  Our findings suggest that, in future work, one could try to accelerate and increase the success of RANSAC by testing only well-conditioned image data.
\end{abstract}

\section{Introduction}
The past two decades have seen an explosive growth of multiview geometry applications such as the reconstruction of 3D object models for use in
video games~\cite{Ablan:3DPhoto:book},
film~\cite{Kitagawa:Mocap:book},
archaeology~\cite{pollefeys2001image},
architecture~\cite{Luhmann:Photogrammetry:book}, and urban modeling (\eg, Google
Street View); match-moving in augmented reality and cinematography for mixing virtual content and real video~\cite{Dobbert:Matchmoving:book}; the organization of a collection of photographs with respect to a scene known as SfM \cite{myozyecsil2017survey}
(\eg, as pioneered in photo tourism~\cite{agarwal2011building});
robotic manipulation~\cite{Horn:Robot:Vision}; and meteorology from cameras in
automobile manufacture and autonomous driving~\cite{Luhmann:Photogrammetry:book}. One of the key building block of a multiview system is the relative pose estimation of two cameras~\cite{hartleyzisserman,szeliski2010computer}. A methodology that is dominant in applications is the use of RANSAC~\cite{raguram2008comparative} to form hypotheses from a few randomly selected correspondences in two views, say 5 in calibrated camera pose estimation~\cite{nister:PAMI:2004} and 7 in uncalibrated camera pose estimation~\cite{stewart1999robust, myseitz2006comparison}, and validate these hypotheses using the remaining putative correspondence. The chief stated reason for using RANSAC is robustness against outliers, see the recent works~\cite{barath2020magsacpp, mishkintalk,brachmann2019neural}. 
The pose of multiple cameras can then be recovered in either a locally incremental~\cite{schoenberger2016sfm} or globally averaging manner~\cite{kasten2019algebraic}. 
This approach has been quite successful in many applications. 
\begin{figure}[t]
    \centering
    (a) \includegraphics[width=0.42\linewidth]{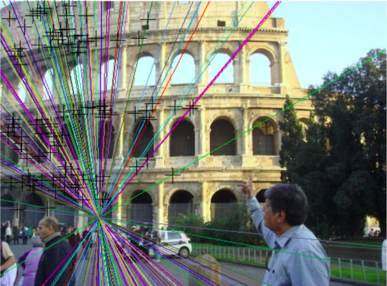}  \includegraphics[width=0.42\linewidth]{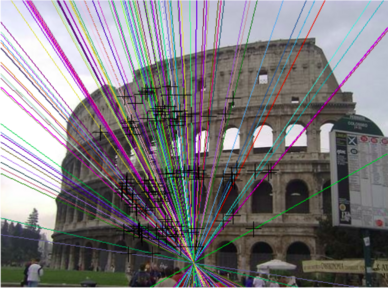} \\
    (b) \includegraphics[width=0.42\linewidth]{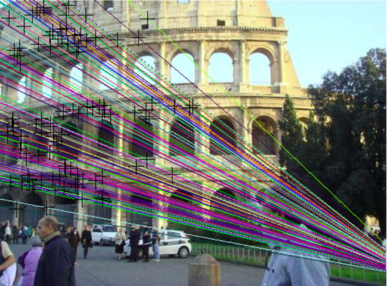}  \includegraphics[width=0.42\linewidth]{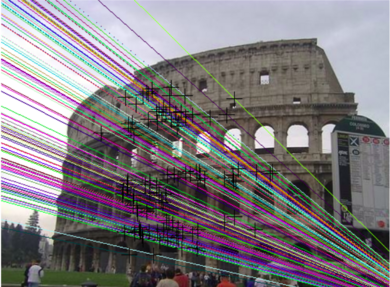}
    \caption{Erroneous relative pose estimation result on real data, given 100 image correspondences all of which are inliers. (a) Ground truth epipolar geometry. (b) Erroneous estimated epipolar geometry from the 7-point algorithm and LO-RANSAC \cite{chum2003locally}.}
    \label{fig:Teaser}
\end{figure}

There are, however, a non-negligible number of scenarios where this approach fails, \eg, in producing the relative pose between two cameras. As an example, when the number of candidate correspondences drops to below (say) 50 correspondences, for images of homogeneous and low textured surfaces, the pose estimation process fails. Similarly, when there is repeated texture in the scene, there is a large number of outlier candidate correspondences and again the process fails. It is curious why the estimation should fail, even if only a few correspondences are available: after all RANSAC can select from $\binom{50}{5} \approx 2.1$ million combinations, so there are plenty of veritable correspondences available if the ratio of outliers is low. In an experiment with no outliers, either with synthetic or real data (see Figure~\ref{fig:Teaser}), we discovered that in fact the process still fails! This is quite puzzling, unless the role of RANSAC goes beyond weeding out the outliers. Indeed, we will argue that a main role for RANSAC is to stabilize the estimation process. We will show that the process of estimating pose is typically unstable, with a gradation of instability depending on the specific choice of 5 points or 7 points. The role of RANSAC is to integrate non-selected matches to gauge the veritability of the unstable estimation process outcome: if a large number of non-selected candidate matches agree, then the hypothesis is both free of outliers but -- perhaps more importantly -- it is a stable estimate. 
Our results suggest that instability is the reason why so many points are needed: not to find outlier-free data, but to evaluate the stability of the hypothesis.  Similarly, in a scenario with repeated texture, the sheer number of outliers both reduces the chance of forming a veritable hypothesis, but also the ability to assess \nolinebreak stability. 

In this paper,  we inspect the general issue of numerical stability in minimal problems in multiview geometry.  We build a  framework that connects well-conditioned minimal point configurations with the condition number of the inverse Jacobian of a forward projection map. 
Using this framework, we compute  condition number formulas for the 5-point and 7-point minimal problems.  Further, we  investigate the issue of ill-posedness, \ie when the condition number is infinite.  We obtain characterizations for a world scene to be ill-posed, and requirements for a minimal image point configuration to be ill-posed.  Along with these theoretical results, we propose a way to measure the stability of a minimal image point set, by measuring the distance from one point to a ``degenerate curve" on the image computed using the other points. 
We note some analysis of the degeneracies of two-view geometry has already appeared, \eg \cite{kahl2002critical, maybank2012theory}.  
Our analysis is novel in its focus on minimal problems,  condition number and image data. 
Our work also suggests a way to accelerate RANSAC, by only testing hypotheses which come from sufficiently well-conditioned image \nolinebreak data.

The rest of the paper is organized as follows. Section~\ref{sec:framework} introduces our general theoretical framework for analyzing the conditioning of an arbitrary minimal problem. Section~\ref{sec:examples} recalls the two classic problems for estimating the relative pose, the 5-point problem for calibrated cameras and 7-point problem for the uncalibrated case. Section~\ref{sec:main-results} presents the results of our analysis for relative pose estimation, characterizing ill-posed world scenes and image data as well as proposing a potential way for testing for  well-conditioned image data. Then Section \ref{sec:experimentals} shows experimental results on synthetic and real data, as a proof-of-concept for our  theory. 

\section{Theoretical Framework} \label{sec:framework}

In this section we present a theoretical framework for analyzing the numerical stability of minimal problems in multiview geometry.  
The most relevant mathematical structure are Riemannian manifolds, which we use to describe the totality of world scenes, image data and epipolar quantities to be estimated.  Riemannian geometry is appropriate because it allows us to discuss intrinsic distances.  We will use tangent spaces, differentials and the inverse function theorem.

The framework builds on the theory of condition number and ill-posed inputs initiated by Demmel  \cite{demmel1987condition} and extended by Burgisser  \cite{burgisser2013condition}. However we have tailored it to the setting of minimal problems, where there are world scenes in-between the input image data and output epipolar \nolinebreak quantity.

\subsection{Spaces and Maps} \label{sec:spaces}

Let $\mathcal{W}, \mathcal{X}$, $\mathcal{Y}$ be Riemannian manifolds, with geodesic distances $d_{\mathcal{W}}(\cdot, \cdot)$, $d_{\mathcal{X}}(\cdot, \cdot)$, $d_{\mathcal{Y}}(\cdot, \cdot)$, tangent spaces denoted by $T(\mathcal{W}, w)$, $T(\mathcal{X}, x)$, $T(\mathcal{Y}, y)$ for points $w \in \mathcal{W}$, $x \in \mathcal{X}$, $y \in \mathcal{Y}$, and inner products on said tangent spaces denoted by $\left\langle \cdot, \cdot \right\rangle_{\mathcal{W},w}$, $\left\langle \cdot, \cdot \right\rangle_{\mathcal{X},x}$,  $\left\langle \cdot, \cdot \right\rangle_{\mathcal{Y},y}$. 
In anticipation of our upcoming applications to multiview geometry, we shall call
\begin{itemize}
    \item $\mathcal{W}$ the \textbf{world scene space};
    \item $\mathcal{X}$ the \textbf{image data space}; 
    \item $\mathcal{Y}$ the \textbf{epipolar space}.
\end{itemize}
To best model minimal problems in multiview geometry, we restrict to the case $\dim(\mathcal{W}) = \dim(\mathcal{X})$ (see Remark~\ref{rem:minimal}).

Assume we are given 
 a differentiable map $\Phi$ from world scenes to image data whose domain is an open dense subset of $\mathcal{W}$.  We indicate the situation using a dashed right \nolinebreak arrow:
\begin{equation}
    \Phi : \mathcal{W} \dashrightarrow \mathcal{X}.
\end{equation}
We assume that the image  $\Phi(\operatorname{Dom}(\Phi))$ contains an open dense subset of the codomain $\mathcal{X}$, and  summarize this property by calling $\Phi$ dominant.  We call $\Phi$ the \textbf{forward map}.

Furthermore, we assume that we are provided a differentiable map from world scenes to epipolar matrices, again defined only on an open dense subset of $\mathcal{W}$:
\begin{equation}
    \Psi : \mathcal{W} \dashrightarrow \mathcal{Y}.
\end{equation}
Again, assume $\Psi$ is dominant.  We call  $\Psi$ the \textbf{epipolar map}.

Given image data $x \in \mathcal{X}$, we call a function $\Theta : \mathcal{X} \supseteq \operatorname{Dom}(\Theta) \rightarrow \mathcal{W}$ a \textbf{3D reconstruction map} locally defined around $x$ if  $\operatorname{Dom}(\Theta)$ is an open neighborhood of $x$ in $\mathcal{X}$ and $\Theta$ is a section of the forward map, that is:
\begin{equation}
   \Phi \circ \Theta = \operatorname{id}_{\operatorname{Dom}(\Theta)}.
\end{equation}
In this case, composing $\Theta$ with the epipolar map gives a (locally defined) map from image data to the epipolar space:
\begin{equation} \label{eq:solution-map}
  \mathbf{S} := \Psi \circ \Theta : \mathcal{X}
    \supseteq \operatorname{Dom}(\Theta) \rightarrow \mathcal{Y}.
\end{equation}
We call $\mathbf{S}$ a \textbf{solution map} (locally defined around $x$).  The name is justified because in minimal problems in multiview geometry the quantity of interest we want to compute is typically an epipolar matrix/tensor, and the input is image data.  

\begin{remark} \label{rem:minimal}
Minimal problems in multiview geometry are modeled as follows: given image data $x \in \mathcal{X}$, we want to compute all  compatible real epipolar matrices/tensors, \ie
\begin{equation}
    \Psi(\Phi^{-1}(x)) = \{ \Psi(w) : w \in \mathcal{W}, \Phi(w) = x\} \subseteq \mathcal{Y}.
\end{equation}
These become hypotheses in RANSAC.
By calling the problem ``minimal", we mean that for $x$ in an open dense subset of $\mathcal{X}$, the output $\Psi(\Phi^{-1}(x))$ is a finite set and not always empty.
In vision problems, minimality is a consequence of additional structure (not required to discuss stability): $\mathcal{W}, \mathcal{X}, \mathcal{Y}$ typically also can be viewed as quasi-projective algebraic varieties \cite{harris2013algebraic} and $\Phi$, $\Psi$ are given by algebraic functions.  With $\dim(\mathcal{W}) = \dim(\mathcal{X})$ and the dominance of $\Phi$, this implies that generic fibers of $\Phi$ are finite sets and the problem is minimal.  For example, see \cite[Def.~2]{duff2020pl}.
\end{remark}

Our goal is to analyze the sensitivity of solution maps $\mathbf{S}$ for minimal problems to noise in the input $x$.  
We shall give a quantitative condition number formula.  We will also describe the locus of ill-posed inputs, where a solution map may not even exist locally or has infinite condition number.  

\subsection{Ill-Posed Locus}
Given image data $x \in \mathcal{X}$ and a prescribed world scene $w \in \mathcal{W}$ such that $\Phi(w) = x$, the next lemma shows there exists a unique continuous 3D reconstruction map $\Theta$ with $\Theta(x) = w$. Further, $\Theta$ is continuously differentiable ($C^1$).

\begin{lemma}  \label{lem:first}
Assume that the forward map $\Phi$ is $C^1$, and that at the world scene $w \in \mathcal{W}$ the forward map differentiates to an isomorphism on tangent spaces.  That is, the differential 
\begin{equation}
    D\Phi(w): T(\mathcal{W}, w) \rightarrow T(\mathcal{X}, \Phi(w))
\end{equation}
is a linear isomorphism.  Then there exist open neighborhoods $\mathcal{U}$ of $w$ in $\mathcal{W}$ and $\mathcal{V}$ of $\Phi(w)$ in $\mathcal{X}$ such that 
$\Phi : \mathcal{U} \rightarrow \mathcal{V}$ 
is bijection, the inverse function is $C^1$, and
\begin{equation} 
    D\left( (\Phi|_{\mathcal{U}})^{-1} \right)(\Phi(w)) = \left( D \Phi(w) \right)^{-1}.
\end{equation}
\end{lemma}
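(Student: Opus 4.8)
The plan is to recognize the statement as nothing more than the inverse function theorem transported from Euclidean space to a smooth manifold; the Riemannian metrics on $\mathcal{W}$ and $\mathcal{X}$ play no role here and will only matter later when condition numbers are computed, so throughout I work with the underlying $C^1$-manifold structure alone. Since $D\Phi(w)$ is assumed to exist, the point $w$ lies in the open set $\operatorname{Dom}(\Phi)$, and I may freely replace $\mathcal{W}$ by this open submanifold. Write $n := \dim(\mathcal{W}) = \dim(\mathcal{X})$.

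\textbf{Step 1 (pass to charts).} Choose smooth coordinate charts $h : \mathcal{U}_0 \to h(\mathcal{U}_0) \subseteq \mathbb{R}^n$ around $w$ and $k : \mathcal{V}_0 \to k(\mathcal{V}_0) \subseteq \mathbb{R}^n$ around $\Phi(w)$, normalized so that $h(w) = 0$ and $k(\Phi(w)) = 0$. Using continuity of $\Phi$, shrink $\mathcal{U}_0$ so that $\Phi(\mathcal{U}_0) \subseteq \mathcal{V}_0$; then the coordinate representative
\[
  \widetilde{\Phi} \;:=\; k \circ \Phi \circ h^{-1} \;:\; h(\mathcal{U}_0) \longrightarrow \mathbb{R}^n
\]
is well defined, and it is $C^1$ precisely because this is what it means for the manifold map $\Phi$ to be $C^1$. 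Its Jacobian at the origin is $D\widetilde{\Phi}(0) = dk(\Phi(w)) \circ D\Phi(w) \circ d(h^{-1})(0)$; each of the three factors is a linear isomorphism — the outer two because charts are diffeomorphisms onto open subsets of $\mathbb{R}^n$, the middle one by hypothesis — so $D\widetilde{\Phi}(0)$ is invertible.

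\textbf{Step 2 (apply the Euclidean inverse function theorem and pull back).} The classical inverse function theorem applied to $\widetilde{\Phi}$ at $0$ yields open sets $A \ni 0$ and $B \ni 0$ in $\mathbb{R}^n$ such that $\widetilde{\Phi} : A \to B$ is a bijection with $C^1$ inverse and $D(\widetilde{\Phi}^{-1})(0) = (D\widetilde{\Phi}(0))^{-1}$. Set $\mathcal{U} := h^{-1}(A)$ and $\mathcal{V} := k^{-1}(B)$, which are open neighborhoods of $w$ and $\Phi(w)$. On $\mathcal{U}$ we have $\Phi = k^{-1} \circ \widetilde{\Phi} \circ h$, a composition of bijections, so $\Phi : \mathcal{U} \to \mathcal{V}$ is a bijection with inverse $(\Phi|_{\mathcal{U}})^{-1} = h^{-1} \circ \widetilde{\Phi}^{-1} \circ k$, which is $C^1$ as a composite of $C^1$ maps.

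\textbf{Step 3 (the derivative formula).} Differentiating $(\Phi|_{\mathcal{U}})^{-1} = h^{-1} \circ \widetilde{\Phi}^{-1} \circ k$ at $\Phi(w)$ by the chain rule gives $D\big((\Phi|_{\mathcal{U}})^{-1}\big)(\Phi(w)) = d(h^{-1})(0) \circ (D\widetilde{\Phi}(0))^{-1} \circ dk(\Phi(w))$, while inverting the expression for $D\widetilde{\Phi}(0)$ from Step~1 gives the very same composite for $(D\Phi(w))^{-1}$; hence the two coincide. I do not expect a genuine obstacle: the mathematical content is entirely the finite-dimensional Euclidean inverse function theorem, and the only care needed is organizational — shrinking the first chart so that $\widetilde{\Phi}$ is defined, and tracking which chart differentials (and their inverses) appear in the chain rule so that the stated formula, rather than a transpose or inverse of it, falls out. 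One could equally cite the inverse function theorem for manifolds directly and skip Steps~1--3.
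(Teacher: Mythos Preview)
Your proof is correct and matches the paper's approach: the paper simply remarks that the lemma ``follows from the inverse function theorem for manifolds'' with a citation to Lee, while you unpack exactly that argument via charts and even note at the end that a direct citation would suffice. There is nothing to add.
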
 \label{lem:first-lemma}

\noindent The lemma follows from the inverse function theorem for manifolds \cite{lee2013smooth}.
In words: if the forward Jacobian 
$D\Phi(w)$ is invertible, then the forward map $\Phi$ is locally invertible and its local inverse is differentiable with Jacobian $(D\Phi(w))^{-1}$.

We now come to a central concept in our framework:
\begin{definition} \label{def:ill-posed}
We say that a world scene $w \in \mathcal{W}$ is \textup{\textbf{ill-posed}} if the differential $D\Phi(w)$ is not invertible.  
We say that image data $x \in \mathcal{X}$ is \textup{\textbf{ill-posed}} if there exists a world scene $w \in \Phi^{-1}(x)$ such that $w$ is ill-posed.
\end{definition}
Ill-posed world scenes are those failing the condition in the above lemma; therefore, \textup{a priori} we do not know if the forward map is locally invertible around ill-posed world scenes.  Meanwhile ill-posed image data are those such that there is at least compatible world scene that is ill-posed; hence  there could be problematic behavior around an ill-posed world scene (We emphasize that other world scenes in $\Phi^{-1}(x)$ need not be ill-posed).
In a moment, we will see that all of the numerical instabilities in minimal problems must occur at (or near) the ill-posed scenes and image data.

\subsection{Condition Number}

Our other central theoretical concept is the condition number. 
We first explain this quite generally (and intuitively), following  \cite[Ch.~14]{burgisser2013condition}.
To this end, let $G : \mathcal{X} \supseteq \operatorname{Dom}(G) \rightarrow \mathcal{Y}$ be any map defined on an open neighborhood of $x$ in $\mathcal{X}$.

\begin{definition}
The \textup{\textbf{condition number}} of $G$ at $x$ is defined by
\begin{equation}
    \operatorname{cond}(G, x) \, := \, \, \lim_{\delta \rightarrow 0^{+}} \, \sup_{\substack{\widetilde{x} \in \mathcal{X}\\ d_{\mathcal{X}}(\widetilde{x}, x) < \delta}} \frac{d_{\mathcal{Y}}\left(G(\widetilde{x}), G(x)\right)}{d_{\mathcal{X}}\left(\widetilde{x}, x\right)}.
\end{equation}
\end{definition}
In a slogan: the condition number captures the limiting \textit{worst-case amplification} of input error in $x$ that the function $G$ can produce in its output $G(x)$, when distances are measured according to the intrinsic metrics on \nolinebreak $\mathcal{X}$ and \nolinebreak $\mathcal{Y}$.

If $G$ is differentiable, we have a more explicit formula.

\begin{lemma} \label{lem:cond-num}
If $G$ is differentiable then the condition number of $G$ at $x$ equals the operator norm of the differential $DG(x): T(\mathcal{X}, x) \rightarrow T(\mathcal{Y}, y)$, \ie
\begin{equation}
    \operatorname{cond}(G,x) = \max_{\substack{\dot{x} \in T(\mathcal{X}, x)\\ \| \dot{x} \| = 1 }} \| DG(x)(\dot{x}) \| =: \|DG(x)\|, 
\end{equation}
where the two norms in the middle quantity are induced by the Riemannian inner products $\langle \cdot, \cdot \rangle_{\mathcal{X},x}$ and $\langle \cdot, \cdot \rangle_{\mathcal{Y}, G(x)}$.
\end{lemma}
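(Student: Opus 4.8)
The plan is to transport the problem from the manifolds $\mathcal{X},\mathcal{Y}$ onto their tangent spaces using the Riemannian exponential map, where the statement reduces to the elementary facts that a differentiable map agrees with its derivative to first order and that the operator norm of a linear map is exactly its worst-case amplification factor.

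First I would set $y := G(x)$ and pass to exponential charts at $x$ and $y$. Recall (standard Riemannian geometry, e.g.\ \cite{lee2013smooth}) that $\exp_x : T(\mathcal{X},x) \dashrightarrow \mathcal{X}$ and $\exp_y : T(\mathcal{Y},y) \dashrightarrow \mathcal{Y}$ are local diffeomorphisms near the origin with $\exp_x(0)=x$, $\exp_y(0)=y$, and, under the canonical identifications, $D\exp_x(0)=\operatorname{id}_{T(\mathcal{X},x)}$ and $D\exp_y(0)=\operatorname{id}_{T(\mathcal{Y},y)}$. On a small ball around $0$ in $T(\mathcal{X},x)$ I define
\[
  \widetilde{G} \;:=\; \exp_y^{-1} \circ\, G \circ \exp_x, \qquad \widetilde{G}(0)=0,
\]
which is legitimate for small inputs since $G$ is continuous and $G(x)=y$, so $G(\exp_x v)$ stays within the injectivity radius of $y$ and inside $\operatorname{Dom}(G)$. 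By the chain rule $D\widetilde{G}(0) = DG(x)$ as a map $T(\mathcal{X},x)\to T(\mathcal{Y},y)$. The key geometric input is the Gauss lemma: inside the injectivity radii one has $d_{\mathcal{X}}(\exp_x v, x) = \|v\|$ and $d_{\mathcal{Y}}(\exp_y w, y) = \|w\|$ exactly. Hence for every $\widetilde{x} = \exp_x v$ close enough to $x$,
\[
  \frac{d_{\mathcal{Y}}\!\left(G(\widetilde{x}), G(x)\right)}{d_{\mathcal{X}}\!\left(\widetilde{x}, x\right)} \;=\; \frac{\|\widetilde{G}(v)\|}{\|v\|},
\]
and since $\exp_x$ maps the $\delta$-ball about $0$ bijectively onto the geodesic $\delta$-ball about $x$ for small $\delta$, the supremum in the definition of $\operatorname{cond}(G,x)$ over $\widetilde{x}$ with $d_{\mathcal{X}}(\widetilde{x},x)<\delta$ coincides with the supremum of $\|\widetilde{G}(v)\|/\|v\|$ over $0<\|v\|<\delta$.

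Next I would linearize. Writing $\widetilde{G}(v) = DG(x)v + r(v)$ with $\|r(v)\|/\|v\|\to 0$ as $v\to 0$, the triangle inequality gives $\|\widetilde{G}(v)\|/\|v\| \le \|DG(x)\| + \|r(v)\|/\|v\|$, so $\limsup_{\delta\to 0^+}$ of the inner supremum is at most $\|DG(x)\|$. For the matching lower bound I would use compactness of the unit sphere in the finite-dimensional space $T(\mathcal{X},x)$ to choose a unit vector $u$ with $\|DG(x)u\| = \|DG(x)\|$, and test the ratio along $v = tu$ as $t\downarrow 0$: then $\|\widetilde{G}(tu)\|/\|tu\| \ge \|DG(x)u\| - \|r(tu)\|/t \to \|DG(x)\|$, so the inner supremum is $\ge \|DG(x)\|$ for every $\delta>0$, whence $\liminf_{\delta\to0^+} \ge \|DG(x)\|$. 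Combining the two bounds shows the limit defining $\operatorname{cond}(G,x)$ exists and equals $\|DG(x)\|$, which is the quantity asserted in the lemma.

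The only step that is not purely formal is the first one: correctly reducing the intrinsic geodesic distances to tangent-space norms. This rests entirely on two standard facts about normal coordinates — $D\exp_p(0)=\operatorname{id}$ and $d(\exp_p v, p) = \|v\|$ near the origin (the Gauss lemma) — together with the bookkeeping needed to keep all balls within the injectivity radii and within $\operatorname{Dom}(G)$. Once the problem has been moved onto the tangent spaces, the remainder is the textbook first-order estimate, and I expect no genuine difficulty there.
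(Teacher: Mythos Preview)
Your argument is correct and complete. The paper itself does not give a proof: it simply cites \cite[Prop.~14.1]{burgisser2013condition} and remarks that the result is ``proven using Taylor's theorem,'' so your proposal is in fact more detailed than the paper's treatment, and the method you use --- pass to normal coordinates via $\exp$, invoke the Gauss lemma to identify geodesic distance with tangent norm, then linearize --- is exactly the standard Taylor-theorem route the paper alludes to.
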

This is \cite[Prop.~14.1]{burgisser2013condition}, and proven using Taylor's theorem.
The lemma reduces computing the condition number of a differentiable map to computing the leading singular value of its Jacobian matrix written with respect to orthonormal bases on the tangent spaces $T(\mathcal{X},x)$ and $T(\mathcal{Y},G(x))$.

Here we are most interested in the condition number of solution maps for minimal problems as in Eq.~\eqref{eq:solution-map}.  Putting the previous two lemmas together with the chain rule  gives:
\begin{lemma} \label{lem:cond-of-S}
Let $\mathbf{S} = \Psi \circ \Theta : \mathcal{X} \supseteq \operatorname{Dom}(\Theta) \rightarrow \mathcal{Y}$ be a solution map as in  \eqref{eq:solution-map} defined around the image data $x \in \mathcal{X}$.  Let $w = \Theta(x) \in \mathcal{W}$ be the corresponding world scene.  Assume that $w$ is not ill-posed, \ie $D\Phi(w)$ is invertible.  Then, the condition number of $\mathbf{S}$ at $x$ is finite and given by
\begin{equation} \label{eq:cond-formula}
    \operatorname{cond}(\mathbf{S},x) = \| D\Psi(w) \circ D\Phi(w)^{-1} \|.
\end{equation}
In particular $\operatorname{cond}(\mathbf{S},x)$ can be infinite only if $x$ is ill-posed.
\end{lemma}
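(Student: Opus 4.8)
The plan is to compose the three lemmas already in hand. First I would invoke Lemma~\ref{lem:first}: since $w$ is not ill-posed, $D\Phi(w)$ is a linear isomorphism, so there are open neighborhoods $\mathcal{U} \ni w$ and $\mathcal{V} \ni x$ on which $\Phi$ restricts to a $C^1$ diffeomorphism with $C^1$ inverse, and $D((\Phi|_{\mathcal{U}})^{-1})(x) = (D\Phi(w))^{-1}$. After possibly shrinking $\operatorname{Dom}(\Theta)$ to lie inside $\mathcal{V}$, uniqueness of continuous sections forces $\Theta = (\Phi|_{\mathcal{U}})^{-1}$ on this neighborhood; in particular $\Theta$ is $C^1$ near $x$ with $D\Theta(x) = (D\Phi(w))^{-1}$.

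Next I would differentiate $\mathbf{S} = \Psi \circ \Theta$. Since $\Psi$ is differentiable (it is $C^1$ on an open dense set, and $w$ lies in its domain — this should be noted, or absorbed into the standing assumptions) and $\Theta$ is $C^1$ near $x$, the composite $\mathbf{S}$ is differentiable at $x$, and the chain rule for manifolds gives
\begin{equation}
    D\mathbf{S}(x) = D\Psi(\Theta(x)) \circ D\Theta(x) = D\Psi(w) \circ D\Phi(w)^{-1}.
\end{equation}
Then I apply Lemma~\ref{lem:cond-num} to the differentiable map $\mathbf{S}$ at $x$: its condition number equals the operator norm $\|D\mathbf{S}(x)\|$ with respect to the Riemannian inner products on $T(\mathcal{X},x)$ and $T(\mathcal{Y}, \mathbf{S}(x))$, which is exactly $\|D\Psi(w) \circ D\Phi(w)^{-1}\|$, yielding Eq.~\eqref{eq:cond-formula}. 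Finiteness is then immediate: $D\Phi(w)^{-1}$ is a bounded linear map between finite-dimensional inner product spaces and $D\Psi(w)$ is bounded, so their composition has finite operator norm.

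For the final clause I argue by contraposition. Suppose $x$ is \emph{not} ill-posed; by Definition~\ref{def:ill-posed} every $w' \in \Phi^{-1}(x)$ has $D\Phi(w')$ invertible. In particular, for the specific $w = \Theta(x)$ attached to this solution map, $w$ is not ill-posed, so the formula just derived shows $\operatorname{cond}(\mathbf{S},x)$ is finite. Hence $\operatorname{cond}(\mathbf{S},x) = \infty$ can occur only when $x$ is ill-posed.

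The substantive content is entirely carried by the earlier lemmas, so there is no real obstacle in the computation. The one place demanding care is the bookkeeping at the start: making sure that the given solution map $\mathbf{S}$, defined via \emph{some} 3D reconstruction map $\Theta$ with $\Theta(x) = w$, actually agrees near $x$ with the inverse branch produced by Lemma~\ref{lem:first} — this is where the uniqueness of continuous sections (asserted in the text preceding Lemma~\ref{lem:first}) must be explicitly used — and checking that $w$ lies in the domain of the differentiable map $\Psi$ so that the chain rule applies. Everything after that is a formality.
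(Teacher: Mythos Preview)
Your proposal is correct and follows exactly the route the paper indicates: the paper states that this lemma is obtained by ``putting the previous two lemmas together with the chain rule,'' and your argument does precisely that, invoking Lemma~\ref{lem:first} for $D\Theta(x) = (D\Phi(w))^{-1}$, the chain rule for $D\mathbf{S}(x)$, and Lemma~\ref{lem:cond-num} for the operator-norm formula. Your added bookkeeping (uniqueness of the section, domain of $\Psi$) is more careful than the paper's one-line justification but does not diverge from it.
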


\subsection{Relation Between Ill-Posed Loci and Condition Number}

As shown in Lemma~\ref{lem:cond-of-S}, the condition number at $x \in \mathcal{X}$ of a varying epipolar matrix/tensor can be infinite only if $x$ is ill-posed as in Definition~\ref{def:ill-posed}.
If $x$ is ill-posed, the corresponding world scene $w = \Theta(x) \in \mathcal{W}$ such that $D\Phi(w)$ is rank-deficient might suffer unboundedly large relative changes as   $x$ changes.
Further, Lemma~\ref{lem:first} implies that the number of real 3D reconstructions is locally constant for inputs $x \in \mathcal{X}$ which are not ill-posed.
In other words, there can only be a change in the number of real epipolar matrices/tensors when the image data $x$ crosses over the ill-posed locus. 
Thus, the ill-posed locus captures the ``danger zone" where at least one of the solutions to the minimal problem can be unboundedly unstable, and also where real solutions can disappear into (or reappear from) the complex numbers.

In \cite{demmel1987condition}, Demmel proved that in some cases,  
the reciprocal of the distance to the ill-posed locus equals the condition number.
For example, this was shown for the problem of matrix inversion. 
Here, we do not prove a quantitative relationship between the distance to the ill-posed locus and the condition number for solving minimal problems in computer vision as such.  But we do numerically demonstrate a close relationship in the case of essential and fundamental matrix estimation in the experiments below, see Section~\ref{sec:experimentals}.

\section{Main Examples} \label{sec:examples}

We will apply our framework to study the sensitivity of minimal problems for one of the most popular tasks in multiview geometry: relative pose estimation.
In this section, we simply recall the relevant setup for the 5-point and 7-point minimal problems, by defining what the various spaces and maps are, from Section~\ref{sec:spaces}, for these cases.

\subsection{Essential Matrices and 5-Point Problem} \label{example:essential}

Here the world scenes consist of the relative pose between two calibrated pinhole cameras together with five world points, \ie
\begin{equation}
    \mathcal{W} = \operatorname{SO}(3) \times \mathbb{S}^2 \times (\mathbb{R}^3)^{\times 5} =  \{{\large{(}}R,t,X_1,\ldots, X_5{\large{)}}\},
\end{equation}
where $\operatorname{SO}(3) = \{R \in \mathbb{R}^{3 \times 3} : RR^{\top} = R^{\top}R = I\}$ is the special orthgonal group (representing the  orientation in the relative pose) and $\mathbb{S}^2 = \{ t \in \mathbb{R}^3 : \| t \|_2 = 1 \}$ is the unit sphere (representing the direction of the translation in the relative pose).
Meanwhile, the image data space consists of five image point pairs:
\begin{equation}
   \mathcal{X} = \left( \mathbb{R}^2 \times \mathbb{R}^2 \right)^{\times 5} = \{{\Large{(}}(x_1, y_1), \ldots, (x_5, y_5){\Large{)}}\}.
\end{equation}
The forward map projects the given world points via the calibrated cameras $[I \,\, 0] \in \mathbb{R}^{3 \times 4}$ and $[R \,\, t] \in \mathbb{R}^{3 \times 4}$, \ie
{\footnotesize
\begin{multline} \label{eq:E-forward}
\Phi\left(R,t,X_1,\ldots, X_5 \right) = {\Large{(}} (\beta(X_1), \beta(RX_1 + t)),  \ldots, \\   (\beta(X_5), \beta(RX_5 + t)) {\Large{)}},
\end{multline}
}
where $\beta: \mathbb{R}^3 \dashrightarrow \mathbb{R}^2$ is the quotient map 
\begin{align} \label{eq:quotient}
\beta(a_1,a_2,a_3) := (a_1/a_3, a_2/a_3)
\end{align}
defined whenever $a_3 \neq 0$.
The epipolar space consists of the manifold of real essential matrices, characterized by ten cubic equations vanishing \cite{demazure1988deux} or using singular \nolinebreak values:
{\footnotesize
\begin{align} \label{eq:essential-ideal}
     \mathcal{Y} &= \{E \in \mathbb{P}(\mathbb{R}^{3 \times 3}): 2EE^{\top}\!E - \operatorname{tr}(EE^{\top}\!)E  = 0, \nonumber  \det(E)=0 \} \nonumber \\
    &= \{E \in \mathbb{P}(\mathbb{R}^{3 \times 3}) : \sigma_1(E) = \sigma_2(E) > \sigma_3(E) = 0\}.
\end{align}
}
Lastly, the epipolar map sends a world scene to the essential matrix associated to the scene's relative pose:
\begin{equation}
    \Psi\left(  R, t, X_1, \ldots, X_5 \right) = R [t]_{\times} \in \mathbb{P}(\mathbb{R}^{3 \times 3}),
\end{equation}
where $[t]_{\times} \in \mathbb{R}^{3 \times 3}$
is the usual skew matrix representation of cross product with $t$ as in \cite[Sec.~9.6]{hartleyzisserman}.

Given five point pairs $x \in \mathcal{X}$, there are at most $40$ compatible world scenes in $\mathcal{W}$ (coming in twisted pairs \cite[Sec.~9.6.3]{hartleyzisserman}) which map to at most $10$ compatible essential matrices in $\mathcal{Y}$.  Nister gave a solver, boiling down to computing real roots of a degree $10$ univariate polynomial, in \cite{nister:PAMI:2004}.

\subsection{Fundamental Matrices and 7-Point Problem} \label{example:fundamental}

Here the world scenes consist of the relative pose of two uncalibrated pinhole cameras together with seven world points. 
Here it is less immediate than in the calibrated case (Example~\ref{example:essential}) how we should represent the relative pose in an (almost everywhere) one-to-one way using a minimal number of parameters.
However, it turns out that our main results are independent of the coordinate system choice we make for $\mathcal{W}$, thus we will represent the relative pose by the open dense subset of $\mathbb{R}^7 = \{b = (b_1, \ldots, b_7)\}$ where 
\begin{equation} \label{eq:Mb}
  M(b) :=   \begin{pmatrix} 
    1 & b_1 & b_2 & b_3 \\
    b_4 & b_5 & b_6 & b_7 \\
    0 & 0 & 0 & 1
    \end{pmatrix}
\end{equation}
has rank $3$.  
In the supplementary materials, we prove that this set gives a normal form for almost all uncalibrated relative poses, \ie for an open dense subset of pairs of uncalibrated camera matrices we can uniquely bring the bring to the form $[I \,\,  0] \in \mathbb{R}^{3 \times 4}$ and $M(b) \in \mathbb{R}^{3 \times 4}$ by multiplying the pair on right by an appropriate projective world transformation in $\operatorname{PGL}(4) = \{g \in \mathbb{P}(\mathbb{R}^{4 \times 4}): \det(g) \neq 0\}$.
Thus, 
\begin{equation}
    \mathcal{W} = \mathbb{R}^7 \times (\mathbb{R}^3)^{\times 7} = \{(b, X_1, \ldots, X_7)\}.
\end{equation}
The forward map projects the given world points via the uncalibrated cameras $[I \,\, 0]$ and $M(b)$, \ie
{\footnotesize
\begin{multline} \label{eq:F-forward}
\Phi(b, X_1, \ldots, X_7) = \\ ((\beta(X_1), \beta(M(b)\begin{pmatrix} X_1 \\ 1 \end{pmatrix})), \ldots, (\beta(X_7), \beta(M(b) \begin{pmatrix} X_7 \\ 1 \end{pmatrix}))),
\end{multline}
}
\hspace{-0.5em} where $\beta : \mathbb{R}^3 \dashrightarrow \mathbb{R}^2$ is the quotient map in \eqref{eq:quotient}.
The epipolar space consists of the manifold of real fundamental matrices, which is the same as rank-two $3 \times 3$ real matrices defined up to nonzero scale:
\begin{equation}
    \mathcal{Y} = \{ F \in \mathbb{P}(\mathbb{R}^{3 \times 3} ) : \operatorname{rank}(F) = 2\}.
\end{equation}
The epipolar map sends a world scene to the fundamental matrix associated to the scene's relative pose \cite[Eq.~17.3]{hartleyzisserman}:
\begin{align} \label{eq:F-def}
    & \Psi(b, X_1, \ldots, X_7) = F \in \mathbb{P}(\mathbb{R}^{3 \times 3}), \textit{ where }  \\
    & F_{ji} := (-1)^{i+j} \det \! \begin{pmatrix} [I \,\, 0] \textup{ with row } i \textup{ omitted} \\
M(b) \textup{ with row } j \textup{ omitted} 
\end{pmatrix}. \nonumber
\end{align}

Given seven point pairs $x \in \mathcal{X}$, there are most $3$ compatible world scenes $w \in \mathcal{W}$ mapping to at most $3$ compatible fundamental matrices in $\mathcal{Y}$. These are found by computing real roots of a cubic univariate polynomial \cite[Sec.~11.1.2]{hartleyzisserman}.

\section{Main Results} \label{sec:main-results}
We now present our main theoretical results regarding the instabilities of relative pose estimation, by applying the framework in Section~\ref{sec:framework} to the minimal problems in Section~\ref{sec:examples}.
Due to space limitations, all proofs (and certain explicit formulas) will appear in the supplementary materials.

\subsection{Condition Number Formulas}\label{sec:conditionNumsFormulas}

Here we apply the formula \eqref{eq:cond-formula} based on singular values of the Jacobian matrix to the 5-point and 7-point problems.  
This yields condition number formulas for essential and fundamental estimation.  The expressions are valid if the solution maps passes through non-ill-posed world scenes; in fact they only depend on said world scene.
We display the explicit Jacobian matrices in the supplementary materials.

\begin{proposition}[Condition number for $E$] \label{prop:formula-E}
Consider the 5-point problem in Section~\ref{example:essential}.  Let $x \in (\mathbb{R}^2 \times \mathbb{R}^2)^{\times 5}$ be given image data, and $w \in \operatorname{SO}(3) \times \mathbb{S}^2 \times (\mathbb{R}^3)^{\times 5}$ a compatible world scene which is not ill-posed.  Then there exists a unique continuous 3D reconstruction map $\Theta$ locally defined around $x$ such that $\Theta(x) = w$, and an associated uniquely defined solution map $\mathbf{S} = \Psi \circ \Theta$ from image data to essential matrices. 
The condition number of $\mathbf{S}$ can be computed as the largest singular value of an explicit $5 \times 20$ matrix whose entries are functions of $w$.  This  matrix naturally factors as a $5 \times 20$ matrix multiplied by a $20 \times 20$ matrix.  
\end{proposition}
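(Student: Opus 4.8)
The plan is to unwind the definition of the solution map $\mathbf S$ and apply the formula \eqref{eq:cond-formula} from Lemma~\ref{lem:cond-of-S}. First I would invoke Lemma~\ref{lem:first}: since $w$ is not ill-posed, the differential $D\Phi(w)$ is a linear isomorphism $T(\mathcal W, w)\to T(\mathcal X, x)$, so there is a unique $C^1$ local inverse $\Theta = (\Phi|_{\mathcal U})^{-1}$ with $\Theta(x)=w$, and the associated solution map $\mathbf S=\Psi\circ\Theta$ is well-defined and $C^1$ near $x$. By Lemma~\ref{lem:cond-num} and the chain rule, $\operatorname{cond}(\mathbf S,x) = \|D\Psi(w)\circ D\Phi(w)^{-1}\|$, the largest singular value of that composite linear map, computed with respect to orthonormal bases of $T(\mathcal X, x)$ and $T(\mathcal Y, \Psi(w))$. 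Everything therefore reduces to writing the two differentials as explicit matrices.

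Next I would fix coordinates and bases. The domain $T(\mathcal W,w)$ splits as $T(\operatorname{SO}(3),R)\oplus T(\mathbb S^2,t)\oplus(\mathbb R^3)^{\times 5}$, of dimension $3+2+15=20$; I would pick an orthonormal basis using, e.g., $\{[\omega]_\times R\}$ for a basis $\omega$ of $\mathbb R^3$ on the $\operatorname{SO}(3)$ factor, an orthonormal basis of $t^\perp$ on the $\mathbb S^2$ factor, and the standard basis on each $\mathbb R^3$. The codomain $T(\mathcal X,x)$ has dimension $20$ as well and carries the standard Euclidean basis. Differentiating the forward map \eqref{eq:E-forward} entry-by-entry — i.e. differentiating $\beta$ via its $2\times 3$ Jacobian $D\beta(a) = \tfrac{1}{a_3}\begin{pmatrix}1 & 0 & -a_1/a_3\\ 0 & 1 & -a_2/a_3\end{pmatrix}$ composed with the derivatives of $X_i\mapsto X_i$ and $X_i\mapsto RX_i+t$ in $R,t,X_i$ — yields an explicit $20\times 20$ matrix $J_\Phi$. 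Its block-sparse structure (each image pair depends only on $R$, $t$, and its own $X_i$) makes it invertible exactly when $w$ is not ill-posed, and $D\Phi(w)^{-1} = J_\Phi^{-1}$ is the $20\times 20$ factor in the statement. Then I would differentiate the epipolar map $\Psi(R,t,\dots)=R[t]_\times$, using $D\Psi(w)(\,[\omega]_\times R,\dot t,\dots) = [\omega]_\times R[t]_\times + R[\dot t]_\times$ projected onto $T(\mathcal Y, R[t]_\times)$; since $\mathcal Y\subset\mathbb P(\mathbb R^{3\times3})$ is $5$-dimensional, after choosing an orthonormal basis of its tangent space (e.g. via the SVD characterization $\sigma_1=\sigma_2>\sigma_3=0$, or simply an orthonormal basis of the $5$-dimensional subspace of $3\times3$ matrices orthogonal to the scaling direction and the normal directions of the defining equations) this is the $5\times 20$ matrix. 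Composing gives the claimed $5\times 20$ matrix $J_\Psi J_\Phi^{-1}$, and its top singular value is the condition number.

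The one genuinely delicate point is the tangent space of the epipolar manifold $\mathcal Y$ and the choice of its Riemannian metric: $\mathcal Y$ lives in projective space $\mathbb P(\mathbb R^{3\times 3})$, so one must quotient out the scaling direction $E$ itself and then restrict to the tangent cone of the essential variety, verifying it is $5$-dimensional at a non-ill-posed scene and producing an orthonormal basis so that "largest singular value" is metric-independent of arbitrary scaling of $E$. I would handle this by working on the unit sphere of $\mathbb R^{3\times3}$ (normalizing $\|E\|_F=1$), identifying $T(\mathcal Y,E)$ with the orthogonal complement of $\operatorname{span}(E)$ inside the kernel of the differential of the defining cubic-and-determinant equations, and checking the dimension count there. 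The remaining work — assembling the two Jacobians explicitly — is routine multivariable calculus, so I would relegate the exact matrix entries to the supplementary material, as the proposition already promises.
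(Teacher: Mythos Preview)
Your proposal is correct and follows essentially the same approach as the paper: invoke the inverse function theorem for uniqueness of $\Theta$, apply the condition number formula $\|D\Psi(w)\circ D\Phi(w)^{-1}\|$, and compute both Jacobians explicitly with respect to orthonormal bases. The only notable implementation differences are that the paper factors the forward map as $\Phi = \Phi_2\circ\Phi_1$ through the intermediate space $(\mathbb{R}^3\times\mathbb{R}^3)^{\times 5}$ (giving a $20\times30$ times $30\times20$ product before inversion), uses the right-invariant parameterization $R[s]_\times$ of $T(\operatorname{SO}(3),R)$ rather than your left-invariant $[\omega]_\times R$, and handles the delicate point you flag---orthonormalizing $T(\mathcal{E},R[t]_\times)$---by first proving $(R,t)\mapsto R[t]_\times$ is a submersion and then left-multiplying by $G^{1/2}$ for the explicit $5\times5$ Grammian of the pushed-forward basis.
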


\begin{proposition}[Condition number for $F$] \label{prop:formula-F}
Consider the 7-point problem in Section~\ref{example:fundamental}.  Let $x \in (\mathbb{R}^2 \times \mathbb{R}^2)^{\times 7}$ be given image data, and $w \in \mathbb{R}^7 \times (\mathbb{R}^3)^{\times 5}$ a compatible world scene which is not ill-posed.  Then there exists a unique continuous 3D reconstruction map $\Theta$ locally defined around $x$ such that $\Theta(x) = w$, and an associated uniquely defined solution map $\mathbf{S} = \Psi \circ \Theta$ from image data to essential matrices. 
The condition number of $\mathbf{S}$ can be computed as the largest singular value of an explicit $7 \times 28$ matrix whose entries are functions of $w$.  This  matrix naturally factors as a $7 \times 28$ matrix multiplied by the inverse of a  $28 \times 28$ matrix.  
\end{proposition}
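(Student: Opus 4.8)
}
The plan is to convert the abstract quantity $\operatorname{cond}(\mathbf{S},x)=\|D\Psi(w)\circ D\Phi(w)^{-1}\|$ supplied by Lemma~\ref{lem:cond-of-S} into a concrete matrix and read off its largest singular value using Lemma~\ref{lem:cond-num}.  Since $w$ is assumed not ill-posed, $D\Phi(w)$ is a linear isomorphism, so Lemma~\ref{lem:first} already furnishes the unique $C^1$ reconstruction map $\Theta$ with $\Theta(x)=w$, hence the associated solution map $\mathbf{S}=\Psi\circ\Theta$ defined near $x$, and Lemma~\ref{lem:cond-of-S} guarantees that $\operatorname{cond}(\mathbf{S},x)$ is finite and equals the displayed operator norm.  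By Lemma~\ref{lem:cond-num}, that operator norm is the largest singular value of the matrix representing $D\Psi(w)\circ D\Phi(w)^{-1}$ with respect to \emph{orthonormal} bases of $T(\mathcal{X},x)$ and $T(\mathcal{Y},F)$, where $F=\Psi(w)$.  So it remains to exhibit $D\Phi(w)$ as an explicit invertible $28\times 28$ matrix and $D\Psi(w)$ as an explicit $7\times 28$ matrix in such bases.

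First I would differentiate the forward map.  Because $\mathcal{W}=\mathbb{R}^{7}\times(\mathbb{R}^{3})^{\times 7}$ is an open subset of Euclidean $\mathbb{R}^{28}$ with its flat metric, the standard coordinate frame ($7$ directions along $b$, $21$ along the $X_i$) is orthonormal, and likewise for $\mathcal{X}=(\mathbb{R}^2\times\mathbb{R}^2)^{\times 7}\subseteq\mathbb{R}^{28}$.  Hence $D\Phi(w)$ is just the ordinary $28\times 28$ matrix of partial derivatives of the seven image-pair components in \eqref{eq:F-forward}.  Each such component is the quotient map $\beta$ of \eqref{eq:quotient}, whose differential is immediate, post-composed onto the affine map $X_i\mapsto M(b)\left(\begin{smallmatrix}X_i\\ 1\end{smallmatrix}\right)$ built from \eqref{eq:Mb}; the chain rule gives the entries, and the matrix carries a block structure since the $i$-th image pair depends only on $X_i$ and on $b$.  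Invertibility of $D\Phi(w)$ is precisely the hypothesis that $w$ is not ill-posed (Definition~\ref{def:ill-posed}), so $D\Phi(w)^{-1}$ is a well-defined $28\times 28$ matrix whose entries are functions of $w$.

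Next I would differentiate the epipolar map, where the target is projective and singular-value constrained.  By \eqref{eq:F-def}, $\Psi$ depends only on the $b$-block, so $\Psi=q\circ\widetilde F\circ p$, with $p\colon T(\mathcal{W},w)\cong\mathbb{R}^{28}\to\mathbb{R}^{7}$ the coordinate projection onto the $b$-directions, $\widetilde F\colon\mathbb{R}^{7}\to\mathbb{R}^{3\times3}$ the polynomial matrix of signed minors in \eqref{eq:F-def}, and $q\colon\mathbb{R}^{3\times3}\setminus\{0\}\to\mathbb{P}(\mathbb{R}^{3\times3})$ the projectivization.  Equipping $\mathbb{R}^{3\times3}$ with the Frobenius inner product and $\mathbb{P}(\mathbb{R}^{3\times3})$ with the quotient metric induced from the Frobenius unit sphere, $Dq(\widetilde F(b))$ is $\tfrac{1}{\|\widetilde F(b)\|}$ times the orthogonal projection onto the orthogonal complement of $\widetilde F(b)$.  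The manifold $\mathcal{Y}$ is cut out of $\mathbb{P}(\mathbb{R}^{3\times3})$ by $\det=0$; since the derivative of $\det$ at $F$ along $H$ equals $\operatorname{tr}(\operatorname{adj}(F)\,H)$ for the classical adjugate $\operatorname{adj}(F)$ (so $F\operatorname{adj}(F)=\det(F)I$), and since for $F\in\mathcal{Y}$ the matrix $\operatorname{adj}(F)$ has rank $1$ while $F$ has rank $2$ --- making the two constraints linearly independent --- the tangent space is the $7$-dimensional subspace
\begin{equation}
  T(\mathcal{Y},F)=\bigl\{\,H\in\mathbb{R}^{3\times3}\ :\ \langle H,F\rangle=0,\ \langle H,\operatorname{adj}(F)^{\top}\rangle=0\,\bigr\}.
\end{equation}
Fixing an orthonormal basis $B_1,\dots,B_7$ of this subspace and writing $D\Psi(w)=Dq\circ D\widetilde F\circ p$ in it produces the explicit $7\times 28$ matrix (its columns along the $X_i$ vanish, reflecting that $\Psi$ ignores the world points).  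Post-multiplying by $D\Phi(w)^{-1}$ then gives the advertised factorization of a $7\times 28$ matrix as a $7\times 28$ matrix times the inverse of a $28\times 28$ matrix, whose largest singular value equals $\operatorname{cond}(\mathbf{S},x)$ by Lemma~\ref{lem:cond-num}.

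The entire conceptual content is Lemmas~\ref{lem:first}--\ref{lem:cond-of-S}; everything else is bookkeeping, deferred to the supplement.  I expect the main obstacle to be organizational rather than substantive: correctly linearizing both the rank-two constraint and the projectivization so that the chosen basis of $T(\mathcal{Y},F)$ is genuinely orthonormal --- in particular retaining the $1/\|\widetilde F(b)\|$ scale factor --- and assembling the $28\times 28$ forward Jacobian with the correct block pattern while confirming that none of the quotient-map denominators in \eqref{eq:F-forward} degenerate at $w$.  The proof of Proposition~\ref{prop:formula-E} is entirely analogous; the only additional subtlety is that $\operatorname{SO}(3)\times\mathbb{S}^2$ is not flat, so one must also fix orthonormal frames on those factors before $D\Phi(w)$ becomes the relevant $20\times 20$ matrix.
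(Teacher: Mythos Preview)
Your proposal is correct and follows essentially the same approach as the paper: both invoke Lemmas~\ref{lem:first}--\ref{lem:cond-of-S} for the existence/uniqueness and the condition-number formula, then reduce everything to writing $D\Phi(w)$ and $D\Psi(w)$ as explicit matrices in orthonormal bases, with $D\Psi(w)$ factoring through the projection onto the $b$-coordinates.

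The one noteworthy difference is how you handle the tangent space $T(\mathcal{Y},F)$.  The paper takes the \emph{parameterization} route: it pushes forward the standard basis of $\mathbb{R}^7$ along $b\mapsto F(b)$ to get the non-orthonormal frame $\tfrac{1}{\|F(b)\|_F}\partial F/\partial b_i$, then corrects by the square root of an explicit $7\times 7$ Grammian $G$ (displayed in the supplement).  You instead take the \emph{constraint} route: cut out $T(\mathcal{Y},F)$ by orthogonality to $F$ (projectivization) and to $\operatorname{adj}(F)^{\top}$ (rank constraint), then pick an orthonormal basis directly.  Both are valid and yield the same singular values; the paper's version is more explicit (a closed-form Grammian in $b$), while yours is more intrinsic (it does not reference the particular chart $b\mapsto F(b)$).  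The paper also inserts an intermediate factorization $\Phi=\Phi_2\circ\Phi_1$ through $(\mathbb{R}^3\times\mathbb{R}^3)^{\times 7}$ to organize the $28\times 28$ forward Jacobian as a product of a $28\times 42$ and a $42\times 28$ block matrix, but this is purely organizational and you are right that the chain rule on $\beta$ composed with the affine map suffices.
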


\subsection{Ill-Posed World Scenes}\label{sec:ill-pose-world}

Here we derive geometric conditions for a world scene to be ill-posed for the 5-point or 7-point problem.  
Our characterizations are in terms of the existence of a suitable quadric surface in $\mathbb{R}^3$, which should satisfy certain properties related to the given world scene.  Recall that a quadric surface $\mathcal{Q}$ in $\mathbb{R}^3$ is specified by the vanishing of a quadratic polynomial on the coordinates of $\mathbb{R}^3$.  If $\mathbb{R}^3 = \{a=(a_1, a_2, a_3)\}$ then there is a symmetric matrix $Q \in \mathbb{R}^{4 \times 4}$ such that
\begin{equation} \label{eq:def-quad}
    \mathcal{Q} = \left\{ (a_1, a_2, a_3) : \begin{pmatrix} a & 1 \end{pmatrix} Q \begin{pmatrix} a \\ 1 \end{pmatrix} = 0 \right\} \subseteq \mathbb{R}^3.
\end{equation}

We remark that the appearance of quadric surfaces is not new when it comes to degeneracies for epipolar matrices \cite{luong1996fundamental, kahl2002critical, hartley2007critical, bertolini2019critical}.
However our results seem novel, and their proofs are independent of prior work.
The degeneracies considered here for minimal problems seem  not to have been studied.

\begin{theorem}[Ill-posed world scenes for $E$] \label{thm:illposed-world}
Consider the 5-point problem in Section~\ref{example:essential}.
Let $w = (R, t, X_1, \ldots, X_5) \in \operatorname{SO}(3) \times \mathbb{S}^2 \times (\mathbb{R}^3)^{\times 5}$ be a world scene such that $\Phi(w)$ exists where $\Phi$ is as in Eq.~\eqref{eq:E-forward}. Then $w$ is ill-posed, \ie  $D\Phi(w)$ is rank-deficient, if and only if there exists a quadric surface $\mathcal{Q} \subseteq \mathbb{R}^3$ such \nolinebreak that:
\begin{itemize}
    \item $\mathcal{Q}$ passes through the given world points $X_1, \ldots, X_5$;
    \item $\mathcal{Q}$ contains the baseline of the given relative pose; 
    \item and intersecting $\mathcal{Q}$ with any normal affine plane to $\ell$ produces a circle.
\end{itemize}
Here the baseline $\ell \subseteq \mathbb{R}^3$ is the world line passing through the two camera centers, \ie $\ell = \operatorname{Span}(-R^{\top}t)$.
\end{theorem}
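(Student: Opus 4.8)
The plan is to unwind the condition ``$D\Phi(w)$ is rank-deficient'' into a linear-algebraic statement about the kernel of the Jacobian, and then reinterpret a nonzero kernel vector as the data of a quadric surface with the three stated properties. First I would parametrize the tangent space $T(\mathcal{W},w)$: a tangent vector at $w=(R,t,X_1,\ldots,X_5)$ consists of a skew-symmetric infinitesimal rotation $\dot R = R[\omega]_\times$ with $\omega\in\mathbb{R}^3$, an infinitesimal translation $\dot t \in T(\mathbb{S}^2,t)$ (so $\dot t \perp t$), and five point displacements $\dot X_i\in\mathbb{R}^3$ — a total of $3+2+15=20$ dimensions, matching $\dim\mathcal{X}=20$. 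Differentiating Eq.~\eqref{eq:E-forward} through the quotient map $\beta$, the condition $D\Phi(w)(\dot R,\dot t,\dot X_1,\ldots,\dot X_5)=0$ says that, for each $i$, the displacement $\dot X_i$ projects to zero in the first image (so $\dot X_i$ is along the first viewing ray through $X_i$) \emph{and} the combined motion $R[\omega]_\times X_i + \dot t + R\dot X_i$ projects to zero in the second image (so this vector is along the second viewing ray through $RX_i+t$). Rank-deficiency of $D\Phi(w)$ is then equivalent to the existence of a nonzero $(\omega,\dot t)$ together with scalars making these ray-alignment conditions solvable for the $\dot X_i$ — i.e., after eliminating the $\dot X_i$, a single linear constraint on $(\omega,\dot t)$ for each point.

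Next I would massage each per-point constraint into the vanishing of a bilinear expression in the image coordinates, and recognize the resulting condition on $(\omega,\dot t)$ as saying the five world points lie on a quadric. The key identity to establish is that the elimination of $\dot X_i$ turns the pair of ray conditions into an equation of the form $p_i^\top Q\, p_i = 0$ where $p_i=(X_i,1)$ and $Q=Q(\omega,\dot t)\in\mathbb{R}^{4\times 4}$ is symmetric and depends linearly on the infinitesimal motion $(\omega,\dot t)$; the skew part of the rotation derivative is what produces a genuine quadratic (rather than linear) form. So $D\Phi(w)$ is rank-deficient iff there is a nonzero motion $(\omega,\dot t)$ for which the associated quadric $\mathcal{Q}=\{a : (a,1)Q(a,1)^\top=0\}$ contains all five $X_i$. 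It then remains to identify which quadrics arise this way as $(\omega,\dot t)$ ranges over nonzero motions, and to match that family with the geometric description in the theorem.

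For the last step I would analyze the image of the linear map $(\omega,\dot t)\mapsto Q(\omega,\dot t)$. I expect to show: (i) every such $Q$ annihilates the baseline $\ell=\operatorname{Span}(-R^\top t)$ — concretely, the two camera centers $O_1=0$ and $O_2=-R^\top t$, and hence the whole line, satisfy $(O,1)Q(O,1)^\top=0$ — because an infinitesimal camera motion fixes each camera center to first order; and (ii) the slices of $\mathcal{Q}$ by planes normal to $\ell$ are circles, which should fall out of the special block structure of $Q$ forced by the cross-product/skew terms (the ``rotational'' part of the motion contributes an isotropic quadratic form in the directions transverse to $\ell$). Conversely, given any quadric with these three properties, I would exhibit a motion $(\omega,\dot t)$ realizing it, by reversing the identification; a dimension count (the space of such quadrics versus the $5$-dimensional space of motions) should confirm the correspondence is onto the right family. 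The main obstacle I anticipate is step two: carrying out the elimination of the $\dot X_i$ cleanly and proving that what comes out is \emph{exactly} a symmetric bilinear form $p_i^\top Q p_i$ with the claimed baseline-and-circular-slice structure — i.e., translating ``viewing-ray alignment after an infinitesimal motion'' into the quadric geometry without losing or introducing spurious solutions, and handling the projective scaling in $\mathcal{Y}$ and the degenerate cases where a $\dot X_i$ elimination step is not invertible.
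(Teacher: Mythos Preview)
Your proposal is correct and follows essentially the same approach as the paper's proof: write the kernel condition for $D\Phi(w)$ as two ray-alignment constraints per point, eliminate the per-point displacements $\dot X_i$ (and the proportionality constants along the second ray), and recognize the resulting five linear conditions on $(\omega,\dot t)$ as saying that $X_1,\ldots,X_5$ lie on a quadric from a specific $5$-dimensional linear family, which is then identified with the quadrics containing the baseline and having circular normal slices.

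Two small points where the paper's execution is more explicit than your outline. First, the paper carries out the elimination after a coordinate normalization: it applies a rotation so that effectively $R=I$ and $t=e_3$, which makes the ``circular slice'' structure manifest (the isotropic term $(X_i)_1^2+(X_i)_2^2$ appears directly, and the family of quadrics is written down as $\operatorname{span}\{z_1^2+z_2^2,\, z_1z_3,\, z_2z_3,\, z_1,\, z_2\}$). Second, your phrase ``existence of a nonzero $(\omega,\dot t)$'' skips a step: a priori the nonzero kernel vector might be supported only on the $\dot X_i$. The paper checks this cannot happen when $X_i$ is off the baseline, and separately verifies that when some $X_i$ lies on the baseline both sides of the equivalence hold automatically (the corresponding per-point equation becomes trivial, leaving an underdetermined $4\times 5$ system). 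You correctly flagged this kind of degeneracy as the main obstacle; the paper resolves it by a short case analysis rather than anything subtle.
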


The second requirement implies that $\mathcal{Q}$ is a ruled quadric surface (\ie, covered by two infinite families of lines).  Meanwhile, the third item is a non-standard condition implying that $\mathcal{Q}$ must be special within the set of ruled quadric surfaces, namely it muse be a so-called ``rectangular quadric"  \cite{maybank2012theory}.  See Figure~\ref{fig:surface} for visualizations of  \nolinebreak Theorem~\ref{thm:illposed-world}.

\begin{theorem}[Ill-posed world scenes for $F$] \label{thm:F-ill-posed-world}
Consider the 7-point problem in Section~\ref{example:fundamental}.  Let $w = (b, X_1, \ldots, X_7) \in \mathbb{R}^7 \times (\mathbb{R}^3)^{\times 7}$ be a world scene such that $\Phi(w)$ exists where  $\Phi$ is as in Eq.~\eqref{eq:F-forward}.
Then $w$ is ill-posed, \ie $D\Phi(w)$ is rank-deficient, if and only if there exists a quadric surface $\mathcal{Q} \subseteq \mathbb{R}^3$ such that:
\begin{itemize}
    \item $\mathcal{Q}$ passes through the given world points $X_1, \ldots, X_7$;
    \item and $\mathcal{Q}$ contains the baseline of the given relative pose.
\end{itemize}
Here the baseline $\ell$ is the world line passing through the two camera centers, \ie $\ell$ is spanned by the vector $v \in \mathbb{R}^3$ such that $M(b) \begin{pmatrix} v \\ 1 \end{pmatrix} = 0$ where $M(b) \in \mathbb{R}^{3 \times 4}$ is as in  Eq.~\eqref{eq:Mb}.
\end{theorem}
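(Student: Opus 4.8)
The plan is to compute the kernel of the differential $D\Phi(w)$ explicitly and recognize the equations defining a null tangent vector as the conditions for a quadric through the given points and the baseline. Writing $w = (b, X_1, \ldots, X_7)$ with tangent vector $(\dot b, \dot X_1, \ldots, \dot X_7) \in \mathbb{R}^7 \times (\mathbb{R}^3)^{\times 7}$, the forward map $\Phi$ of Eq.~\eqref{eq:F-forward} in the second-camera coordinate is $\beta(M(b)\tilde X_k)$ with $\tilde X_k = (X_k^\top,1)^\top$. First I would differentiate: the first-view component $\beta(X_k)$ contributes a constraint that, at a null vector, $\dot X_k$ is proportional to $X_k$ up to the kernel of $D\beta(X_k)$, i.e.\ $\dot X_k$ lies along the viewing ray from the first camera center through $X_k$; equivalently $\dot X_k = \lambda_k X_k$ for a scalar $\lambda_k$ after using the first-view equations. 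Substituting into the second-view component gives, for each $k$, that $D\beta(M(b)\tilde X_k)$ applied to $\dot b \cdot (\text{linear in } \tilde X_k) + \lambda_k M(b)(X_k^\top,0)^\top$ vanishes. This is a system that is linear in the unknowns $(\dot b, \lambda_1, \ldots, \lambda_7) \in \mathbb{R}^{14}$, and the null space of $D\Phi(w)$ is nonzero iff this $14 \times 14$-type system has a nontrivial solution, after accounting for the fact that $\dim \mathcal{W} = \dim \mathcal{X} = 14$.

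The key step is to reinterpret that linear system. The natural move is to package the second-view equations: vanishing of $D\beta(M(b)\tilde X_k)$ applied to a vector $u \in \mathbb{R}^3$ means $u$ is parallel to $M(b)\tilde X_k$, i.e.\ $(M(b)\tilde X_k) \times u = 0$, two independent scalar equations per point. So for each $k$ we get
\begin{equation}
\bigl(M(b)\tilde X_k\bigr) \times \Bigl( \partial_b\bigl(M(b)\tilde X_k\bigr)\dot b \,+\, \lambda_k\, M(b)\,(X_k^\top,0)^\top \Bigr) = 0.
\end{equation}
I would then show this is the statement that the point $\tilde X_k \in \mathbb{P}^3$ lies on a quadric $\mathcal{Q}$ whose matrix $Q$ is built from $(\dot b, \lambda_1,\ldots)$ — more precisely, that the $14$ scalar equations, viewed as linear conditions, have a solution iff there is a $4\times 4$ symmetric matrix $Q$ with $\tilde X_k^\top Q \tilde X_k = 0$ for all $k$ and with $Q$ vanishing on the baseline. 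The cleanest route is to note that a tangent vector to $\mathcal{W}$ maps to zero under $D\Phi$ precisely when it is the derivative of a one-parameter family of world scenes all of which project to the same image data; such a family is exactly a projective transformation of $\mathbb{R}^3$ fixing the two camera centers (equivalently, a homography of the form preserving $[I\ 0]$ up to scale and moving $M(b)$ within its fiber) together with a matching motion of the points along their rays. The infinitesimal version of ``the points move along their rays from both cameras" is the classical fact (Krauss/Maybank critical-configuration analysis) that the locus swept is a quadric through the points and through both centers; I would make this precise at the level of the Lie algebra of $\operatorname{PGL}(4)$ acting on $(\mathbb{R}^3)^{\times 7}$ modulo reparametrization.

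Concretely the steps in order: (1) write $D\Phi(w)$ in block form, reduce the first-view blocks to get $\dot X_k = \lambda_k X_k$ modulo the $6$-dimensional gauge; (2) substitute and obtain the $14$ scalar second-view equations above; (3) identify the space of symmetric $4\times 4$ matrices $Q$ with $\tilde X_k^\top Q \tilde X_k = 0$ (seven linear conditions on the $10$-dimensional space of such $Q$, so at least a $3$-dimensional space generically) and with $Q\tilde c = 0$ where $\tilde c$ is the first camera center (here the origin, so $Q e_4 = 0$, three more conditions) and $Q\tilde c' = 0$ for the second center (but the second center is on the baseline, and one checks the baseline condition $Q\tilde v = 0$ for $v$ with $M(b)(v^\top,1)^\top=0$ subsumes this) — and match the dimension count with the corank of $D\Phi(w)$; (4) conversely, given such a $Q$, build an explicit tangent vector in $\ker D\Phi(w)$, completing the ``if." The main obstacle I anticipate is step (3)–(4): carefully matching the linear-algebraic kernel of the Jacobian with the space of quadrics, handling the $6$-dimensional gauge (the $\operatorname{PGL}(4)$-stabilizer of the two centers, which has dimension $15 - 2\cdot 3 = 9$, minus the $3$-dimensional stabilizer acting trivially... the bookkeeping here is delicate), and making sure the correspondence is bijective rather than merely surjective, so that ``rank-deficient" is equivalent to ``such a $\mathcal{Q}$ exists" with no spurious cases. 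The simpler structure of the $F$-case compared to Theorem~\ref{thm:illposed-world} — no ``rectangular quadric" constraint — should emerge because the uncalibrated forward map has the full $\operatorname{PGL}(4)$ gauge, whereas the calibrated one only has similarity gauge, forcing the extra circularity condition on normal sections.
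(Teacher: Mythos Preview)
Your setup in steps (1)--(2) is correct and matches the paper: one reduces to $\dot X_k = \lambda_k X_k$ from the first-view blocks and substitutes into the second-view equations, obtaining a linear system in $(\dot b, \lambda_1, \ldots, \lambda_7) \in \mathbb{R}^{14}$. (The remark $\dim \mathcal{W} = \dim \mathcal{X} = 14$ is a slip --- both are $28$ --- but the reduced system after substitution is indeed $14 \times 14$.)

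The gaps are in steps (3)--(4). First, the $\operatorname{PGL}(4)$/gauge route is a detour that does not land: the space $\mathcal{W} = \mathbb{R}^7 \times (\mathbb{R}^3)^{\times 7}$ already uses the normal form $M(b)$ for the second camera, so the projective gauge has been quotiented out and there is no residual Lie-algebra kernel to identify. An element of $\ker D\Phi(w)$ is \emph{not} a gauge direction, and your dimension count ``$15 - 2\cdot 3 = 9$ minus $\ldots$'' has no clear target. Second, your quadric conditions are misstated: ``$Q e_4 = 0$'' says the origin is a \emph{singular} point of $\mathcal{Q}$ (four linear conditions on $Q$), whereas what is needed is that $\mathcal{Q}$ contains the entire baseline, which is three linear conditions on the $10$-dimensional space of quadrics, leaving exactly a $7$-dimensional linear family. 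Third, the quadric's coefficients should depend on $\dot b$ alone, not on the $\lambda_k$'s; your phrasing ``$Q$ is built from $(\dot b, \lambda_1, \ldots)$'' suggests you have not yet seen that the $\lambda_k$'s can be eliminated.

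The paper's argument avoids all of this by a direct elimination. After your step (2), the $k$-th pair of second-view equations has $\lambda_k$ appearing only as the coefficient of the vector $\left(\begin{smallmatrix} 1 & b_1 & b_2 \\ b_4 & b_5 & b_6 \end{smallmatrix}\right) X_k \in \mathbb{R}^2$. Projecting that pair onto the orthogonal complement of this vector kills $\lambda_k$ and leaves one scalar equation per point, linear in $\delta b \in \mathbb{R}^7$ with coefficients that are quadratic polynomials in $X_k$. One then verifies by direct computation that these seven quadratic-in-$X$ polynomials span exactly the $7$-dimensional space of inhomogeneous quadratics vanishing on the baseline: substitute the baseline's parametrization to see they all vanish there, check linear independence using $\operatorname{rank} M(b) = 3$, and match dimensions. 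No gauge bookkeeping enters.
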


Now the conditions on the quadric surface are the same as in Theorem~\ref{thm:illposed-world}, except the third condition (stemming from calibration) is  absent.  Our proofs for Theorems~\ref{thm:illposed-world} and \ref{thm:F-ill-posed-world} both proceed by unwinding the requirement that there  exists a nonzero kernel vector for the forward Jacobian matrix.

\subsection{Ill-Posed Image Data}
\label{sec:ImageData}
Here we describe the locus of ill-posed image data for the 5-point and 7-point problems. 
These results rely heavily on the polynomial structure present in both minimal problems (as mentioned in Remark~\ref{rem:minimal}).  
Specifically the proofs use known facts from algebraic geometry due to Sturmfels  \cite{sturmfels2017hurwitz}.

Compared to \cite{sturmfels2017hurwitz}, the main contribution of this subsection
is that we obtain viable computational schemes for actually visualizing the loci of ill-posed image data.  
For both the cases of fundamental and essential matrices, we give methods based on numerical homotopy continuation \cite{sommese2005numerical} to solve polynomial equations.  Implemented in the Julia package HomotopyContinuation.jl \cite{breiding2018homotopycontinuation}, these terminate on a desktop computer in $\approx \!10$ and $\approx \!30$ seconds, respectively.
Specifically for fundamental matrices, we also describe ill-posed image data symbolically, using Pl\"ucker coordinates, in a method that takes $\approx \!0.5$ seconds to run on a desktop computer. Details are given in the supplementary \nolinebreak materials.

\begin{theorem}[Ill-posed image data for $E$] \label{thm:image-E}
Consider the 5-point problem in Section~\ref{example:essential}.  Let $x = ((x_1, y_1), \ldots, (x_5,y_5)) \in (\mathbb{R}^2 \times \mathbb{R}^2)^{\times 5}$ be image data.
Then $x$ is ill-posed, \ie there exists some compatible world scene which is ill-posed, only if a certain polynomial $\mathbf{P}$ in the entries of $x_1, y_1, \ldots, y_5$ vanishes.  This polynomial has degree $30$ separately in each of the points $x_1, \ldots, y_5$. 
In particular, if we fix numerical values for $x_1, y_1, \ldots, x_5$ but keep $y_5 \in \mathbb{R}^2$ as variable, then (generically) $\mathbf{P}$ specializes to a degree $30$ polynomial just in $y_5$, and its vanishing set is a degree $30$ curve in the second image plane.  Moreover given the values for $x_1, y_1, \ldots, x_5$, we can compute an explicit plot of this curve in $\mathbb{R}^2$ by plotting the real roots of the curve intersected with various vertical lines swept across the second image plane.
\end{theorem}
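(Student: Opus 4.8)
\textit{Proof plan.}

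The plan is to identify the ill-posed image data with the branch locus of the forward map $\Phi$, to recognize ill-posedness as a \emph{tangency} condition on the essential variety $\mathcal{Y}$, and then to invoke Sturmfels's results on Hurwitz forms \cite{sturmfels2017hurwitz} to produce $\mathbf{P}$ and read off its degree; the final plotting claim is then an elementary consequence. First I would dissect a generic fiber of $\Phi$: a world scene over $x$ is built from a choice of compatible essential matrix $E$ in the $3$-plane
\begin{equation*}
L_x \;:=\; \bigl\{\, E \in \mathbb{P}(\mathbb{R}^{3\times 3}) \;:\; \widetilde{y}_i^{\top} E\, \widetilde{x}_i = 0,\ i = 1,\dots,5 \,\bigr\} \;\subseteq\; \mathbb{P}^8,
\end{equation*}
cut out by the five bilinear epipolar incidence conditions (where $\widetilde{x}_i,\widetilde{y}_i\in\mathbb{R}^3$ are homogenizations of $x_i,y_i$), together with a pose lift (a $4$-to-$1$, everywhere unramified step) and a triangulation of each image pair (unique and unramified unless the recovered world point lies on the baseline). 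Differentiating the quotient map $\beta$ shows that the $i$-th block of $\ker D\Phi(w)$ forces $\dot X_i\parallel X_i$ and forces the residual perturbation of $R X_i+t$ to be parallel to $\widetilde{y}_i$; unwinding this, $D\Phi(w)$ drops rank (equivalently, by Definition~\ref{def:ill-posed} and Lemma~\ref{lem:first}, $x$ is ill-posed) if and only if either some $X_i$ lies on the baseline or there is a nonzero $\dot E\in T(\mathcal{Y},E)$ with $\widetilde{y}_i^{\top}\dot E\,\widetilde{x}_i=0$ for all $i$ — that is, iff $L_x$ meets $\mathcal{Y}$ non-transversally at $E$. The baseline case is subsumed: when $X_i$ lies on the baseline the epipole identities (the epipoles lying in the right and left kernels of $E$) force the hyperplane $\{\widetilde{y}_i^{\top}E'\widetilde{x}_i=0\}$ to contain $T(\mathcal{Y},E)$, so $L_x$ is again tangent to $\mathcal{Y}$. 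Hence $x$ ill-posed $\Rightarrow$ $L_x$ is tangent to $\mathcal{Y}$ somewhere.

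Next I would bring in the Hurwitz form. Let $\operatorname{Hu}(\mathcal{Y})$ be the Hurwitz form of $\mathcal{Y}\subseteq\mathbb{P}^8$ \cite{sturmfels2017hurwitz}, whose vanishing detects that a linear space of dimension complementary to $\mathcal{Y}$ is tangent to $\mathcal{Y}$. Since $\dim\mathcal{Y}=5$ such a linear space is a $3$-plane, an intersection of five hyperplanes, so pulling $\operatorname{Hu}(\mathcal{Y})$ back along $(v_1,\dots,v_5)\mapsto\bigcap_i v_i^{\perp}$ yields a polynomial multihomogeneous in $v_1,\dots,v_5\in\mathbb{P}^8$ of degree $\mu_1(\mathcal{Y})$ in each $v_i$, with $\mu_1(\mathcal{Y})$ the first polar degree of $\mathcal{Y}$. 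Substituting $v_i=\widetilde{y}_i\widetilde{x}_i^{\top}$ defines the polynomial $\mathbf{P}$ in the entries of $x_1,y_1,\dots,x_5,y_5$, and by the previous paragraph it vanishes whenever $x$ is ill-posed. The essential variety has degree $10$ and first polar degree $\mu_1(\mathcal{Y})=30$ (from its known polar classes); since $\widetilde{y}_i\widetilde{x}_i^{\top}$ is linear in each of $\widetilde{x}_i$ and $\widetilde{y}_i$ through a linear embedding onto which $\operatorname{Hu}(\mathcal{Y})$ does not restrict to zero (a genericity check), $\mathbf{P}$ has degree exactly $30$ in each of $x_1,y_1,\dots,x_5,y_5$; the symmetry $E\mapsto E^{\top}$ swapping the two images together with the permutation symmetry of the five pairs makes all ten degrees equal.

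For the remaining ``moreover'', fix generic numerical values for $x_1,y_1,\dots,x_5\in\mathbb{R}^2$. Because $\mathbf{P}$ has degree $30$ in $y_5$ and the specialization is generic, the degree-$30$ part of $\mathbf{P}$ in $y_5$ survives the substitution, so $\mathbf{P}$ becomes a polynomial $p(y_5)$ of degree exactly $30$ in the two coordinates of $y_5$; its real zero set is a real plane curve of degree $30$ which, for the generic slice chosen (avoiding the lower-dimensional baseline strata), contains every ill-posed completion $y_5$. To plot it I would restrict $p$ to a vertical line $\{y_5=(u_0,\cdot)\}$, obtaining a univariate polynomial of degree at most $30$ in the remaining coordinate whose finitely many real roots are exactly the points of the curve on that line; solving these as $u_0$ sweeps the second image plane yields the plot, and the procedure terminates. (In practice one recovers $p$, or $\mathbf{P}$ itself, by sampling the ill-posed locus numerically via homotopy continuation and interpolating — the computational content developed in the supplement.)

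The hard part will be the degree computation: pinning down $\mu_1(\mathcal{Y})=30$ — a Chern-class / polar-degree calculation for the essential variety, or an appeal to an existing one — and, more delicately, ruling out a degree drop when $\operatorname{Hu}(\mathcal{Y})$ is restricted to the rank-one hyperplanes $v_i=\widetilde{y}_i\widetilde{x}_i^{\top}$, which sweep only a Segre subvariety of $\mathbb{P}^8$; one must verify the pulled-back Hurwitz form does not vanish identically on that locus. A secondary technical point is the clean treatment in the first step of the baseline-degenerate stratum, which I would handle via the tangent-space containment noted above together with a dimension count confirming that it contributes no new hypersurface component to the ill-posed image locus.
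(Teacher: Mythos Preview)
Your proposal is correct and shares the paper's central idea: define $\mathbf{P}$ as the Hurwitz form of the complexified essential variety $\mathcal{E}_{\mathbb{C}}\subset\mathbb{P}_{\mathbb{C}}^8$, pulled back along the map sending image data $x$ to the $3$-plane $L(x)$ of epipolar constraints. Two execution details differ. For the degree, the paper applies Sturmfels's formula $\deg\operatorname{Hu}_{\mathcal{E}_{\mathbb{C}}}=2p+2g-2=30$ directly, with degree $p=10$ and sectional genus $g=6$ read off from a computer-algebra calculation, and then observes that each primal Pl\"ucker coordinate (a $5\times 5$ minor of the $5\times 9$ epipolar matrix) is separately linear in each $x_i$ and each $y_i$; this bypasses any polar-class computation. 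For the ``only if'' direction, the paper argues the contrapositive: if $\mathbf{P}(x)\neq 0$ then $L(x)\cap\mathcal{E}_{\mathbb{C}}$ consists of ten reduced points, the implicit function theorem produces locally smooth sections $\widetilde w_j$ of $\Phi$ over a neighborhood of $x$, and differentiating $\Phi\circ\widetilde w_j=\operatorname{id}$ forces $D\Phi$ to be invertible at every compatible world scene. This sidesteps your direct analysis of $\ker D\Phi(w)$, so the baseline-degeneracy case never needs to be handled separately. Your route is more geometric---it makes the tangency interpretation explicit and effectively recycles the kernel computation behind Theorem~\ref{thm:illposed-world}---while the paper's route is shorter and avoids the case split. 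The Segre-restriction concern you flag (whether substituting rank-one $v_i=\widetilde y_i\widetilde x_i^{\top}$ preserves the full degree $30$) is not addressed in the paper's proof either; there the degree claim is asserted formally from the multilinearity of the minors.
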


We call the curve in Theorem~\ref{thm:image-E} a \textbf{$4.5$-point curve}, because it is specified by four-and-a-half image point pairs, namely $x_1, y_1, \ldots, x_5$.  See Figure~\ref{fig:F_curves} for sample renderings.

\begin{theorem}[Ill-posed image data for $F$] \label{thm:F-image}
Consider the 7-point problem in Section~\ref{example:essential}.  Let $x = ((x_1, y_1), \ldots, (x_7,y_7)) \in (\mathbb{R}^2 \times \mathbb{R}^2)^{\times 7}$ be image data.
Then $x$ is ill-posed, \ie there exists some compatible world scene which is ill-posed, only if a certain polynomial $\mathbf{P}$ in the entries of $x_1, y_1, \ldots, y_7$ vanishes.  This polynomial has degree $6$ separately in each of the points $x_1, \ldots, y_7$.  
In particular, if we fix numerical values for $x_1, y_1, \ldots, x_7$ but keep $y_7 \in \mathbb{R}^2$ as variable, then (generically) $\mathbf{P}$ specializes to a degree $6$ polynomial just in $y_7$, and its vanishing set is a degree $6$ curve in the second image plane.  Moreover given the values for $x_1, y_1, \ldots, x_7$, we can compute an explicit plot of this curve in $\mathbb{R}^2$ by plotting the real roots of the curve intersected with various vertical lines swept across the second image plane.  Alternatively,  $\bf{P}$ can be expressed as an explicit degree $6$ polynomial in $\binom{9}{2}$ Pl\"ucker coordinates with $1668$ terms and integer coefficients, each at most $72$ in absolute value. We can specialize this expression by substituting in numerical values of $x_1, y_1, \ldots, x_7$, and then plot the zero set in $\mathbb{R}^2$.
\end{theorem}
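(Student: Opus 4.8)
The plan is to recognize $\mathbf{P}$ as a pullback of the Hurwitz form, in the sense of Sturmfels~\cite{sturmfels2017hurwitz}, of the determinantal cubic hypersurface $\Delta := \{F \in \mathbb{P}(\mathbb{R}^{3\times 3}) : \det F = 0\} \subseteq \mathbb{P}^8$. First recall that $F$ is compatible with $x = ((x_1,y_1),\ldots,(x_7,y_7))$ precisely when $y_i^{\top}Fx_i = \langle F, y_i x_i^{\top}\rangle = 0$ for all $i$ and $\det F = 0$. The seven linear conditions generically cut out a line $L_x \subseteq \mathbb{P}^8$, namely the projectivization of the orthogonal complement of $\operatorname{Span}\{y_1 x_1^{\top},\ldots,y_7 x_7^{\top}\} \subseteq \mathbb{R}^9$, and the compatible fundamental matrices are the (generically three) points of $L_x \cap \Delta$. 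The Pl\"ucker coordinates of $L_x$ in $\operatorname{Gr}(2,9)$ are, up to sign and complementation, the $7\times 7$ minors of the $7\times 9$ matrix with rows $\operatorname{vec}(y_i x_i^{\top})$; since each row is bilinear in $(x_i,y_i)$ and a determinant is linear in each row, every such minor is multilinear in the fourteen image points, of degree one separately in each of $x_1, y_1, \ldots, x_7, y_7$.

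The geometric core is to prove: if $x$ is ill-posed, then $L_x$ meets $\Delta$ non-transversally. Take an ill-posed compatible world scene $w = (b, X_1,\ldots,X_7)$ and a nonzero kernel vector $\dot w = (\dot b, \dot X_1, \ldots, \dot X_7)$ of $D\Phi(w)$. Unwinding $D\Phi(w)\dot w = 0$ as in the proof of Theorem~\ref{thm:F-ill-posed-world} --- using that the differential of the dehomogenization $\beta$ of \eqref{eq:quotient} has kernel the radial direction --- one obtains $\dot X_i \parallel X_i$ together with $\dot M(b)(X_i;1) + M(b)(\dot X_i; 0) \parallel M(b)(X_i;1)$; away from the closed locus where some $X_i$ lies on the baseline, this forces $\dot b \neq 0$ whenever $\dot w \neq 0$. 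Put $F := \Psi(w)$ and $\dot F := D\Psi(w)(\dot b)$. Since distinct $b$ in the normal form \eqref{eq:Mb} give distinct fundamental matrices, $\Psi$ is generically an immersion in $b$, so $\dot F \notin \mathbb{R}F$; differentiating the identity $\langle y_i x_i^{\top}, F\rangle = 0$ along $\dot w$ and using $\dot x_i = \dot y_i = 0$ yields $\langle y_i x_i^{\top}, \dot F\rangle = 0$; and $\dot F \in T_F\mathcal{Y} = T_F\Delta$ because $\Psi$ lands in $\mathcal{Y} \subseteq \Delta$. Hence $\operatorname{Span}(F, \dot F)$ is the $2$-plane lying over $L_x$, it is contained in the affine tangent hyperplane of $\Delta$ at $F$ (well-defined since $\operatorname{rank} F = 2$), and therefore $L_x$ lies in the embedded tangent hyperplane of $\Delta$ at $F$, \ie $L_x$ is tangent to $\Delta$. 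Non-generic ill-posed $x$ is then absorbed by continuity, since the ill-posed locus lies in the closure of the part handled above and $\{\mathbf{P} = 0\}$ is closed. (Equivalently, one may run the argument through the rational map $\mathcal{W} \dashrightarrow \Sigma := \{(x,F) : y_i^{\top}Fx_i = 0 \text{ for all } i\}$, $w \mapsto (\Phi(w),\Psi(w))$, which is birational --- recover the camera pair from $F$, then triangulate consistent correspondences to get $X_1,\ldots,X_7$ --- so that the ramification of $\Phi$ is carried onto the ramification of the projection $\Sigma \to \mathcal{X}$, which is exactly the tangency locus.)

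With the reduction in place, \cite{sturmfels2017hurwitz} gives that the lines meeting $\Delta$ non-transversally form a hypersurface of $\operatorname{Gr}(2,9)$ cut out by an irreducible polynomial $\operatorname{Hu}(\Delta)$ in the $\binom{9}{2} = 36$ Pl\"ucker coordinates; its degree is read off via the methods of \cite{sturmfels2017hurwitz} from the polar (coisotropic) degrees of $\Delta$ --- whose projective dual is the degree-$6$ Segre variety $\mathbb{P}^2 \times \mathbb{P}^2$ --- and equals $6$, and carrying out the computation (equivalently, interpolating $\operatorname{Hu}(\Delta)$ directly) yields the explicit expression with $1668$ terms and integer coefficients at most $72$ in absolute value. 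Setting $\mathbf{P}(x) := \operatorname{Hu}(\Delta)$ evaluated at the Pl\"ucker coordinates of $L_x$, the multilinearity noted above makes $\mathbf{P}$ of degree at most --- hence generically exactly --- $6$ separately in each of $x_1, y_1, \ldots, y_7$, and the ``only if'' is precisely the reduction just proved. The specialization claims follow immediately: fixing $x_1, y_1, \ldots, x_7$ leaves a bivariate polynomial of degree $\leq 6$ in $y_7$, \ie a degree-$6$ plane curve, which one renders by sweeping vertical lines across the second image plane and solving the resulting univariate sextics for their real roots; the Pl\"ucker route is simply that $\operatorname{Hu}(\Delta)$ may be precomputed once as an integer polynomial and then specialized numerically. (Theorem~\ref{thm:image-E} for the $5$-point problem is entirely analogous, with the variety of essential matrices in place of $\Delta$, which accounts for the much larger degree $30$.) The main obstacle is the middle step: converting ``$D\Phi(w)$ is rank-deficient'' into genuine non-transversality of $L_x$ and $\Delta$ at $\Psi(w)$, which requires the immersivity and birationality inputs and a careful limiting argument to cover the non-generic ill-posed data (points near the baseline, coincident compatible fundamental matrices, and the like).
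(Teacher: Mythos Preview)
Your proposal is correct and shares the paper's high-level architecture --- identify $\mathbf{P}$ as the Hurwitz form $\operatorname{Hu}_{\mathcal{F}_{\mathbb{C}}}$ of the determinantal cubic $\mathcal{F}_{\mathbb{C}} \subseteq \mathbb{P}^8_{\mathbb{C}}$ pulled back along the Pl\"ucker map $x \mapsto L(x)$ --- but you argue the key reduction differently. The paper proves the \emph{contrapositive}: assuming $\mathbf{P}(x) \neq 0$, the intersection $L(x) \cap \mathcal{F}_{\mathbb{C}}$ consists of three reduced points, so the implicit function theorem manufactures locally smooth sections $\widetilde{F}_i(x')$; these lift (via the projective reconstruction theorem and triangulation) to smooth world-scene sections $\widetilde{w}_i$ with $\Phi \circ \widetilde{w}_i = \operatorname{id}$, and differentiating forces $D\Phi(\widetilde{w}_i(x))$ to be invertible. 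You instead argue \emph{directly}: from a nonzero kernel vector $\dot{w}$ of $D\Phi(w)$ you extract $\dot{b} \neq 0$ (Lemma~\ref{lem:F-param} gives the immersivity you need), push forward to $\dot{F} \in T_F\mathcal{F}_{\mathbb{C}}$, and observe that $\operatorname{span}(F,\dot{F})$ realizes $L_x$ tangent to $\mathcal{F}_{\mathbb{C}}$. Your route is more geometric and explains \emph{why} ill-posedness is a tangency, at the cost of a genericity caveat (world points off the baseline) that you patch by closure/birationality; the paper's contrapositive sidesteps that entirely. For the degree, the paper invokes the formula $2p + 2g - 2$ from \cite{sturmfels2017hurwitz} with $p = 3$ (degree of $\mathcal{F}_{\mathbb{C}}$) and sectional genus $g = 1$ (a generic plane section is a plane cubic), giving $6$ without reference to the dual variety; and for the explicit $1668$-term polynomial, the paper takes a concrete shortcut you do not mention: parameterize $L$ by $F(t) = F_1 + tF_2$, compute the univariate cubic discriminant $\operatorname{disc}_t(\det F(t))$ in dual Pl\"ucker coordinates, convert to primal coordinates, and reduce modulo the Pl\"ucker relations in Macaulay2.
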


We call the curve in Theorem~\ref{thm:F-image} a \textbf{$6.5$-point curve}, because it is specified by six-and-a-half image point pairs, namely $x_1, y_1, \ldots, x_7$.  See Figure~\ref{fig:F_curves} for sample renderings.

\begin{figure}
    \centering
    (a)\includegraphics[height=0.3\linewidth]{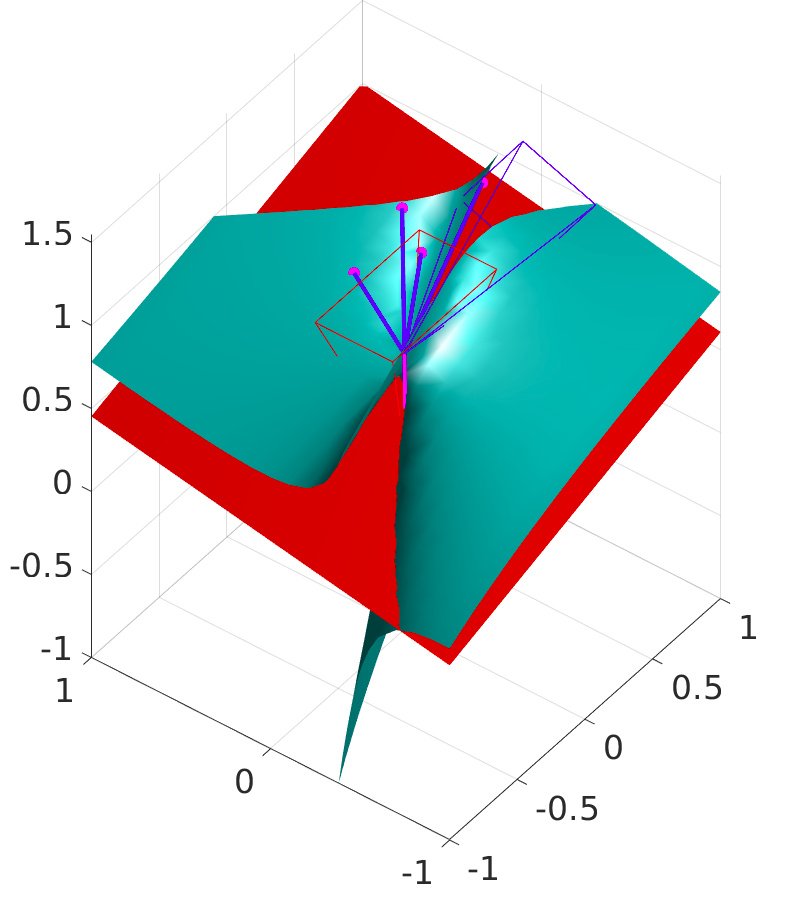}
    (b)\includegraphics[height=0.3\linewidth]{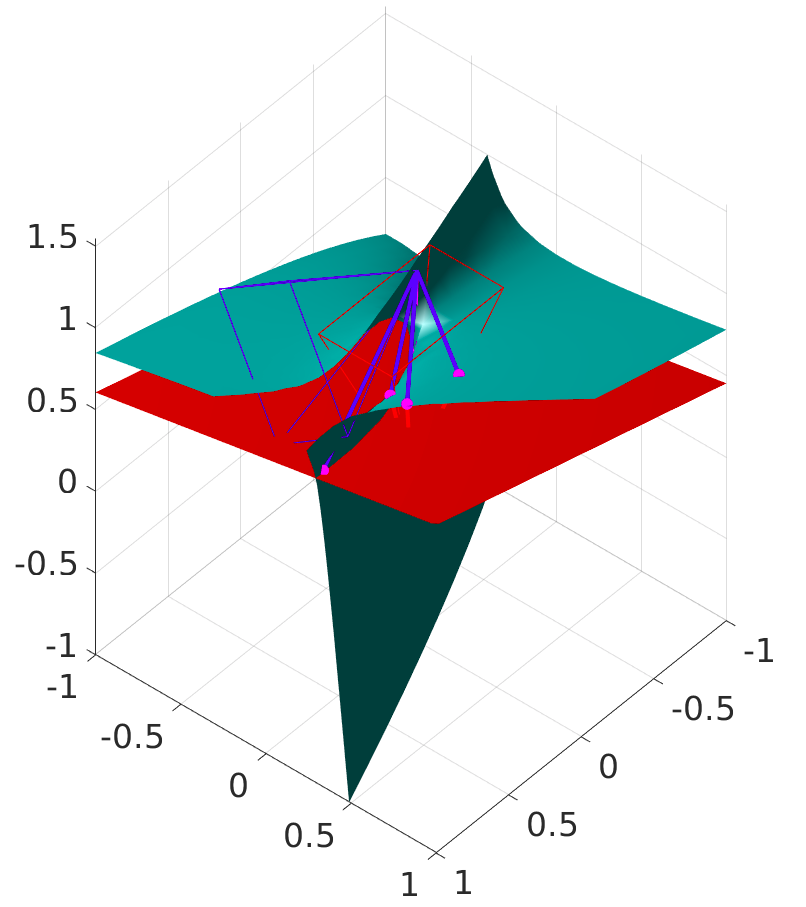}
    (c)\includegraphics[height=0.3\linewidth]{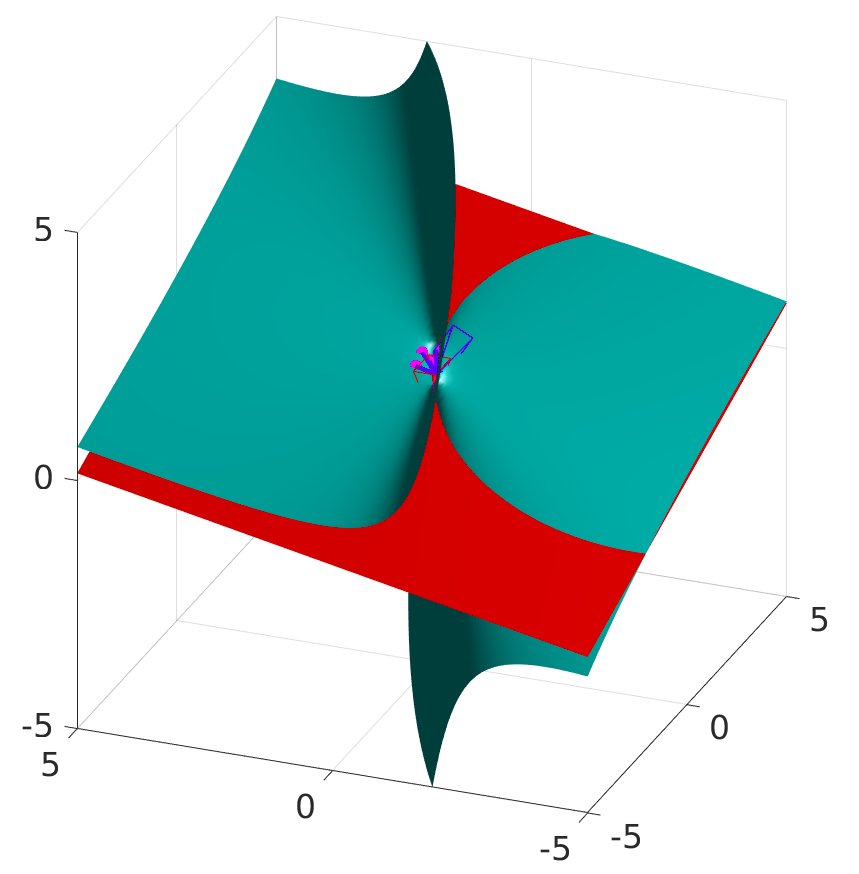}
    \caption{An illustrative  example of an ill-posed world scene in the calibrated case. Red and blue pyramid represents two cameras. Magenta points represent the given world  points. The green surface is a  quadric surface satisfying the three conditions in Theorem~\ref{thm:illposed-world}. Last, the red plane is a plane perpendicular to the baseline.}
    \label{fig:surface}
\end{figure}

\section{Experimental Results} \label{sec:experimentals}
\subsection{Synthetic Experiments}
\noindent \textbf{Data Generation:} To experiment with synthetic data, we generate random valid configurations consisting of the world scene $\large{(}R,t,X_1,\ldots, X_N \large{)}$, intrinsic matrix $K$ and 2D point pairs $(x_1, y_1,\cdots,x_N, y_N)$. Here $N=5$ or $N=7$ depending on whether the scenario is calibrated or uncalibrated.  We generate random problem instances as follows:
\begin{itemize}
\setlength\itemsep{0.5pt}
    \item $R$: orthogonal matrix in the QR decomposition of a random $3 \times 3$ matrix with i.i.d. standard normal entries;
    \item $t$: uniformly sampled vector from the unit sphere;
    \item $X_i$: uniformly sampled points with depth in [1, 20];
    \item $K$: chosen so that the image size is $640 \times 480$, focal length is $525$, and principle point is the image center;
    \item $x_i$, $y_i$: projections of $X_i$ onto two images.
\end{itemize}
We discard instances where any of the 2D points land outside the image's boundary.
Storing all these elements gives synthetic data for both the calibrated/uncalibrated cases.

\vspace{1em}

\noindent \textbf{Instability Revelation:} We first aim to demonstrate that the instabilities do empirically occur in the minimal problems for both calibrated and uncalibrated relative pose estimation. 
To this end, we generate 3000 synthetic minimal problems for both cases as described above.  
For each minimal problem instance, we add i.i.d. noise to the image points drawn from the spherical Gaussian $\mathcal{N}(0,\sigma^2 I_2)$ for different noise levels $\sigma$. 
Then we separately solve the original and perturbed problems, and compare them. 
We define an estimate to be erroneous if either of the following criteria holds: 
{\em (i)} \textbf{Large error in the solutions for the perturbed points}: 
the error in the fundamental or essential matrix after normalization is defined by $e = \operatorname{mean}(\operatorname{abs}(\operatorname{abs}(\bar{M} ./ M) - \mathbf{1}\mathbf{1}^{\!\top}))$.  Here ``$./$" denotes  element-wise division,  $M$ is the ground truth model, $\bar{M}$ is the nearest estimated model, and $\mathbf{1}\mathbf{1}^{\!\top}$ is the $3 \times 3$ matrix with each element $1$. 
Then {\em (i)} holds if $e$ exceeds a threshold \nolinebreak $\tau$.
{\em (ii)} \textbf{Change in the number of real solutions}: this behavior is troublesome because the true epipolar matrix can disappear into the complex plane if there is a variation in the number of real solutions.

Figure~\ref{fig:Revelation} shows the fraction of the erroneous estimations out of the 3000 instances at various noise levels and error thresholds. It is clear that for random perturbations, the ratio of erroneous cases cannot be ignored even 
when the noise is small.
In practice,  unstable instances would likely be weeded out by RANSAC. 
Indeed, Figure~\ref{fig:Revelation} suggests that even given \textit{all} inlier data, RANSAC is still needed to overcome the instabilities of relative pose estimation. 
The idea of the X.5-point curve is to identify the instances which may generate erroneous estimates. 
Also, note that the frequency of erroneous cases for essential matrices is much higher than for fundamental matrices. This makes sense because the 5-point problem solves a degree $10$ polynomial system, whereas the 7-point problem solves a degree $3$ \nolinebreak system.  

\begin{figure}[ht]
    \centering
    (a)\includegraphics[height = 0.45 \linewidth]{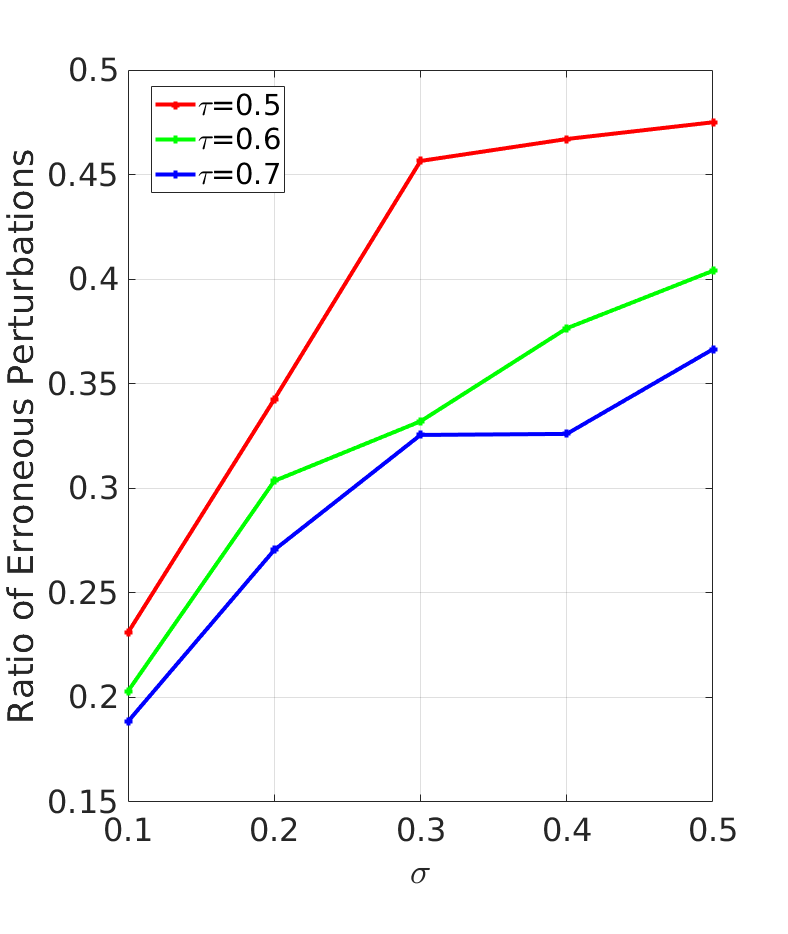}
    (b)\includegraphics[height = 0.45 \linewidth]{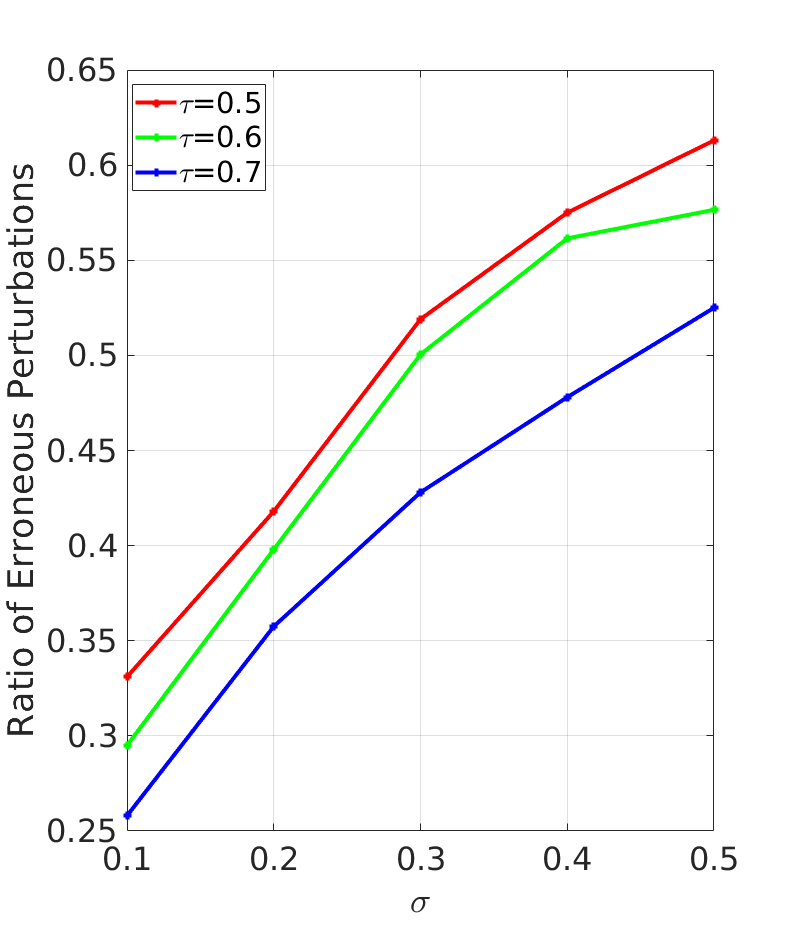}
    \caption{Ratio of 
    erroneous estimations
    out of 3000 random synthetic minimal problems at different noise levels $\sigma$ and  error thresholds $\tau$. (a)  \textup{Fundamental matrix}. (b)  \textup{Essential matrix}.}
    \label{fig:Revelation}
\end{figure}

\vspace{0.5em}

\noindent \textbf{Instability Detection:}
Applying the methods in Section~\ref{sec:ImageData}, 4.5-point degenerate curves for the uncalibrated case and 6.5-point  curves for the calibrated case can be computed for each minimal problem instance. Figure~\ref{fig:F_curves} shows several sample curves plotted on the second image plane along with the given image points. For uncalibrated case, the degree of the 6.5-point curve is 6, while the degree of the 4.5-point curve is 30 for calibrated case. The curves split the image plane into different connected components, wherein the number of real solutions is locally constant. 
In the language of~\cite{bernal2020machine}, the curves are
 ``real discriminant loci".
 
\begin{figure}[ht]
    \centering
    (a)\includegraphics[height=0.28\linewidth]{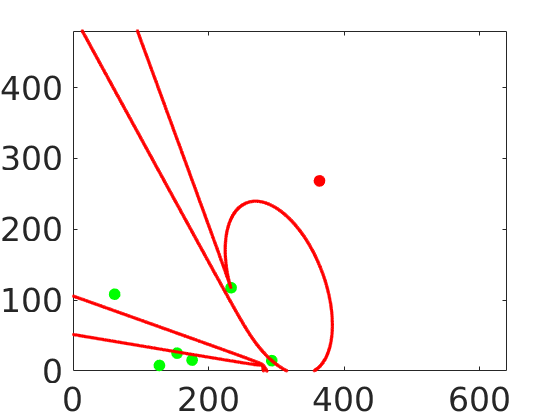}
    (b)\includegraphics[height=0.28\linewidth]{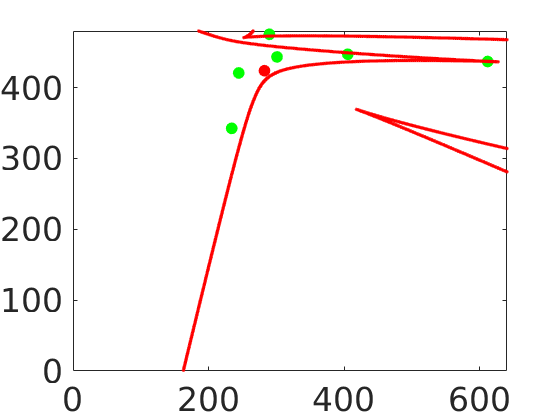}
        (c)\includegraphics[height=0.28\linewidth]{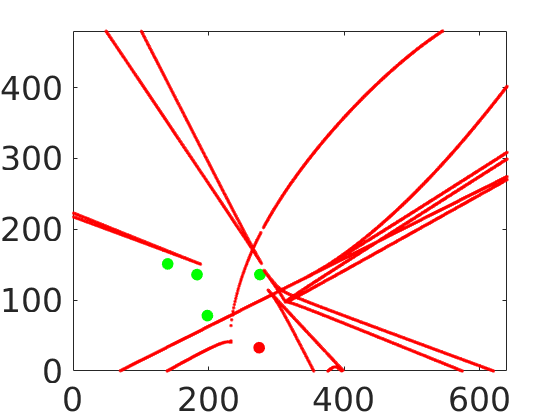}
    (d)\includegraphics[height=0.28\linewidth]{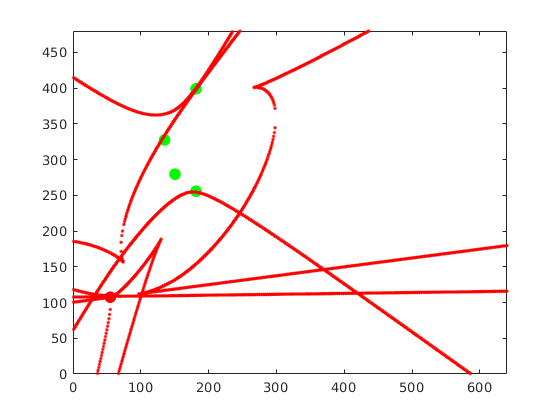}
    \caption{Sample results for the X.5-point degenerate curve in Theorems \ref{thm:image-E} and \ref{thm:F-image}. Points used in computing the curve are shown as green; the red point is the 5th/7th correspondence on the second image for calibrated/uncalibrated relative pose estimation respectively; the red curve is the X.5-point curve which we computed using homotopy continuation. (a) A stable configuration for  uncalibrated estimation. (b) An unstable configuration for uncalibrated estimation. (c) A stable configuration for calibrated case. (d) An unstable configuration for calibrated case.}  
    \label{fig:F_curves}
\end{figure}

\vspace{1em}

In another experiment, we separate the 3000 random synthetic minimal problems into three categories: stable cases, unstable cases, and the borderline cases (given that the condition number is a continuous indication of the stability). 
Here,  an instance is sorted according to the the number of erroneous estimates among $n=20$ perturbations, denoted by $\hat{n}$. If $\hat{n} \in [0, n/3]$, we count the instance as stable; if $\hat{n} \in [2n/3, n]$, we count the instance as unstable; and if $\hat{n} \in [n/3, 2n/3]$, we count the instance as borderline.
In this experiment, we use $\tau = 0.5$ and $\sigma = 0.3$.
\begin{figure}
    \centering
    (a)\includegraphics[width = 0.95 \linewidth]{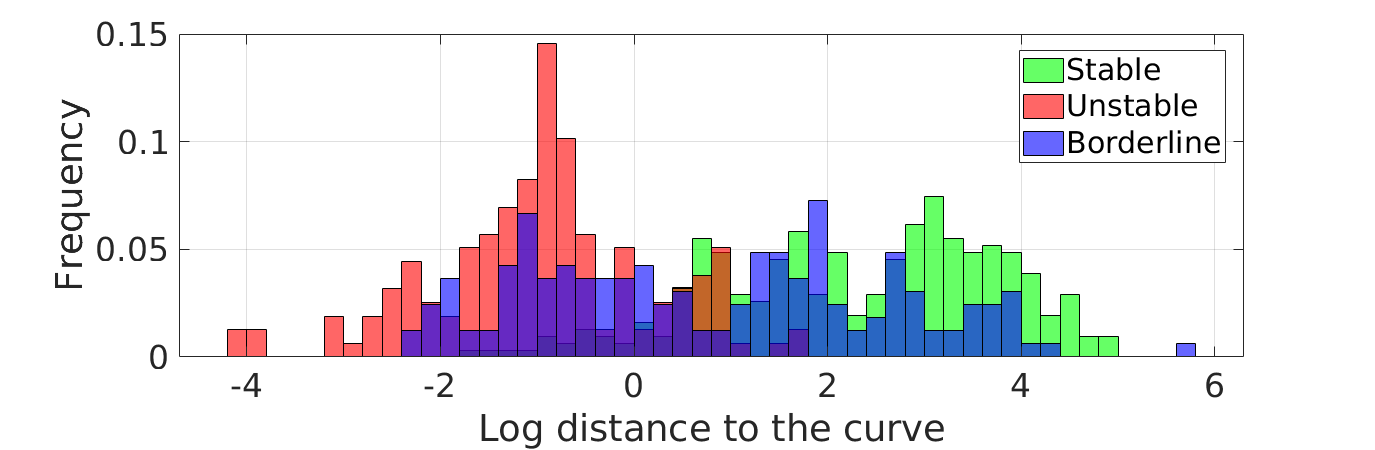}
    (b)\includegraphics[width = 0.95 \linewidth]{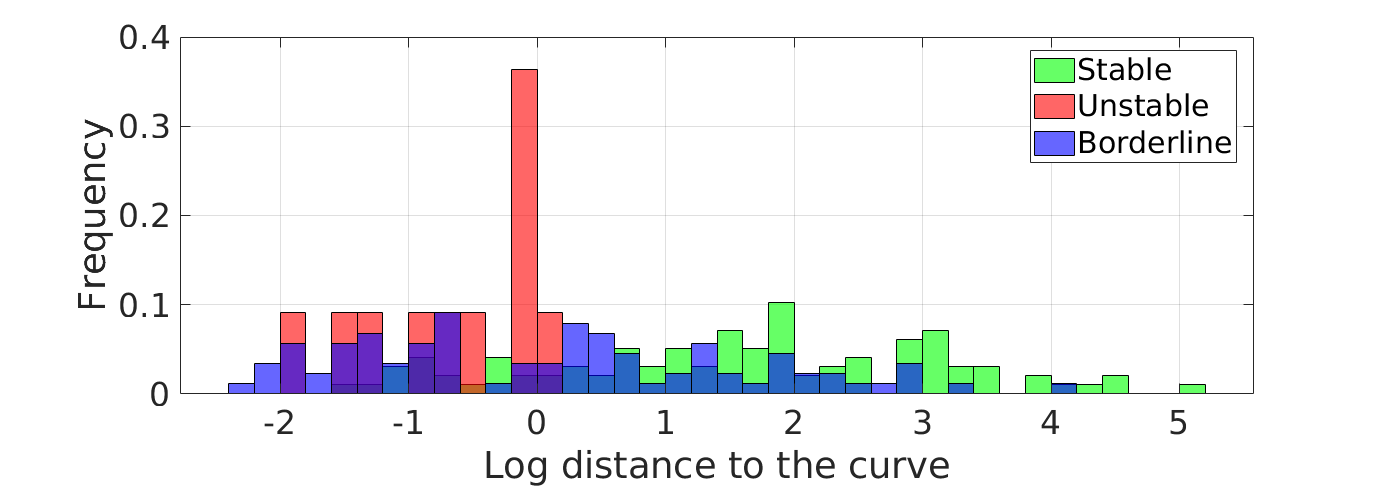}
    \caption{Histogram of distance from the last point to the degenerate curve sorted by: stable cases (green), unstable cases (red) and borderline cases (blue). (a) Uncalibrated estimation. (b) Calibrated estimation. Stable and unstable categories are separated.}
    \label{fig:statistics}
\end{figure}
For the uncalibrated case, the average distance from the 7th point to the 6.5-point curve is 2.35 pixels among unstable cases, while for the stable cases it is 22.12 pixels. For the calibrated case, the average distance from the 5th point to the 4.5-point curve is 14.95 pixels for unstable cases, while for the stable case it is is 0.32 pixels. From these statistical differences (see Figure~\ref{fig:statistics}), we observe that the stable and unstable categories can be distinguished by thresholding on the distance between the last point  to the X.5-point curve. 

\vspace{0.5em}

\noindent \textbf{Stability of Instability:}
Here we show that the degenerate curve is mostly stable to the presence of the noise so that our idea is not only theoretically correct, but can also be used in the practical setting of noisy images. Figures~\ref{fig:my_label} and~\ref{fig:stability_syn} shows that the computed curves are relatively stable to  the noise.

\begin{figure}
    \centering
    (a)\includegraphics[height = 0.21 \linewidth]{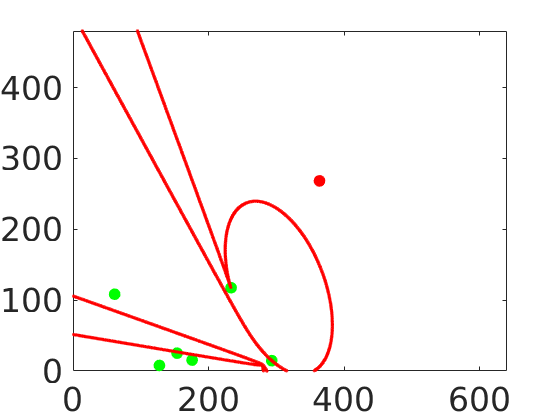} (b) \includegraphics[height = 0.21 \linewidth]{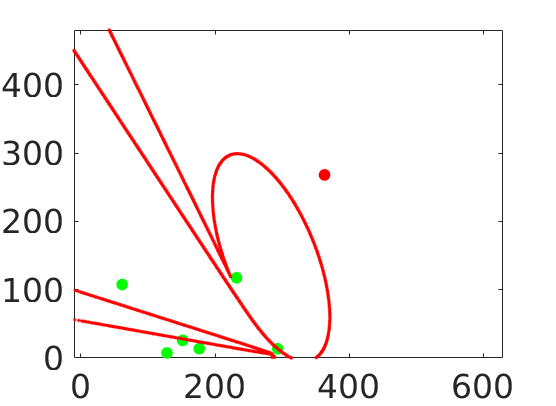} 
    \includegraphics[height = 0.21 \linewidth]{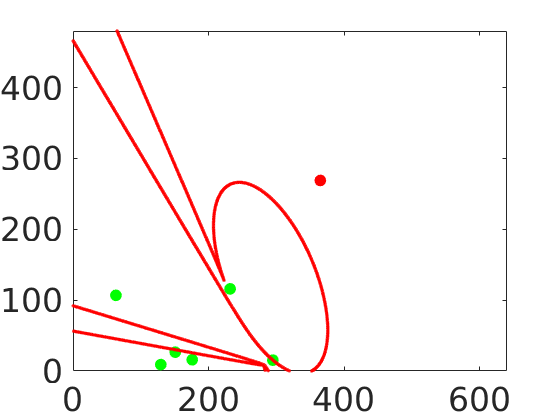}
    (c)\includegraphics[height = 0.21 \linewidth]{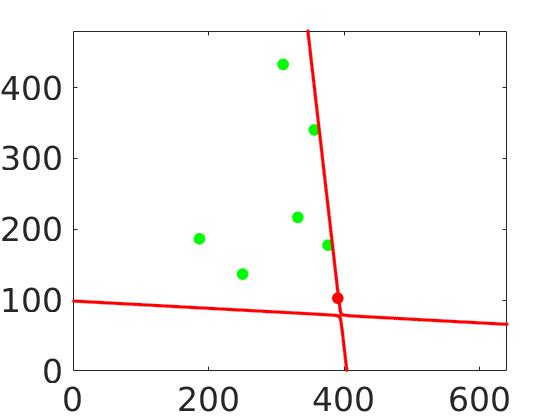} (d) \includegraphics[height = 0.21 \linewidth]{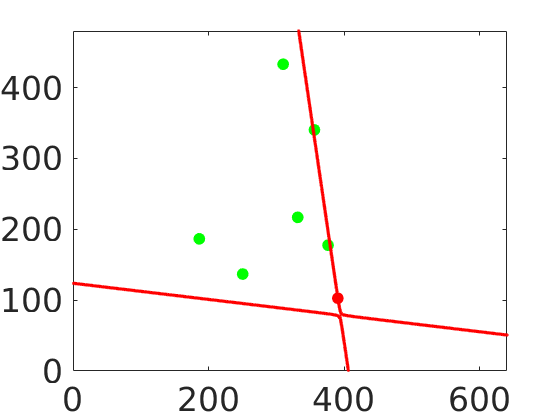} 
    \includegraphics[height = 0.21 \linewidth]{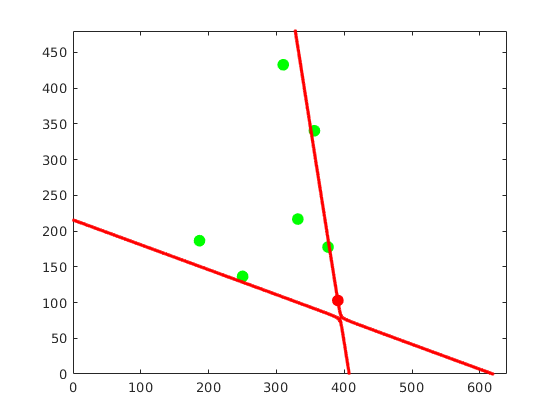}
    \caption{Illustrative result indicating the stability of the degenerate curve. 
    (a) The degenerate curve of a stable uncalibrated configuration. (c) The degenerate curve of an unstable uncalibrated configuration. (b) (d) Adding different noise perturbations to the image points, the curve did not change much. 
    Curves for calibrated estimation are shown in the supplementary materials.}
    \label{fig:my_label}
\end{figure}

\begin{figure}
    \centering
    \includegraphics[width=0.91 \linewidth]{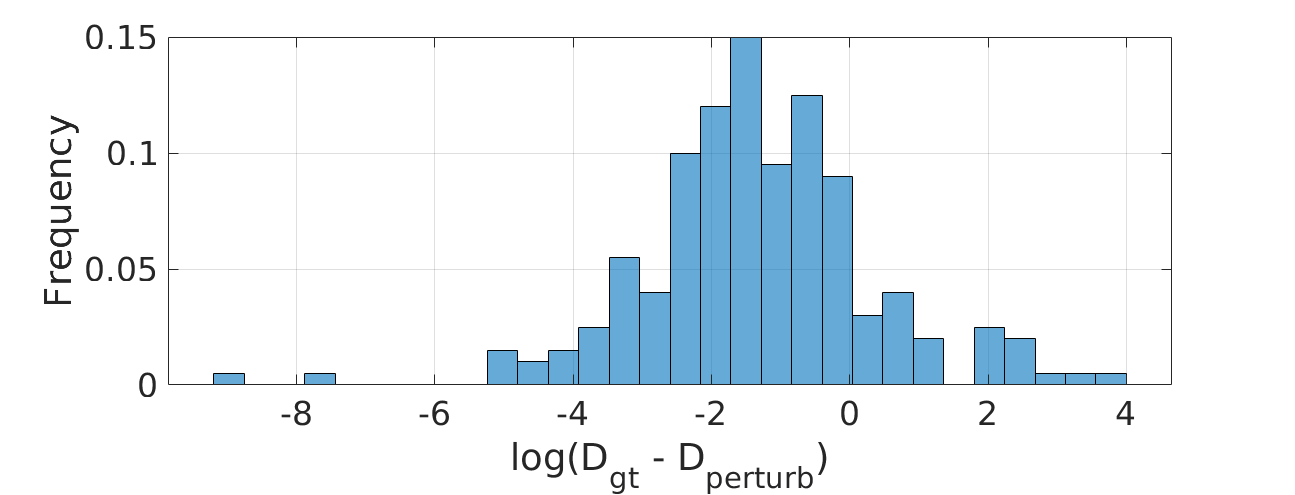}
    \caption{Histogram of the difference between two distances:  from the target point to the X.5-point curve without perturbation; and from the target point to the X.5-point curve with perturbation on the other points. The perturbation does not drastically change the distance, showing that the X.5-point curve is relatively stable to noise.  This plot combines the calibrated and uncalibrated cases.}
    \label{fig:stability_syn}
\end{figure}

\subsection{Illustration with Real Data}
Based on the synthetic results, the X.5-point curve could also be used with real images to detect near-degenerate minimal cases. To demonstrate this, we use image pairs given by the RANSAC 2020 dataset~\cite{mishkin} where standard point correspondences are available. Figure~\ref{fig:Real} shows that for a solution with large error compared to the ground truth, the remaining selected point is close to the degenerate curve. More real image results are in the supplementary materials. 
\begin{figure}
    \centering
    (a)\includegraphics[height = 0.3 \linewidth]{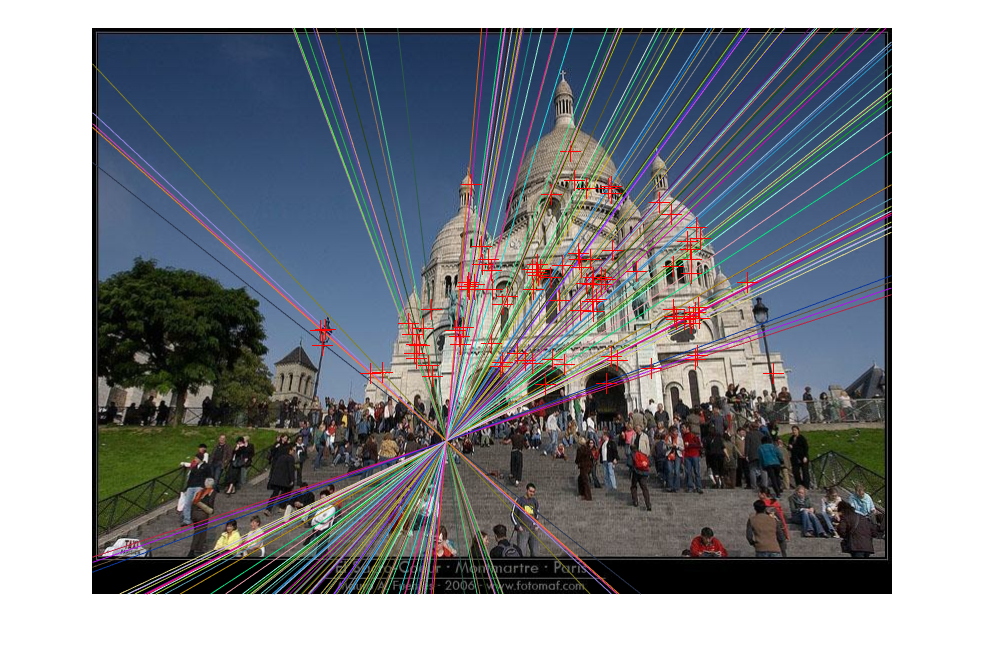} 
    \includegraphics[height = 0.3 \linewidth]{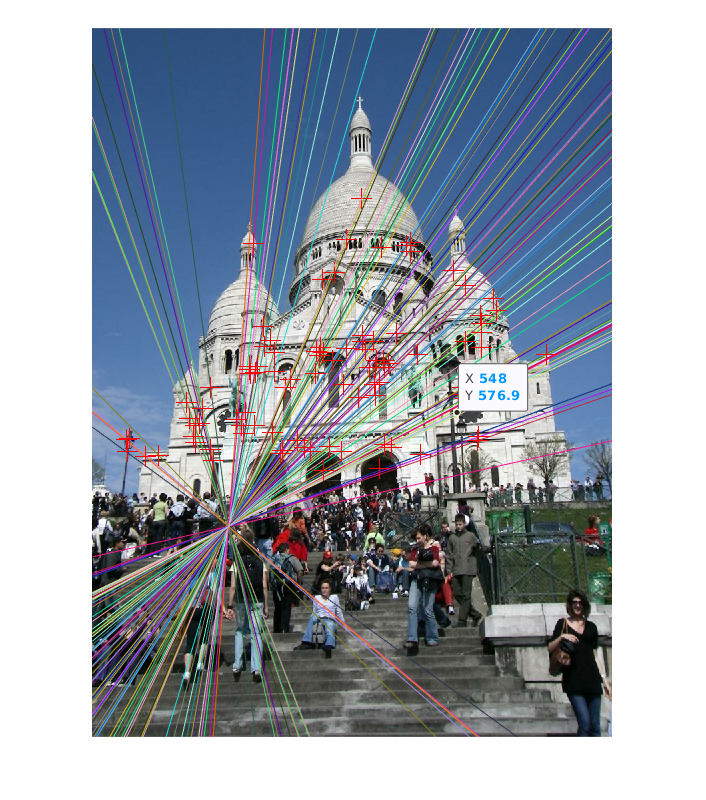} \\
    (b)\includegraphics[height = 0.3 \linewidth]{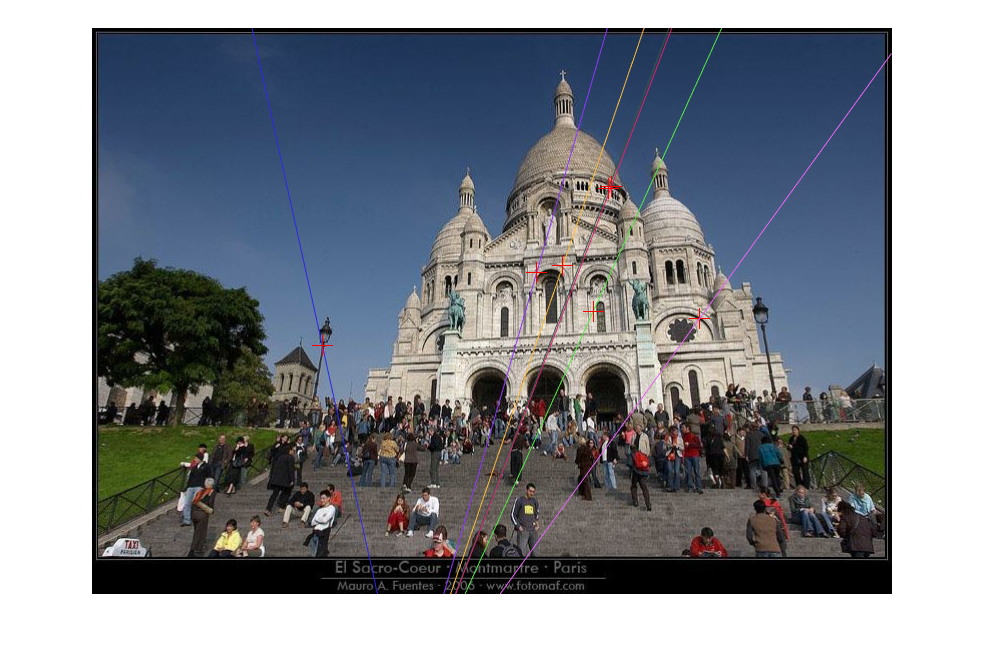} 
    \includegraphics[height = 0.3 \linewidth]{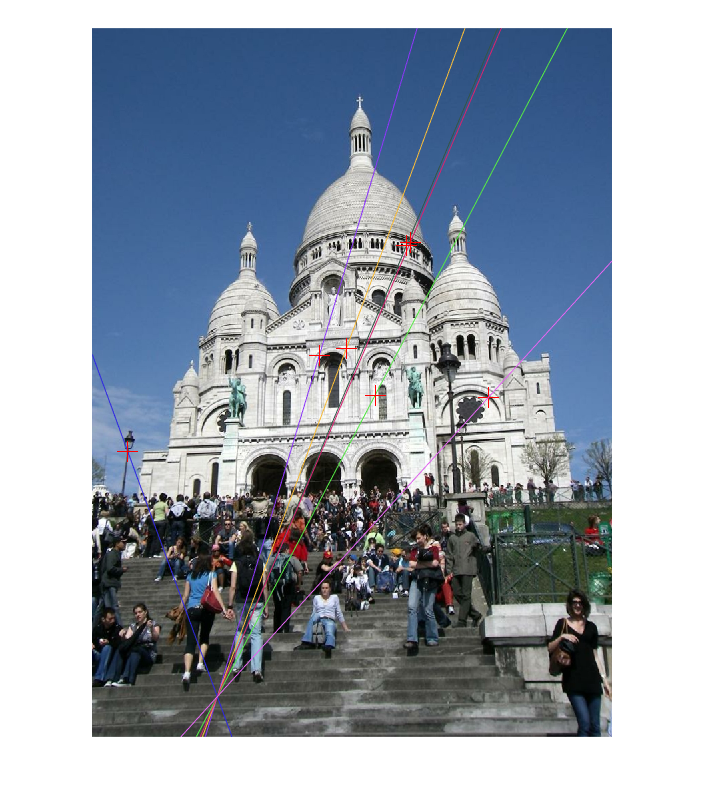} \\
    (c) \includegraphics[width = 0.75 \linewidth]{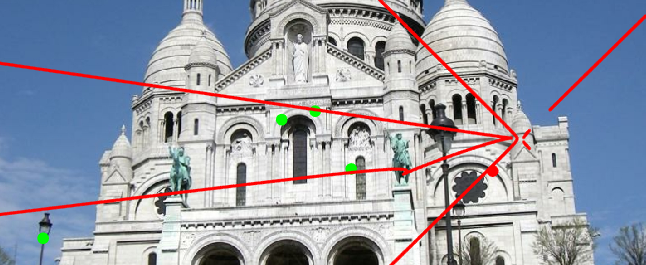}
    \caption{An example with real data to demonstrate an unstable minimal configuration  with all-inlier correspondences. (a) The ground truth epipolar geometry of a pair of images. (b) The closest solution found by the 7-point algorithm give 7 inliers. (c) Zoomed-in image showing that the remaining point is close to the degenerate curve, indicating that this is poorly-conditioned data.}
    \label{fig:Real}
\end{figure}

\section{Conclusion}
In this paper, we developed a general framework for analyzing the numerical instabilities of minimal problems in multiview geometry. 
We applied this to the problem of relative pose estimation, namely the  popular 5-point and 7-point problems. We determined condition number formulas, and we characterized the ill-posed world and image scenes. 

Numerical experiments on real and synthetic data supported our theoretical findings.  In particular we observed numerical instabilities for image data landing close to  the $4.5$- and $6.5$-point degenerate curves, which are used to describe ill-posed problem instances~in Theorems \ref{thm:image-E} and \ref{thm:F-image}.

Further, we related the numerical instabilities of minimal problems to the function of RANSAC inside SfM reconstructions.
Given all-inlier data, RANSAC is needed to overcome the ill-conditioning of relative pose estimation.

In future work, we could apply our theory to  other minimal problems, \eg partially calibrated relative pose estimation or three-view geometry.
In addition, it would be useful to develop an 
ultra-fast means of recognizing and filtering out poorly-conditioned image data.  Such could be applied before solving minimal problems and running RANSAC.

\paragraph{Acknowledgements.}
The authors are grateful to have participated in parts of 
the Algebraic Vision Research Cluster at ICERM, Brown University in Spring 2019, where they met each other and the seeds of this project were planted. 
BK and HF are supported by the NSF grant \nolinebreak IIS-1910530.

{\small
\bibliographystyle{ieee}
\bibliography{egbib}
}

\onecolumn

\begin{center}
     {\LARGE \bf 
     Supplementary Materials: Proofs and More Experiments \par}
      \vskip .5em
      \vspace*{8pt}
   \end{center}

\medskip
\renewcommand{\thesection}{S\arabic{section}}
\renewcommand{\thesubsection}{S\arabic{section}.\arabic{subsection}}
\renewcommand{\thefigure}{S\arabic{figure}}
\renewcommand{\thetable}{S\arabic{table}}
\renewcommand{\theequation}{S\arabic{equation}}
\setcounter{section}{0}
\setcounter{figure}{0}
\setcounter{table}{0}
\setcounter{equation}{0}

These supplementary materials have five sections in total, giving the additional information not covered in the main body of the paper. In Section~\ref{sec:Just}, we give the justification for Example~\ref{example:fundamental}. 
In Section~\ref{sec:sec4_1}, we provide proofs for the contents of Section~\ref{sec:conditionNumsFormulas}, and we also display explicit Jacobian matrices. Section~\ref{sec:sec4_2} supplies the proofs of Theorems~\ref{thm:illposed-world}  and \ref{thm:F-ill-posed-world}. Section~\ref{sec:sec4_3} gives the proofs of Theorems~\ref{thm:image-E} and \ref{thm:F-image}.  There, we also describe the steps for computing the degenerate X.5-point curves based on solving polynomial systems with homotopy continuation, or alternatively in the uncalibrated case, by specializing an explicit polynomial in Pl\"ucker coordinates. Finally, additional experimental results are shown in Section~\ref{sec:exp}.
For reproducibility, the code for this paper is available at
\url{https://github.com/HongyiFan/minimalInstability}.

\medskip
\medskip

\section{Justification for Example~\ref{example:fundamental}}
\label{sec:Just}
\noindent Here we justify the claim that for $(A,B)$ lying in a certain open dense subset $\mathcal{U}$ of the set of pairs of uncalibrated cameras:
 \begin{equation*}
\mathcal{C} = \{(A, B) \in \mathbb{P}(\mathbb{R}^{3 \times 4})^{\times 2} : \operatorname{rank}(A) = \operatorname{rank}(B) =3 \}, 
\end{equation*}
there exists a unique world transformation $g \in \operatorname{PGL}(4)$ and vector of parameters $b \in \mathbb{R}^7$ such that 
\begin{equation}\label{eq:desired}
(A g, B g) = \left( [I \,\, 0], M(b) \right) \, \in \, \mathbb{P}(\mathbb{R}^{3 \times 4})^{\times 2}
\end{equation}
where $M(b) \in \mathbb{P}(\mathbb{R}^{3 \times 4})$ is as defined in Eq.~\eqref{eq:Mb} of the main text.
Specifically, we claim that we can take the set to be
\begin{equation} \label{eq:explicit-U}
  \mathcal{U} =   \{ (A,B) \in \mathcal{C} \,\, : \,\, 
  \det[A; B(3,:)] \neq 0, \,\,\,\,
   B(1,:) ([A; B(3,:)]^{-1} (:,1)) \neq 0 \},
\end{equation}
where we are using Matlab notation to denote submatrices and matrix concatenations.

Firstly, we note that the conditions in Eq.~\eqref{eq:explicit-U} are independent of the choice of scales in $A$ and $B$, so they describe a well-defined subset of projective space.
Indeed if $\lambda$ and $\mu$ are nonzero scalars, then
\begin{align}
    & \det[\lambda A; (\mu B)(3,:)] = \lambda^3 \mu \det [A; B(3,:)], \nonumber \\[2pt]
    & (\mu B)(1,:)([\lambda A; (\mu B)(3,:)]^{-1}(:,1)) = \mu \lambda^{-1} B(1,:)([A; B(3,:)]^{-1}(:,1)). \label{eq:good-scale}
\end{align}

Next, let $(A,B) \in \mathcal{U}$.  Note that Eq.~\eqref{eq:desired} holds if and only there exist scales for $A, B, g$ such that in affine space we have
\begin{equation} \label{eq:affine}
[A; B] g = [[I \,\, 0]; M(b)] \, \in \, \mathbb{R}^{6 \times 4}. 
\end{equation}
Comparing rows $1, 2, 3, 6$ in Eq.~\eqref{eq:affine}, we must have $g = [A; B(3,:)]^{-1}$.
Then $(Bg)(1,1) \neq 0$ by \eqref{eq:good-scale}, and we can choose scales for $A, B$ so that $(Bg)(1,1) = 1$ by Eq.~\eqref{eq:good-scale}.
This concludes the justification for Example~\ref{example:fundamental}. \hspace{3cm} $\square$

\medskip
\medskip

\section{Proofs for ``Section~\ref{sec:conditionNumsFormulas}: Condition Number Formulas"}
\label{sec:sec4_1}
In this section, we prove Propositions~\ref{prop:formula-E} and \ref{prop:formula-F} from the main body, and we display explicit Jacobian matrices.

\smallskip

\subsection{Preliminaries on Tangent Spaces, Inner Products and Orthonormal Bases}

First we collect together basic facts about the relevant Riemannian manifolds.

\paragraph{$\bullet$ \underline{Special orthogonal group.}}
Consider $\operatorname{SO}(3)$. By linearizing the  equations $RR^{\top} = R^{\top}R = I$, 
\begin{equation*}
T(\operatorname{SO}(3), R) = \{ \delta R \in \mathbb{R}^{3 \times 3} : (\delta R) R^{\top} + R (\delta R)^{\top} = R^{\top} (\delta R) + (\delta R)^{\top}R = 0\} \subseteq \mathbb{R}^{3 \times 3}.
\end{equation*}
This tangent space may be parameterized as $R$ multiplied by skew-symmetric matrices:
\begin{equation} \label{eq:SO3-tangentspace}
    T(\operatorname{SO}(3), R) = \{ R[s]_{\times} : s \in \mathbb{R}^3 \},
\end{equation}
where $[s]_{\times} := \begin{pmatrix} 0 & -s_3 & s_2 \\ s_3 & 0 & -s_1 \\ -s_2 & s_1 & 0 \end{pmatrix}$ for $s = \begin{pmatrix} s_1 \\ s_2 \\ s_3 \end{pmatrix}$. 
The Riemannian metric's inner product on the tangent space is  the restriction of the Frobenius inner product on $\mathbb{R}^{3 \times 3}$, 
\begin{equation*} \label{eq:realnice-one}
    \langle R[s]_{\times}, R[\tilde{s}]_{\times} \rangle := \operatorname{trace}((R[s]_{\times})^{\top} R[\tilde{s}]_{\times}) = \operatorname{trace}([s]_{\times}^{\top} [\tilde s]_{\times}) = 2 \langle s, \tilde{s} \rangle,
\end{equation*}
where the rightmost inner product is the standard one on $\mathbb{R}^3$.  
An orthonormal basis for $T(\operatorname{SO}(3), R)$ is
\begin{equation} \label{eq:SO3-orthonormal}
    \frac{1}{\sqrt{2}} R [e_1]_{\times}, \frac{1}{\sqrt{2}} R [e_2]_{\times}, \frac{1}{\sqrt{2}} R [e_3]_{\times},
\end{equation}
where $e_1, e_2, e_3$ is the standard basis on $\mathbb{R}^3$.

\paragraph{$\bullet$ \underline{Unit sphere.}} Consider the two-dimensional unit sphere $\mathbb{S}^2$. 
Its tangent space are the perpendicular spaces:
\begin{equation*}
    T(\mathbb{S}^2, t) = t^{\perp} := \{\tilde t \in \mathbb{R}^3 : \langle t , \tilde{t} \rangle = 0\} \subseteq \mathbb{R}^3.
\end{equation*}
The Riemannian metric's inner product arises by restricting of the Euclidean inner product on $\mathbb{R}^3$ .  We fix
\begin{equation} \label{eq:S2-orthonormal}
    t^{\perp}_1, t^{\perp}_2 \in \mathbb{R}^3
\end{equation}
to be an orthonormal basis for $T(\mathbb{S}^2, t)$.

\paragraph{$\bullet$ \underline{Projective space.}}
Consider the real projective space of $3 \times 3$ matrices, $\mathbb{P}(\mathbb{R}^{3 \times 3})$.
The map 
\begin{equation} \label{eq:Riem-map}
    \mathbb{S}^8 = \{M \in \mathbb{R}^{3 \times 3} : \| M \|_F = 1 \} \longrightarrow \mathbb{P}(\mathbb{R}^{3 \times 3}), \,\,\,\, M \mapsto [M]
\end{equation}
witnesses $\mathbb{P}(\mathbb{R}^{3 \times 3})$ as a quotient of $\mathbb{S}^8$ by  $\mathbb{Z}/2\mathbb{Z}$ acting via a sign flip.  
By \cite[Exam.~2.34 and Prop.~2.32]{lee2006riemannian}, this induces the structure of a Riemannian manifold on $\mathbb{P}(\mathbb{R}^{3 \times 3})$ such that \eqref{eq:Riem-map} is locally an isometry.
At a given point in $\mathbb{P}(\mathbb{R}^{3 \times 3})$ we can choose a representative $M \in \mathbb{S}^8$ and the tangent space can be identified as follows:
\begin{equation} \label{eq:proj-space}
T(\mathbb{P}(\mathbb{R}^{3 \times 3}), [M])  \cong  T(\mathbb{S}^8, M) = M^{\perp} = \{\tilde M \in \mathbb{R}^{3 \times 3} : \langle M, \tilde M \rangle = 0\} \subseteq \mathbb{R}^{3 \times 3}.
\end{equation}
The Riemannian metric's inner product is the Frobenius inner product on $M^{\perp}$.

\paragraph{$\bullet$ \underline{Essential matrices.}} 
Consider the manifold of real essential matrices,
\begin{equation*}
\mathcal{E} \subseteq \mathbb{P}(\mathbb{R}^{3 \times 3}).  
\end{equation*}
(This departs from the notation in the main body.) 
It is known that $\mathcal{E}$ is a compact smooth real manifold of dimension $5$.

\begin{lemma} \label{lem:E-submer}
At each point in $\operatorname{SO}(3) \times \mathbb{S}^2$, the differential of the map
\begin{equation*}
\operatorname{SO}(3) \times \mathbb{S}^2 \rightarrow \mathcal{E}, \,\, (R,t) \mapsto R [t]_{\times}
\end{equation*}
has rank $5$.  Thus the map is a submersion onto the manifold of real essential matrices $\mathcal{E} \subseteq \mathbb{P}(\mathbb{R}^{3 \times 3})$.
\end{lemma}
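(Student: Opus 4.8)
The plan is to compute the differential of the map $\mu \colon \operatorname{SO}(3) \times \mathbb{S}^2 \to \mathbb{P}(\mathbb{R}^{3\times 3})$, $(R,t) \mapsto [R[t]_{\times}]$, in the coordinates already fixed in the preliminaries, and to show that its kernel is trivial. Since $\dim\bigl(\operatorname{SO}(3)\times\mathbb{S}^2\bigr) = 5$, a trivial kernel forces the rank to be exactly $5$; and because the image of $\mu$ lies in the $5$-dimensional embedded submanifold $\mathcal{E}\subseteq\mathbb{P}(\mathbb{R}^{3\times 3})$, a rank-$5$ differential is automatically surjective onto $T(\mathcal{E},\cdot)$, so $\mu$ is a submersion onto $\mathcal{E}$ (surjectivity of $\mu$ onto $\mathcal{E}$ itself being the classical fact that every matrix with $\sigma_1 = \sigma_2 > \sigma_3 = 0$ equals $R[t]_{\times}$ up to scale, with singular values $1,1,0$).

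First I would lift $\mu$ through the double cover $\pi\colon \mathbb{S}^8 \to \mathbb{P}(\mathbb{R}^{3\times 3})$. The key simplification is that $\|R[t]_{\times}\|_F = \sqrt{2}\,\|t\|_2 = \sqrt 2$ whenever $t\in\mathbb{S}^2$, so $\mu$ lifts globally to $\widetilde{\mu}\colon (R,t)\mapsto \tfrac{1}{\sqrt 2}R[t]_{\times} \in \mathbb{S}^8$, and since $\pi$ is a local diffeomorphism, $D\mu$ and $D\widetilde{\mu}$ have the same rank at every point. Using the parameterizations $T(\operatorname{SO}(3),R) = \{R[s]_{\times} : s\in\mathbb{R}^3\}$ and $T(\mathbb{S}^2,t) = t^{\perp}$, the differential is
\[
  D\widetilde{\mu}(R,t)\bigl(R[s]_{\times}, \dot t\bigr) \;=\; \tfrac{1}{\sqrt 2}\bigl(R[s]_{\times}[t]_{\times} + R[\dot t]_{\times}\bigr),
\]
which, consistently with the norm being constant, automatically lies in $M^{\perp} = T(\mathbb{S}^8, M)$ for $M = \tfrac{1}{\sqrt2}R[t]_{\times}$, so no further projection onto $M^\perp$ is needed.

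Next I would show injectivity. Suppose $R[s]_{\times}[t]_{\times} + R[\dot t]_{\times} = 0$; multiplying on the left by $R^{\top}$ and using $[s]_{\times}[t]_{\times} = t s^{\top} - (s^{\top} t) I$ gives
\[
  t s^{\top} - (s^{\top} t)\, I + [\dot t]_{\times} \;=\; 0 .
\]
Separating symmetric and skew-symmetric parts, the symmetric part yields $\tfrac12\bigl(t s^{\top} + s t^{\top}\bigr) = (s^{\top} t)\, I$; taking traces forces $s^{\top} t = 0$, hence $t s^{\top} + s t^{\top} = 0$, and applying this to $s$ gives $\|s\|^2\, t = 0$, so $s = 0$. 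The skew part then reads $[\dot t]_{\times} = 0$, i.e.\ $\dot t = 0$. Hence $\ker D\widetilde{\mu} = 0$.

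Finally I would assemble the pieces: $\operatorname{rank} D\mu = \operatorname{rank} D\widetilde{\mu} = 5 = \dim\mathcal{E}$, so at every $(R,t)$ the differential is a linear isomorphism onto $T\bigl(\mathcal{E}, [R[t]_{\times}]\bigr)$; combined with the known surjectivity of $(R,t)\mapsto[R[t]_{\times}]$ onto $\mathcal{E}$, this is exactly the assertion that the map is a submersion onto $\mathcal{E}$. The only mildly delicate point is the bookkeeping around the projectivization, which the norm-constancy observation dispatches cleanly; the remaining symmetric/skew-symmetric argument is short and I do not expect it to present any real obstacle.
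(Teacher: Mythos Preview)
Your proof is correct and follows the same overall skeleton as the paper: compute the differential as $\tfrac{1}{\sqrt 2}\bigl(R[s]_\times[t]_\times + R[\dot t]_\times\bigr)$, reduce to $[s]_\times[t]_\times + [\dot t]_\times = 0$ by left-multiplying by $R^\top$, and show this forces $s=0$ and $\dot t=0$. You are also more explicit than the paper about the projectivization bookkeeping, handling it via the norm-constancy $\|R[t]_\times\|_F=\sqrt 2$ and the lift to $\mathbb S^8$; the paper simply asserts the differential lands in the tangent space of the sphere.

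Where you genuinely diverge is in the core linear-algebra step. The paper argues as follows: multiply $[s]_\times[t]_\times + [\dot t]_\times = 0$ on the right by $t$ to get $[\dot t]_\times t = 0$, and observe that since $\dot t \perp t$, the kernel of $[\dot t]_\times$ (namely $\operatorname{span}(\dot t)$) cannot contain $t$ unless $\dot t = 0$; then $[s]_\times[t]_\times = 0$ is impossible for $s\neq 0$ by a rank count (both factors have rank~$2$). Your route instead substitutes the identity $[s]_\times[t]_\times = ts^\top - (s^\top t)I$ and decomposes into symmetric and skew parts: the symmetric part forces $s^\top t = 0$ (by trace) and then $s=0$ (by evaluating on $s$), after which the skew part gives $\dot t = 0$. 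Your argument is arguably more systematic and self-contained; the paper's is shorter but leans on the ad~hoc observation of multiplying by $t$ and on rank bounds for products of $3\times 3$ skew matrices. Both are perfectly valid.
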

\begin{proof}
The map is linear separately in $R$ and $t$.  So by the product rule, at $(\delta R, \delta t) \in T(\operatorname{SO}(3), R) \times  T(\mathbb{S}^2, t)$ its differential evaluates to $\frac{1}{\sqrt{2}} (\delta R)[t]_{\times} + \frac{1}{\sqrt{2}} R[\delta t]_{\times} \in T(\mathcal{E}, R[t]_{\times}) \subseteq T(\mathbb{P}(\mathbb{R}^{3 \times 3}), R[t]_{\times}) = T(\mathbb{S}^8, \frac{1}{\sqrt{2}} R[t]_{\times}) = (R [t]_{\times})^{\perp} \subseteq \mathbb{R}^{3 \times 3}$, where we used \eqref{eq:proj-space}. 
We need to show that this quantity equals $0$ only if $\delta R = 0$ and $\delta t = 0$.
By \eqref{eq:SO3-tangentspace}, $\delta R = R [s]_{\times}$ for some $s \in \mathbb{R}^3$ and $\delta t$ is perpendicular to $t$.  Substituting these in gives the condition
\begin{equation*}
    \tfrac{1}{\sqrt{2}} R [s]_{\times} [t]_{\times} + \tfrac{1}{\sqrt{2}} R [\delta t]_{\times} = 0.
\end{equation*}
Left-multiplying by $\sqrt{2} R^{\top}$, this is equivalent to
\begin{equation} \label{eq:getting-better}
    [s]_{\times} [t]_{\times} + [\delta t]_{\times} = 0.
\end{equation}
If we multiply on the right by $t$, it follows that $[\delta t]_{\times}t = 0$.  But if $\delta t \neq 0$, then $[\delta t]_{\times}$ is rank-$2$ with kernel spanned by $\delta t$ which is perpendicular to $t$. The last two sentences give a contradiction.  Thus we must have $\delta t = 0$.  
So now \eqref{eq:getting-better} reads
\begin{equation} \label{eq:near-done}
    [s]_{\times} [t]_{\times} = 0.
\end{equation}
Assume $s \neq 0$. Then $[s]_{\times}$ is a rank $2$ matrix of size $3  \times 3$.  Since $[t]_{\times}$ is rank $2$ and $3 \times 3$ as well (recall $t \in \mathbb{S}^2$ so that $t \neq 0$), the product $ [s]_{\times} [t]_{\times}$ must have rank at least $1$. This contradicts \eqref{eq:near-done}, so $s=0$, and the lemma follows.
\end{proof}

\medskip

 Lemma~\ref{lem:E-submer} lets us write down tangent spaces to the essential matrices:  

\begin{equation*}
    T(\mathcal{E}, R[t]_{\times}) \, = \, \{ \tfrac{1}{\sqrt{2}}R [s]_{\times} [t]_{\times} + \tfrac{1}{\sqrt{2}} R [\delta t]_{\times} : s \in \mathbb{R}^3, \delta t \in \mathbb{R}^3, \langle \delta t, t \rangle = 0\} \, 
    \subseteq \, \mathbb{R}^{3 \times 3}.
\end{equation*}
The Riemannian metric's inner product is the restriction of the Frobenius inner product on $\mathbb{R}^{3 \times 3}$.
We get an orthonormal basis for the tangent space 
by orthonormalizing the image of  \eqref{eq:SO3-orthonormal} and \eqref{eq:S2-orthonormal}, \ie, by orthonormalizing 
\begin{equation} \label{eq:not-orthonormal}
    \frac{1}{2}R[e_1]_{\times} [t]_{\times}, \,\,\,\,\,\, \frac{1}{2}R[e_2]_{\times} [t]_{\times}, \,\,\,\,\,\, \frac{1}{2}R[e_3]_{\times} [t]_{\times}, \,\,\,\,\,\,
    \frac{1}{\sqrt{2}} R[t_{1}^{\perp}]_{\times}, \,\,\,\,\,\,
    \frac{1}{\sqrt{2}}  R[t_{2}^{\perp}]_{\times}.
\end{equation}
Elementary linear algebra implies that if $\alpha \in \mathbb{R}^5$ expresses an element of $T(\mathcal{E}, R[t]_{\times})$ in terms of the basis \eqref{eq:not-orthonormal} then $G^{1/2} \alpha$ expresses the same tangent vector in terms of an orthonormal basis for $T(\mathcal{E}, R[t]_{\times})$, where $G$ is the Grammian matrix for the matrices in \eqref{eq:not-orthonormal} with respect to the Frobenius inner product.  Explicitly,  $G$ equals
\begin{tiny}
\begin{equation} \label{eq:crazy-grammian}
     \begin{pmatrix} 
    \frac{1}{2}t_1^2 + \frac{1}{4}t_2^2 + \frac{1}{4}t_3^2 & \frac{1}{4}t_1t_2 & \frac{1}{4}t_1t_3 & -\frac{1}{2\sqrt{2}}t_3(t^{\perp}_1)_2 + \frac{1}{2\sqrt{2}}t_2(t^{\perp}_1)_3 & -\frac{1}{2\sqrt{2}}t_3(t^{\perp}_2)_2 + \frac{1}{2\sqrt{2}}t_2(t^{\perp}_2)_3  \\[5pt] 
    \frac{1}{4}t_1t_2 & \frac{1}{4}t_1^2 + \frac{1}{2}t_2^2 + \frac{1}{4}t_3^2 & \frac{1}{4}t_2t_3 & \frac{1}{2\sqrt{2}}t_3(t^{\perp}_1)_1-\frac{1}{2\sqrt{2}}t_1(t^{\perp}_{1})_3 & \frac{1}{2\sqrt{2}}t_3(t^{\perp}_2)_1-\frac{1}{2\sqrt{2}}t_1(t^{\perp}_{2})_3 \\[5pt]
    \frac{1}{4}t_1t_3 & \frac{1}{4}t_2t_3 &
      \frac{1}{4}t_1^2 + \frac{1}{4}t_2^2 + \frac{1}{2}t_3^2 & -\frac{1}{2\sqrt{2}}t_2(t^{\perp}_1)_1+\frac{1}{2\sqrt{2}}t_1(t^{\perp}_1)_2 & -\frac{1}{2\sqrt{2}}t_2(t^{\perp}_2)_1+\frac{1}{2\sqrt{2}}t_1(t^{\perp}_2)_2 \\[5pt]
      -\frac{1}{2\sqrt{2}}t_3(t^{\perp}_1)_2 + \frac{1}{2\sqrt{2}}t_2(t^{\perp}_1)_3 & \frac{1}{2\sqrt{2}}t_3(t^{\perp}_1)_1-\frac{1}{2\sqrt{2}}t_1(t^{\perp}_{1})_3 & -\frac{1}{2\sqrt{2}}t_2(t^{\perp}_1)_1+\frac{1}{2\sqrt{2}}t_1(t^{\perp}_1)_2 & 1 & 0 \\[5pt]
      -\frac{1}{2\sqrt{2}}t_3(t^{\perp}_2)_2 + \frac{1}{2\sqrt{2}}t_2(t^{\perp}_2)_3  & \frac{1}{2\sqrt{2}}t_3(t^{\perp}_2)_1-\frac{1}{2\sqrt{2}}t_1(t^{\perp}_{2})_3 & -\frac{1}{2\sqrt{2}}t_2(t^{\perp}_2)_1+\frac{1}{2\sqrt{2}}t_1(t^{\perp}_2)_2 &  0 & 1
    \end{pmatrix}
\end{equation}
\end{tiny}

\paragraph{$\bullet$ \underline{Fundamental matrices.}} 
Consider the manifold of real fundamental matrices,
\begin{equation*}
\mathcal{F} \subseteq \mathbb{P}(\mathbb{R}^{3 \times 3}) .  
\end{equation*}
(This departs from the notation in the main body.)
It is known that $\mathcal{F}$ is a non-compact smooth real manifold of dimension $7$.

We will work with $\mathcal{F}$  using the parameterization from $\mathbb{R}^7$ given by Eq.~\eqref{eq:F-def}.   This  sends $b \in \mathbb{R}^{7}$ to

\begin{equation} \label{eq:fund-b}
F(b) := \begin{pmatrix} 
b_4 & b_5 &  b_6 \\ 
-1 & -b_1 &  -b_2 \\
-b_3b_4+b_7 & -b_3b_5+b_1b_7 & -b_3b_6+b_2b_7
\end{pmatrix}.
\end{equation}

\begin{lemma} \label{lem:F-param}
At each point $b \in \mathbb{R}^7$ where the camera matrix $M(b)$  in \eqref{eq:Mb} has full rank, the differential of the map $F: \mathbb{R}^7 \dashrightarrow \mathcal{F}$  has rank $7$. 
Thus $F$ is a submersion on the open set where it is defined.
\end{lemma}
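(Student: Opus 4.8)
The plan is to prove that the differential $DF(b)$ has rank $7$ by showing it is \emph{injective}. Since $\dim \mathbb{R}^7 = \dim \mathcal{F} = 7$ and $F$ takes values in the submanifold $\mathcal{F} \subseteq \mathbb{P}(\mathbb{R}^{3\times 3})$, the image of an injective $DF(b)$ is a $7$-dimensional subspace of the $7$-dimensional space $T(\mathcal{F},[F(b)])$, hence equals it; so $F$ is a submersion (indeed a local diffeomorphism onto $\mathcal{F}$) at every such $b$. To analyze $DF(b)$ I would use the tangent-space model of projective space from Eq.~\eqref{eq:proj-space}: taking the affine representative $F(b) \in \mathbb{R}^{3\times 3}$ from Eq.~\eqref{eq:fund-b}, the differential of $b \mapsto [F(b)]$ is identified with the map $\dot b \mapsto \frac{\partial F}{\partial b}(b)\,\dot b$ taken modulo $\langle F(b)\rangle$. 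Thus $DF(b)$ is injective precisely when the only solutions of $\frac{\partial F}{\partial b}(b)\,\dot b = c\,F(b)$ with $\dot b \in \mathbb{R}^7$ and $c\in\mathbb{R}$ are $\dot b = 0$, $c = 0$.

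Second, I would just write out the $3\times 3$ matrix $\frac{\partial F}{\partial b}(b)\,\dot b$ directly from Eq.~\eqref{eq:fund-b}. Its $(2,1)$ entry is identically $0$, whereas the $(2,1)$ entry of $F(b)$ is $-1$; comparing these forces $c=0$, so the task reduces to showing $\frac{\partial F}{\partial b}(b)\,\dot b = 0 \Rightarrow \dot b = 0$. The first row of this equation gives $\dot b_4 = \dot b_5 = \dot b_6 = 0$, and the second row gives $\dot b_1 = \dot b_2 = 0$. Substituting these into the three entries of the third row and eliminating $\dot b_7$ leaves the system
\[ \dot b_7 = b_4\,\dot b_3, \qquad \dot b_3\,(b_5 - b_1 b_4) = 0, \qquad \dot b_3\,(b_6 - b_2 b_4) = 0. \]

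Third — the only step that is not pure bookkeeping — I would invoke the hypothesis that $M(b)$ in Eq.~\eqref{eq:Mb} has full rank. A short check shows $M(b)$ has rank $3$ exactly when the vectors $(1,b_1,b_2)$ and $(b_4,b_5,b_6)$ are linearly independent in $\mathbb{R}^3$ (the last column together with the third row supplies the remaining pivot). If $\dot b_3 \neq 0$, the last two relations above give $b_5 = b_1 b_4$ and $b_6 = b_2 b_4$, i.e.\ $(b_4,b_5,b_6) = b_4\,(1,b_1,b_2)$, which is linearly dependent with $(1,b_1,b_2)$ (whether or not $b_4 = 0$), contradicting that independence. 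Hence $\dot b_3 = 0$, and then $\dot b_7 = b_4\,\dot b_3 = 0$, so $\dot b = 0$, proving $DF(b)$ is injective and finishing the argument.

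I do not expect a serious obstacle: computing $\frac{\partial F}{\partial b}$ is elementary, and the only idea required is the final identification of the potential kernel direction $\dot b_3$ with the failure of $M(b)$ to be full rank. The points to handle with a little care are the passage from the affine differential to the differential into projective space (the ``$\bmod\ \langle F(b)\rangle$'' reduction, which contributes the scalar $c$), and the remark that for a map between equidimensional manifolds full rank is equivalent to being a submersion onto $\mathcal{F}$.
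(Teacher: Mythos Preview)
Your proof is correct and follows essentially the same route as the paper: compute the affine differential of $b \mapsto F(b)$, observe that the first two rows force $\dot b_1 = \dot b_2 = \dot b_4 = \dot b_5 = \dot b_6 = 0$, and then use the full-rank hypothesis on $M(b)$ to eliminate $\dot b_3,\dot b_7$ from the third-row equations. Two minor differences worth noting: you handle the projective quotient explicitly via the scalar $c$ (the paper silently sets the affine differential to zero, which is justified by the same $(2,1)$-entry observation you make), and for the last step you argue directly via linear independence of $(1,b_1,b_2)$ and $(b_4,b_5,b_6)$, whereas the paper packages the same fact as ``$F(b)$ has rank $2$'' --- these are equivalent, and your version is arguably more self-contained.
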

\begin{proof}
The differential of $F$ at $b$ evaluated at $\delta b \in \mathbb{R}^7$ equals
\begin{footnotesize}
\begin{equation} \label{eq:F-diff}
    \begin{pmatrix}
    (\delta b)_4 & (\delta b)_5 & (\delta b)_6 \\[5pt]
    0 & -(\delta b)_1 & -(\delta b)_2 \\[5pt]
    -(\delta b)_3 b_4 - b_3(\delta b)_4 + (\delta b)_7 & -(\delta b)_3 b_5 - b_3 (\delta b)_5  + (\delta b)_1 b_7 + b_1 (\delta b)_7 & -(\delta b)_3 b_6 - b_3 (\delta b)_6  + (\delta b)_2 b_7 + b_2 (\delta b)_7
    \end{pmatrix}.
\end{equation}
\end{footnotesize}
Equating this with $0$, the first two rows show that $0 = (\delta b)_1 = (\delta b)_2 = (\delta b)_4 = (\delta b)_5 = (\delta b)_6$. 
Then the last row reads:
\begin{equation} \label{eq:getting-thereee}
    \begin{pmatrix} -b_4 & 1 \\ -b_5 & b_1 \\ -b_6 & b_2 \end{pmatrix} \begin{pmatrix} (\delta b)_3 \\ (\delta b)_7 \end{pmatrix} = 0.
\end{equation}
The coefficient matrix in \eqref{eq:getting-thereee} consists of the first two rows of $F(b)$ transposed and negated.  However the first two rows of $F(b)$ span the row space of $F(b)$, since the third row of $F(b)$ is $-b_3$ times the first row added to $-b_7$ times the second row.  
Because $F(b)$ has rank $2$,
 $(\delta b)_3 = (\delta b)_7 = 0$.  
 All together, $\delta b = 0$ whence $DF(b)$ is injective.
\end{proof}

\medskip

Lemma~\ref{lem:F-param} lets us write down the tangent spaces to fundamental matrices.  They are spanned by the matrices \eqref{eq:F-diff} as $\delta b$ ranges over a standard basis $e_1, \ldots, e_7$ for $\mathbb{R}^7$. 
The Riemannian metric's inner product is the restriction of the Frobenius inner product.
We get an orthonormal basis for $T(\mathcal{F}, F(b))$ by orthonormalizing 
\begin{equation} \label{eq:F-first-basis}
    \frac{1}{\| F(b) \|_F} \frac{\partial F(b)}{\partial b_1}, \,\,\,\,\, \ldots \,\,\,\,\,,  \frac{1}{\| F(b) \|_F} \frac{\partial F(b)}{\partial b_7}. 
\end{equation}
Elementary linear algebra implies that if $\alpha \in \mathbb{R}^7$ expresses an element of $T(\mathcal{F}, F(b))$ in terms of the basis \eqref{eq:F-first-basis} then $G^{1/2} \alpha$ expresses the same tangent vector in terms of an orthonormal basis for $T(\mathcal{F}, F(b))$, where $G$ is the Grammian matrix for the matrices in 
\eqref{eq:F-first-basis} with respect to the Frobenius inner.  Explicitly,  $G$ equals
\begin{equation} \label{eq:less-crazy-grammian}
\begin{footnotesize}
  \frac{1}{\| F(b) \|_F^2}  \begin{pmatrix}
    b_7^2+1 & 0 & -b_5b_7 & 0 & -b_3b_7 & 0 & b_1b_7 \\[1pt]
    0 & b_7^2+1 & -b_6b_7 & 0 & 0 & -b_3b_7 & b_2b_7 \\[1pt] 
    -b_5b_7 & -b_6b_7 & b_4^2+b_5^2+b_6^2 & b_3b_4 & b_3b_5 & b_3b_6 & -b_1b_5-b_2b_6-b_4 \\[1pt] 
    0 & 0 & b_3b_4 & b_3^2+1 & 0 & 0 &
     -b_3 \\[1pt]
     -b_3b_7 & 0 & b_3b_5 & 0 & b_3^2+1 & 0 & -b_1b_3 \\[1pt]
     0 & -b_3b_7 & b_3b_6 & 0 & 0 & b_3^2+1 & -b_2b_3 \\[1pt]
     b_1b_7 & b_2b_7 & -b_1b_5-b_2b_6-b_4 & -b_3 & -b_1b_3 & -b_2b_3 & b_1^2+b_2^2+1
    \end{pmatrix}.
    \end{footnotesize}
\end{equation}

\smallskip

\subsection{Proof of Proposition~\ref{prop:formula-E}}

\begin{proof}
Uniqueness of the reconstruction map is by Lemma~\ref{lem:first} (which is a restatement of the inverse function theorem). 
This is because we are assuming that the world scene $w \in \operatorname{SO}(3) \times \mathbb{S}^2 \times (\mathbb{R}^3)^{\times 5}$ is not ill-posed.
Eq.~\eqref{eq:cond-formula}  expresses the condition number of $\mathbf{S}$ as the largest singular value of the product of a $5 \times 20$ matrix and  the inverse of a $20 \times 20$ matrix:
\begin{equation*}
    \| D \Psi(w) \circ D \Phi(w)^{-1} \|.
\end{equation*}

We need to make this formula explicit.
Here the forward map is given by Eq.~\eqref{eq:E-forward} of the main body:
\begin{equation*}
    \Phi(R, t, X_1, \ldots, X_5) = ((\beta(X_1), \beta(RX_1 + t)), \ldots (\beta(X_5),\beta(RX_5 + t))),
\end{equation*}
where $\beta : \mathbb{R}^3 \dashrightarrow \mathbb{R}^2$ is the projection $\beta(a_1, a_2, a_3) = (a_1/a_3, a_2/a_3)$ defined whenever $a_3 \neq 0$.
It is natural to factor $\Phi = \Phi_2 \circ \Phi_1$ as the composition of a map $\Phi_1 : \operatorname{SO}(3) \times \mathbb{S}^2 \times (\mathbb{R}^3)^{\times 5} \rightarrow (\mathbb{R}^{3} \times \mathbb{R}^3)^{\times 5}$ given by
\begin{equation*}
    \Phi_1(R, t, X_1, \ldots, X_5) := ((X_1, RX_1 + t), \ldots, (X_5, RX_5 + t)),
\end{equation*}
followed by an almost-everywhere-defined map $\Phi_2: (\mathbb{R}^{3} \times \mathbb{R}^3)^{\times 5} \dashrightarrow (\mathbb{R}^2 \times \mathbb{R}^2)^{\times 5}$ given by
\begin{equation*}
\Phi_2((Z_1,\tilde{Z}_1), \ldots, (Z_5, \tilde{Z}_5)) := ((\beta(Z_1), \beta(\tilde{Z}_1)), \ldots, (\beta(Z_5),\beta(\tilde Z_5))).
\end{equation*}

By the chain rule,  $D\Phi(w) = D\Phi_2(\Phi(w)) \circ D\Phi_1(w)$.  
This writes the forward Jacobian matrix as the product of a $20 \times 30$ matrix multiplied by a $30 \times 20$ matrix.  
Let us explicitly write down $D\Phi_1(w)$ in terms of the orthonormal bases for the tangent spaces from the previous section, with columns ordered according to $\delta X_1, \ldots, \delta X_5, \delta R, \delta t$ (corresponding to an orthonormal basis for $T(\operatorname{SO}(3) \times \mathbb{S}^2 \times (\mathbb{R}^3)^{\times 5}, w)$), and rows ordered according to $\delta Z_1, \ldots, \delta Z_5, \delta \tilde Z_1, \ldots, \delta \tilde Z_5$ (corresponding to a standard basis on $(\mathbb{R}^3 \times \mathbb{R}^3)^{\times 5}$).  
Since $\Phi_1$ is separately linear in $X_1, \ldots, X_5, R, t$, we can compute the following \nolinebreak block \nolinebreak form:

\begin{equation}\label{eqn:jac-1}
D\Phi_{1}(w) \,\, = \,\,
  \begin{pmatrix}
  & &  & \rvline & & & & & \\
  & &  & \rvline & & & & & \\
  &  \bigeye_{15 \times 15} & &  \rvline & \hspace{8em} \bigzero_{15 \times 5} & & & 
  & \\ & & &  \rvline & & & & & \\[0.7em]
\hline 
& & & \rvline &&& \\[-0.5em]
R & & & \rvline & 
  \frac{1}{\sqrt{2}} R \begin{pmatrix} [{e}_{1}]_{\times} X_{1} & [{ e}_{2}]_{\times} X_{1} & [{ e}_{3}]_{\times} X_{1} \end{pmatrix} & t_1^{\perp} & t_2^{\perp} \\[1em]
    & \ddots & & \rvline 
  &\vdots & \vdots  & \vdots \\[1em]
  & & R & \rvline &
  \frac{1}{\sqrt{2}} R \begin{pmatrix} [{e}_{1}]_{\times} X_{5} & [{ e}_{2}]_{\times} X_{5} & [{ e}_{3}]_{\times} X_{5} \end{pmatrix} & t_1^{\perp} & t_2^{\perp} 
\end{pmatrix}_{\! 30 \times 20}.
\end{equation}

The Jacobian matrix $D\Phi_{2}$ has the following block-diagonal form with respect to the standard bases:

\begin{equation}\label{eqn:jac-2}
D \Phi_{2}(\Phi_1(w)) \,\, = \,\, 
\begin{pmatrix}
\frac{\partial \beta} {\partial Z_1} & & & & & \\
& \ddots & & & & \\
& & \frac{\partial \beta} {\partial Z_5} & & & \\
& & & \frac{\partial \beta} {\partial \tilde Z_1} & & \\
& & & & \ddots & \\
& & & & & \frac{\partial \beta} {\partial \tilde Z_5}
\end{pmatrix}_{\!20 \times 30}.
\end{equation}

\noindent Here, \eg $\frac{\partial \beta}{\partial Z_1}$ denotes the $2 \times 3$ Jacobian matrix of $\beta(Z_1) = \begin{pmatrix} \frac{(Z_1)_1}{(Z_1)_3} & \frac{(Z_1)_2}{(Z_1)_3} \end{pmatrix}^{\!\!\top}$ with respect to $Z_1 $. Explicitly,
\begin{equation*} \label{eq:easy-block}
 \frac{\partial \beta}{\partial Z_1} =
 \begin{pmatrix}
 \frac{1}{(Z_1)_3} & 0 & \frac{-(Z_1)_1}{(Z_{1})_3^{2}} \\[8pt]
 0 & \frac{1}{(Z_1)_3} & \frac{-(Z_1)_2}{(Z_{1})_3^{2}}
 \end{pmatrix}
\end{equation*}
and likewise for the other blocks.
In \eqref{eqn:jac-2}, the Jacobian is evaluated at $\Phi(w)$, \ie $Z_1 = X_1, \ldots, Z_5 = X_5$ and $\tilde Z_1 = R X_1 + t, \ldots, \tilde Z_5 = R X_5 + t$.
Multiplying \eqref{eqn:jac-1} with \eqref{eqn:jac-2} and then inverting gives the $20 \times 20$ matrix $(D\Phi(w))^{-1}$.

Next consider differential of the epipolar map, \ie the $5 \times 20$ matrix $D\Psi(w)$.
Here $\Psi$ factors as the coordinate projection $(X_1, \ldots, X_5, R, t) \mapsto (R,t)$ followed by the map $(R,t) \mapsto R[t]_{\times}$.  
Of course, the Jacobian of the projection is 
\begin{equation*}
\begin{pmatrix} 0_{5 \times 15} & I_{5 \times 5} \end{pmatrix}.
\end{equation*}
As for $(R, t) \mapsto R[t]_{\times}$, if we express its Jacobian  so that the rows correspond to the non-orthonormal basis \eqref{eq:not-orthonormal} for the tangent space $T(\mathcal{E}, R[t]_{\times})$, then we simply get $I_5$.  
Then re-expressing this in terms of an orthonormal basis for the tangent space, we need to multiply by a positive-definite square root $G^{1/2}$ for the $5 \times 5$ Grammian matrix in \eqref{eq:crazy-grammian}.

All together, the product $D\Psi(w) \circ (D\Phi(w))^{-1}$ is computed by multiplying \eqref{eqn:jac-2} with \eqref{eqn:jac-1} (in that order); inverting the product; selecting the last 5 rows of the inverse; and finally multiplying on the left by $G^{1/2}$.  The condition number of the solution map is the largest singular value of the resulting $ 5 \times 20$ matrix.  This finishes Proposition~\ref{prop:formula-E}. 
\end{proof}

\smallskip

Before proceeding, we record an easy fact that will be useful in Section~\ref{sec:sec4_2}.

\begin{remark} \label{rem:kernel}
The kernel of the $2 \times 3$ matrices $\frac{\partial \beta}{\partial Z_i}$ and $\frac{\partial \beta}{\partial \tilde Z_i}$ in \eqref{eqn:jac-2} are spanned by $Z_i$ and $\tilde Z_i$ respectively.
\end{remark}

\smallskip

\subsection{Proof of Proposition~\ref{prop:formula-F}}

\begin{proof}
This is very similar to Proposition~\ref{prop:formula-E}. 
Uniqueness of the reconstruction map is by Lemma~\ref{lem:first}.
We obtain explicit Jacobian formulas by first factoring $\Phi = \Phi_2 \circ \Phi_1$ where $\Phi_1 : \mathbb{R}^7 \times (\mathbb{R}^3)^{\times 7} \rightarrow (\mathbb{R}^3 \times \mathbb{R}^3)^{\times 7}$ is given by 
\begin{equation*}
\Phi_1(b, X_1, \ldots, X_7) = ((X_1, M(b) \begin{pmatrix} X_1 \\ 1 \end{pmatrix}), \ldots, (X_7, M(b) \begin{pmatrix} X_7 \\ 1 \end{pmatrix}))
\end{equation*}
and $\Phi_2 : (\mathbb{R}^3 \times \mathbb{R}^3)^{\times 7} \dashrightarrow (\mathbb{R}^2 \times \mathbb{R}^2)^7$ is given by 
\begin{equation*}
\Phi_2((Z_1, \tilde Z_1), \ldots, (Z_7, \tilde Z_7)) = ((\beta(Z_1), \beta(\tilde Z_1)), \ldots, (\beta(Z_7), \beta(\tilde Z_7))).
\end{equation*}
The chain rule gives $D\Phi(w) = D\Phi_2(w) \circ D\Phi_1(w)$.  Here all spaces involved in the forward map are Euclidean spaces, so we use the standard orthonormal bases to write down the matrices.

The first matrix $D\Phi_1(w)$ is $42 \times 28$.  
Ordering its columns according to $\delta X_1, \ldots, \delta X_7, \delta b$ and its rows according to $\delta Z_1, \ldots, \delta Z_7, \delta \tilde Z_1, \ldots, \delta \tilde Z_7$, it reads

\begin{equation} \label{eq:large-displaye}
D\Phi_{1}(w)  = 
  \begin{pmatrix}
  & &  & \rvline & & & & & \\
  & &  & \rvline & & & & & \\
  &  \bigeye_{21 \times 21} & &  \rvline & \hspace{8em} \bigzero_{21 \times 7} & & & 
   \\ & & &  \rvline & & &  \\[0.7em]
\hline 
& & & \rvline &&& \\[-0.5em]
M(b)(1:3,1:3) & & & \rvline & \frac{\partial M(b)}{\partial b_1} \begin{pmatrix} X_1 \\ 1 \end{pmatrix} & \cdots & \frac{\partial M(b)}{\partial b_7} \begin{pmatrix} X_1 \\ 1  \end{pmatrix} \\[1em]
    & \ddots & & \rvline 
  &\vdots & & \vdots \\[1em]
  & & M(b)(1:3,1:3) & \rvline & \frac{\partial M(b)}{\partial b_1} \begin{pmatrix} X_7 \\ 1  \end{pmatrix}  & \cdots & \frac{\partial M(b)}{\partial b_7} \begin{pmatrix} X_7 \\ 1  \end{pmatrix} \\[1em]
\end{pmatrix}_{\! 42 \times 28}
\end{equation}

\noindent The bottom-left $21 \times 21$ submatrix is block-diagonal with seven $3 \times 3$  blocks, each of which is $M(b)(1:3, 1:3)$ denoting the first three columns of $M(b)$.
In the bottom-right $21 \times 7$ submatrix, note that each matrix $\frac{\partial M(b)}{\partial b_i}$ is zero in all but one entry where it takes the value of $1$.

The second Jacobian matrix $D\Phi_2(\Phi_1(w))$ is $28 \times 42$.  
It is block-diagonal with fourteen blocks each of size $3 \times 2$, analogously to \eqref{eqn:jac-2} with Remark~\ref{rem:kernel} still applying:
\begin{equation}\label{eqn:jac-2-second}
D \Phi_{2}(\Phi_1(w)) \,\, = \,\, 
\begin{pmatrix}
\frac{\partial \beta} {\partial Z_1} & & & & & \\
& \ddots & & & & \\
& & \frac{\partial \beta} {\partial Z_7} & & & \\
& & & \frac{\partial \beta} {\partial \tilde Z_1} & & \\
& & & & \ddots & \\
& & & & & \frac{\partial \beta} {\partial \tilde Z_7}
\end{pmatrix}_{\!28 \times 42}.
\end{equation}
Multiplying \eqref{eq:large-displaye} with \eqref{eqn:jac-2-second} and inverting the product gives the $28 \times 28$ matrix $(D\Phi(w))^{-1}$.

Next we consider the differential of the epipolar map, \ie the $7 \times 28$ matrix $D \Psi(w)$. 
Here $\Psi$ factors as the coordinate projection 
$(X_1, \ldots, X_7, b) \mapsto b$ followed by the map $b \mapsto F(b)$ given by \eqref{eq:fund-b}.
Of course, the Jacobian of the projection is
\begin{equation*}
\begin{pmatrix}
0_{7 \times 21} & I_{7 \times 7}
\end{pmatrix}.
\end{equation*}
The Jacobian matrix of $b \mapsto F(b)$ is simply $I_7$, if we express it with respect to bases so that the rows correspond to the non-orthonormal basis \eqref{eq:F-first-basis} for the tangent space $T(\mathcal{F}, F(b))$. 
Re-expressing it in terms of an orthonormal basis for the tangent space, we need to multiply by a positive-definite square root $G^{1/2}$ for the $7 \times 7$ Grammian matrix in \eqref{eq:less-crazy-grammian}.

All together, the product $D\Psi(w) \circ (D\Phi(w))^{-1}$ is computed by multiplying \eqref{eq:large-displaye} with \eqref{eqn:jac-2-second} (in that order); inverting the product; selecting the last $7$ rows of the inverse; and finally multiplying on the left by $G^{1/2}$. 
The condition number of the solution map is the largest singular value of the resulting $7 \times 18$ matrix. 
This finishes Proposition~\ref{prop:formula-F}.
\end{proof}

\medskip
\medskip

\section{Proofs for ``Section~\ref{sec:ill-pose-world}: Ill-Posed World Scenes"}
\label{sec:sec4_2}
In this section, we characterize the degenerate world scenes for the 5-point and 7-point minimal problems in terms of quadric surfaces in $\mathbb{R}^3$.

\begin{remark} \label{rem:careful}
Our definition of ``quadric surface" given in the main body in Eq.~\eqref{eq:def-quad} includes the case of affine planes (which occur when the top-left $3 \times 3$ submatrix of $Q$ in \eqref{eq:def-quad} is zero).
This choice is deliberate, and needed for full accuracy in Theorems~\ref{thm:illposed-world} and \ref{thm:F-ill-posed-world}.  Likewise, by ``circle" in the statement of Theorem~\ref{thm:illposed-world} we mean a plane conic defined by 
\begin{equation} \label{eq:def-circle}
    \left\{ (a_1, a_2) \in \mathbb{R}^2 : \begin{pmatrix} a & 1 \end{pmatrix} Q \begin{pmatrix} a \\ 1 \end{pmatrix} = 0 \right\},
\end{equation}
for some symmetric matrix $Q \in \mathbb{R}^{3 \times 3}$ such that $Q_{11} = Q_{22}$ and $Q_{12} = Q_{21} =0$.   Eq.~\eqref{eq:def-circle} includes the cases of affine lines and points, interpreted as circles of radius $\infty$ and $0$ respectively.
\end{remark}

\smallskip

\subsection{Proof of Theorem~\ref{thm:illposed-world}}

\begin{proof}
The assumption that the forward map $\Phi$ is defined at the world scene $w$ implies that the points $X_i$ and $RX_i + t$ in $\mathbb{R}^3$ do not have a vanishing third coordinate for each $i = 1, \ldots, 5$.

Let $\Delta := \begin{pmatrix} \delta X_1 & \ldots & \delta X_5 & \delta r_1 & \delta r_2 & \delta r_3 & \delta t_1 & \delta t_2 \end{pmatrix}^{\!\top} \in \mathbb{R}^{20}$.  Our task is characterize for which scenes $w$ does there a nonzero solution to the linear system $D\Phi(w) \Delta = 0$, where the variable is $\Delta$.  Let us massage this equation repeatedly.

Fistly using the factorization $D\Phi = D\Phi_2 \circ D\Phi_1$ from the previous section, the explicit Jacobian matrix expressions \eqref{eqn:jac-1} and \eqref{eqn:jac-2}, and Remark~\ref{rem:kernel} characterizing the kernel of $D\Phi_2$, we equivalently have the system of equations
\begin{equation} \label{eq:good-stuff}
\begin{cases}
    \delta X_i \,\, \propto \,\, X_i  \quad \textup{ for all $i$,} \\
  R(\delta X_i) + R[s]_{\times} X_i + t^{\perp}_* \,\, \propto \,\, RX_i + t \quad \textup{ for all $i$}.
  \end{cases} 
\end{equation}
Here `$\propto$' indicates a proportionality, $s := \frac{1}{\sqrt{2}} \begin{pmatrix} \delta r_1 & \delta r_2 & \delta r_3 \end{pmatrix}^{\!\top} \in \mathbb{R}^3$ and $t^{\perp}_{*} := \delta t_1 t^{\perp}_1 + \delta t_2 t^{\perp}_2 \in \mathbb{R}^3$.  
We need to characterize when \eqref{eq:good-stuff} admits a nonzero solution in the variables $\delta X_1, \ldots, \delta X_5, s, t_*^{\perp}$.

Let $\lambda_i \in \mathbb{R}$ denote the proportionality constants in the first line of \eqref{eq:good-stuff}, and likewise $\mu_i \in \mathbb{R}$  for the second line.
Then the first line of \eqref{eq:good-stuff} reads $\delta X_i = \lambda_i X_i$.  Substituting this into the second line of \eqref{eq:good-stuff} gives 
\begin{equation} \label{eq:real-good}
\lambda_i R X_i + R [s]_{\times} X_i + t^{\perp}_{*} = \mu_iRX_i + \mu_i t \quad \textup{for all $i$}.
\end{equation}
We need to characterize when \eqref{eq:real-good} admits a solution in  $\lambda_1, \ldots, \lambda_5, \mu _1, \ldots, \mu_5, s, t_*^{\perp}$  nonzero  in $\lambda_1, \ldots, \lambda_5, s, t_*^{\perp}$. (Note $\lambda_i \neq 0  \Leftrightarrow   \delta X_i \neq 0$ since  $(X_i) \neq 0$.)
It is the same to ask the solution to \eqref{eq:real-good} be not all-zero in $\lambda_1, \ldots, \lambda_5, \mu_1, \ldots, \mu_5, s, t_*^{\perp}$ (with $\mu$'s included), for if $\lambda_1, \ldots, \lambda_5, s, t$ are all zero then \eqref{eq:real-good} implies $\mu_i = 0$, since $(RX_i + t)_3 \neq 0$. 

We can simplify Eq.~\eqref{eq:real-good} by changing notation as follows:
\begin{equation}\label{eq:change-coords}
\begin{cases}
    \lambda_i  & \longleftarrow  \quad  \lambda_i - \mu_i \\
    \mu_i  & \longleftarrow  \quad \mu_i \\
    t_*^{\perp}  & \longleftarrow  \quad -t_*^{\perp} \\
    [s]_{\times} & \longleftarrow  \quad R[s]_{\times}R^{\top} \\
    X_i  & \longleftarrow  \quad R X_i \\
    R & \longleftarrow \quad I_3.
\end{cases}
\end{equation}
The first four lines in \eqref{eq:change-coords} describe an invertible linear change of variables for \eqref{eq:real-good}.  This does not affect whether there exists a nonzero solution to \eqref{eq:real-good}.
The last lines in \eqref{eq:change-coords} rotate the world points $X_1, \ldots, X_5$, and this operation does not affect whether there exists a quadric surface in $\mathbb{R}^3$ satisfying the claimed condition in Theorem~\ref{thm:illposed-world}.  So indeed, the transformation \eqref{eq:change-coords} is without loss of generality.  In updated notation, \eqref{eq:real-good} reads

\begin{equation} \label{eq:super}
    \lambda_i X_i  \, + \, [s]_{\times} X_i = \mu_i t + t_{*}^{\perp} \quad \textup{for all $i$}.
\end{equation}

Applying a further rotation in $\mathbb{R}^3$, we can assume that $t = e_3$ and $t_1^{\perp} = e_1$ and $t_2^{\perp} = e_2$ in \eqref{eq:super} without loss of generality.
Since $t$ and $t^{\perp}_*$ are perpendicular, we can eliminate $\mu_i$ from \eqref{eq:super}, because it is equivalent to equate the first two coordinates of both sides of \eqref{eq:super}:

\begin{equation} \label{eq:fantastic} 
    \lambda_i \! \begin{pmatrix} (X_i)_1 \\[1pt] (X_i)_2 \end{pmatrix} \,\, + \,\, \begin{pmatrix} -s_3 (X_i)_2 + s_2 (X_i)_3  \\[1pt] s_3 (X_i)_1 - s_1 (X_i)_3  \end{pmatrix} 
    = \begin{pmatrix} \delta t_1 \\[1pt] \delta t_2 \end{pmatrix} \quad \textup{for all $i$}.
\end{equation}
We need to characterize when the system \eqref{eq:fantastic} has a nonzero solution in $\lambda_1, \ldots, \lambda_5, s_1, s_2, s_3, \delta t_1, \delta t_2$.

Rewrite \eqref{eq:fantastic} as follows:
\begin{equation} \label{eq:excellent}
    \lambda_i \! \begin{pmatrix} (X_i)_1 \\[1pt] (X_i)_2 \end{pmatrix} \, + \, s_3 \! \begin{pmatrix} -(X_i)_2 \\[1pt] (X_i)_1 \end{pmatrix} \, + \, (X_i)_3  \begin{pmatrix} s_2  \\[1pt] -s_1 \end{pmatrix} \, - \, \begin{pmatrix} \delta t_1 \\[1pt] \delta t_2 \end{pmatrix} \, = \, 0 \quad \textup{for all $i$}.
\end{equation}
Now eliminate $\lambda_i$ from \eqref{eq:excellent}.
Indeed, we claim that  \eqref{eq:excellent} admits a nonzero solution in $\lambda_1, \ldots, \lambda_5, s_1, s_2, s_3, \delta t_1, \delta t_2$ if and only if the system  obtained by multiplying \eqref{eq:excellent} on the left by $\begin{pmatrix} -(X_i)_2 & (X_i)_1 \end{pmatrix}$ (each $i$) admits a nonzero solution in $s_1, s_2, s_3, \delta t_1, \delta t_2$.  That is, we claim we can reduce to:
\begin{equation} \label{eq:amazing}
    s_3((X_i)_1^2 + (X_i)_2^2) \, - \, s_2 (X_i)_2 (X_i)_3 \, - \, s_1 (X_i)_1 (X_i)_3 \, + \, \delta t_1 (X_i)_2 \, - \, \delta t_2 (X_i)_1 \, = \, 0 \quad \textup{for all $i$}.
\end{equation}
To justify this, note that if $\begin{pmatrix} (X_i)_1 \\ (X_i)_2 \end{pmatrix} \neq 0$ for each $i$, then the vectors 
$\begin{pmatrix} (X_i)_1 \\ (X_i)_2 \end{pmatrix}$ and $\begin{pmatrix} -(X_i)_2 \\ (X_i)_1 \end{pmatrix}$ give an orthogonal basis for $\mathbb{R}^2$ for each $i$.
In this case, changing to this basis from the standard basis,  \eqref{eq:excellent} becomes \eqref{eq:amazing} together with 
\begin{equation} \label{eq:elim-lam}
    \lambda_i ((X_i)_1^2 + (X_i)_2^2) \, + \, s_2 (X_i)_1 (X_i)_3 \, - \, s_1 (X_i)_2 (X_i)_3 \, - \delta t_1 (X_i)_1 \, - \, \delta t_2 (X_i)_2 \, = \, 0 \quad \textup{for all $i$}.
 \end{equation}
Clearly \eqref{eq:elim-lam} determines $\lambda_i$ in terms of $s_1, \ldots, \delta t_2$, so  \eqref{eq:amazing} and \eqref{eq:elim-lam} have a nonzero solution in $\lambda_1, \ldots, \delta t_2$ if and only if \eqref{eq:amazing} does in $s_1, \ldots, \delta t_2$.
Meanwhile, if 
$\begin{pmatrix} (X_i)_1 \\ (X_i)_2 \end{pmatrix} = 0$ for some $i$, then both \eqref{eq:excellent} and \eqref{eq:amazing} admit nonzero solutions: for \eqref{eq:excellent}, we can explicitly set $\lambda_i = 1$ and all other nine variables equal to $0$; for \eqref{eq:amazing}, once we remove the $i$-th equation (which is trivial) this leaves an undetermined linear system of four equations in five unknowns, which must have a nonzero solution. 
Thus, we need to characterize when \eqref{eq:amazing} admits a nonzero solution in $s_1, s_2, s_3, \delta t_1, \delta t_2$.

To complete the proof, we argue that we simply need to geometrically reinterpret  \eqref{eq:amazing}.
Letting $z_1, z_2, z_3$ be variables on $\mathbb{R}^3$, consider the following linear subspace of inhomogeneous quadratic polynomials:
\begin{equation}\label{eq:subspace}
     \operatorname{span}\!\left\{z_1^2 + z_2^2, \, z_2 z_3, \, z_1 z_3, \, z_2, \, z_1 \right\} \,\, \subseteq \,\, \mathbb{R}[z_1, z_2, z_3] 
\end{equation}
Then \eqref{eq:amazing} states that there exists a quadric surface $\mathcal{Q} \subseteq \mathbb{R}^3$, cut out by some nonzero polynomial in \eqref{eq:subspace}, passing through the points $X_1, \ldots, X_5 \in \mathbb{R}^3$.  
However, \eqref{eq:subspace} precisely describes the quadric surfaces in $\mathbb{R}^3$ that contain the baseline $\operatorname{Span}(-R^{\top}t) = \operatorname{Span}(e_3) \subseteq \mathbb{R}^3$, and are such that intersecting the quadric with any affine plane in $\mathbb{R}^3$ which is perpendicular to the baseline results in a circle (with the caveats of Remark~\ref{rem:careful} applying).  Indeed \eqref{eq:subspace} exactly corresponds to the subspace of $4 \times 4$ real symmetric matrices of the following form:
\begin{equation*}
  Q =   \begin{pmatrix}
    q_1 & 0 & q_2 & q_3 \\
    0 & q_1 & q_4 & q_5 \\
    q_2 & q_4 & 0 & 0 \\
    q_3 & q_5 & 0 & 0 
    \end{pmatrix} \quad \textup{for some }q_1, \ldots, q_5 \in \mathbb{R}.
\end{equation*}
Precisely such matrices give quadrics containing  $\operatorname{Span}(\mathbb{R}^3)$ (because of the zero bottom-right $2 \times 2$ submatrix), and also intersecting planes parallel to $\operatorname{Span}(e_1, e_2)$ in circles (because of the top-right $2 \times 2$ submatrix).
This finishes Theorem~\ref{thm:illposed-world}.
\end{proof}

\smallskip

\subsection{Proof of Theorem~\ref{thm:F-ill-posed-world}}
\begin{proof}
This argument is similar to the proof of Theorem~\ref{thm:illposed-world}, although somewhat more computational.
Here the forward map $\Phi$ is given by Eq.~\eqref{eq:F-forward}, and the assumption that $\Phi$ is defined at $w$ implies that the points $X_i$ and $M(b)\begin{pmatrix} X_i \\ 1 \end{pmatrix}$ in $\mathbb{R}^3$ do not have a vanishing third coordinate for each $i= 1, \ldots, 7$.
Let $\Delta := \begin{pmatrix} \delta X_1 & \ldots & \delta X_7 & \delta b_1 & \ldots & \delta b_7 \end{pmatrix}^{\!\top} \in \mathbb{R}^{28}$.  
Our task is characterize for which scenes $w$ does there a nonzero solution to the linear system $D\Phi(w) \Delta = 0$, where the variable is $\Delta$.

Using the factorization $D\Phi = D\Phi_2 \circ D\Phi_1$ from the previous section, the explicit Jacobian matrix expressions \eqref{eq:large-displaye} and \eqref{eqn:jac-2-second}, and  Remark~\ref{rem:kernel} characterizing the kernel of $D\Phi_2$, we equivalently have the system 
\begin{equation} \label{eq:F-proportion}
    \begin{cases}
    \delta X_i \,\, \propto \,\, X_i \quad \textup{for all } i \\[8pt]
    \begin{pmatrix} 
    1 & b_1 & b_2 \\
    b_4 & b_5 & b_6 \\
    0 & 0 & 0
    \end{pmatrix} \delta X_i \, + \, \begin{pmatrix} 
    0 & \delta b_1 & \delta b_2 & \delta b_3 \\
    \delta b_4 & \delta b_5 & \delta b_6 & \delta b_7 \\
    0 & 0 & 0 & 0
    \end{pmatrix} \begin{pmatrix} 
    X_i \\
    1
    \end{pmatrix} \,\, \propto \,\, \begin{pmatrix} 1 & b_1 & b_2 & b_3 \\
    b_4 & b_5 & b_6 & b_7 \\
    0 & 0 & 0 & 1
    \end{pmatrix} \begin{pmatrix} 
    X_i \\
    1
    \end{pmatrix} \quad \textup{for all } i.
    \end{cases}
\end{equation}
Comparing the third coordinate of both sides, we see that the proportionality constant in the second line of \eqref{eq:F-proportion} must be $0$ for each $i$.
Let $\lambda_i \in \mathbb{R}$ be the proportionality constant in the first line of \eqref{eq:F-proportion} for each $i$.  
Rewrite \eqref{eq:F-proportion} as 
\begin{equation} \label{eq:on-track}
    \begin{cases}
    \delta X_i \,\, = \,\, \lambda_i X_i \quad \textup{for all $i$}, \\[8pt]
    \begin{pmatrix}
    1 & b_1 & b_2 \\
    b_4 & b_5 & b_6
    \end{pmatrix} \delta X_i  \, + \, \begin{pmatrix} 
    0 & \delta b_1 & \delta b_2 & \delta b_3 \\
    \delta b_4 & \delta b_5 & \delta b_6 & \delta b_7 
    \end{pmatrix} \begin{pmatrix} 
    X_i \\
    1
    \end{pmatrix} \,\, = \,\, 0 \quad \textup{for all $i$}.
    \end{cases}
\end{equation}
In \eqref{eq:on-track}, we substitute the first line into the first term of the second line and we rearrange the second term in the second line:
\begin{equation} \label{eq:awesome}
   \lambda_i \begin{pmatrix}
    1 & b_1 & b_2 \\
    b_4 & b_5 & b_6
    \end{pmatrix}  X_i  \, + \, \begin{pmatrix} (X_i)_2 & (X_i)_3 & 1 & 0 & 0 & 0 & 0 \\
    0 & 0 & 0 & (X_i)_1 & (X_i)_2 & (X_i)_3 & 1
    \end{pmatrix} \delta b \,\, = \,\, 0 \quad \textup{for all $i$}.
\end{equation}
We need to characterize when the system \eqref{eq:awesome} has a nonzero solution in $\lambda_1, \ldots, \lambda_7, \delta b_1, \ldots, \delta b_7$.
(That this is equivalent to the characterization for \eqref{eq:F-proportion} uses that $\delta X_i \neq 0 \Leftrightarrow \lambda_i \neq 0$, because $\delta X_i = \lambda_i X_i$ and $(X_i)_3 \neq 0$.)

Now we eliminate $\lambda_i$ from \eqref{eq:awesome}, following what we did for \eqref{eq:excellent} above.  
Here it is equivalent to multiply \eqref{eq:awesome} on the left by the transpose of
\begin{equation*} \label{eq:bX-perp}
    \begin{pmatrix}
    -b_4(X_i)_1 - b_5(X_i)_2 - b_6(X_i)_3 \\
    (X_i)_1 + b_1(X_i)_2 + b_2(X_i)_3
    \end{pmatrix},
\end{equation*}
which is normal to $\begin{pmatrix} 1 & b_1 & b_2 \\ b_4 & b_5 & b_6 \end{pmatrix} X_i$.  
Then we need to characterize when the resulting $7 \times 7$ linear system has a nonzero solution in  $\delta b_1, \ldots, \delta b_7$.
So we reduce to:
\begin{equation} \label{eq:i-like-this}
    \begin{pmatrix}
    -b_4(X_i)_1(X_i)_2 - b_5(X_i)_2^2 - b_6(X_i)_2(X_i)_3 \\[2pt]
    -b_4(X_i)_1(X_i)_3 - b_5(X_i)_2(X_i)_3 - b_6(X_i)_3^2 \\[2pt]
    -b_4(X_i)_1 - b_5(X_i)_2 - b_6(X_i)_3 \\[2pt]
    (X_i)_1^2 + b_1(X_i)_1 (X_i)_2 + b_2(X_i)_1 (X_i)_3 \\[2pt]
    (X_i)_1(X_i)_2 + b_1(X_i)_2^2 + b_2(X_i)_2(X_i)_3 \\[2pt]
    (X_i)_1(X_i)_3 + b_1(X_i)_2(X_i)_3 + b_2(X_i)_3^2 \\[2pt]
    (X_i)_1 + b_1(X_i)_2 + b_2(X_i)_3
    \end{pmatrix}^{\!\! \top} \,\, \begin{pmatrix} 
    \delta b_1 \\[2pt]
    \delta b_2 \\[2pt]
    \delta b_3 \\[2pt]
    \delta b_4 \\[2pt]
    \delta b_5 \\[2pt]
    \delta b_6 \\[2pt]
    \delta b_7
    \end{pmatrix} \,\, = \,\, 0 \quad \textup{for each $i$}.
\end{equation}

To complete the proof, we only need to reinterpret \eqref{eq:i-like-this} geometrically. 
Let $z_1, z_2, z_3$ be variables on $\mathbb{R}^3$.  Then \eqref{eq:i-like-this} states that there exists a quadric surface $\mathcal{Q} \subseteq \mathbb{R}^3$ (with the caveats of Remark~\ref{rem:careful}), passing through  $X_1, \ldots, X_7 \in \mathbb{R}^3$ and cut out by a nonzero element of the following subspace of quadratic polynomials:
\begin{footnotesize}
\begin{align} \label{eq:F-subspace}
    \operatorname{span}\{  b_4z_1z_2 + b_5z_2^2 + b_6 z_2 z_3, 
     b_4z_1z_3 + b_5z_2z_3 + b_6 z_3^2, 
     b_4 z_1 + b_5 z_2 + b_6 z_3, 
     z_1^2 + b_1z_1z_2 + b_2 z_1 z_3, & \nonumber \\
     z_1z_2 + b_2 z_2^2 + b_2 z_2 z_3, 
     z_1z_3 + b_1z_2z_3 + b_2 z_3^2, 
     z_1 + b_1 z_2 + b_2 z_3 
    \} & \quad \subseteq \quad \mathbb{R}[z_1,z_2,z_3].
\end{align}
\end{footnotesize}

\noindent We just need to verify  the subspace \eqref{eq:F-subspace} consists precisely of the inhomogeneous quadratic polynomials vanishing on all of the baseline.  
Let check this by direct calculation over the next three paragraphs.

The center of the first camera $\begin{pmatrix} I_3 & 0 \end{pmatrix}$ is $\begin{pmatrix} 0 & 0 & 0 \end{pmatrix}^{\top} \in \mathbb{R}^3$.  By Cramer's rule, the center of the second camera $M(b)$ is the following point at infinity:
\begin{align*}
& \begin{pmatrix}
-\det \begin{pmatrix} b_1 & b_2 & b_3 \\ b_5 & b_6 & b_7 \\ 0 & 0 & 1 \end{pmatrix} & 
\det \begin{pmatrix} 1 & b_2 & b_3 \\ b_4 & b_6 & b_7 \\ 0 & 0 & 1 \end{pmatrix} & 
- \det \begin{pmatrix} 1 & b_1 & b_3 \\ b_4 & b_5 & b_7 \\ 0 & 0 & 1 \end{pmatrix} & 
\det \begin{pmatrix} 1 & b_1 & b_2 \\ b_4 & b_5 & b_6 \\ 0 & 0 & 0 \end{pmatrix}
\end{pmatrix}^{\!\! \top} \nonumber \\[8pt]
& \quad \quad \quad  \quad = \,\, \begin{pmatrix} 
b_2b_5 - b_1b_6 & -b_2b_4 + b_6 & b_1b_4 - b_5 & 0
\end{pmatrix}^{\!\top} \quad \in \quad \mathbb{P}^3.
\end{align*}
Thus the baseline is
\begin{equation} \label{eq:baseline}
    \{ \lambda \begin{pmatrix} b_2b_5 - b_1b_6 &
    -b_2b_4 + b_6 &
    b_1b_4 - b_5
    \end{pmatrix}^{\! \top}  : \lambda \in \mathbb{R} \} \quad \subseteq \quad \mathbb{R}^3.
\end{equation}
Substituting \eqref{eq:baseline} into  \eqref{eq:F-subspace} shows that all these polynomials indeed vanish identically on the baseline.

Next, note that the seven polynomials in \eqref{eq:F-subspace} are linearly independent in $\mathbb{R}[z_1, z_2, z_3]$.
Indeed, suppose $\alpha \in \mathbb{R}^7$ satisfies
\begin{multline} \label{eq:alpha-cond}
    \alpha_1 \left(b_4z_1z_2 + b_5z_2^2 + b_6 z_2 z_3\right) + 
    \alpha_2 \left( b_4z_1z_3 + b_5z_2z_3 + b_6 z_3^2 \right) +
    \alpha_3 \left(b_4 z_1 + b_5 z_2 + b_6 z_3\right) +
    \alpha_4 \left(z_1^2 + b_1z_1z_2 + b_2 z_1 z_3\right)  \\[1.5pt] +
    \alpha_5 \left(z_1z_2 + b_2 z_2^2 + b_2 z_2 z_3\right) + 
    \alpha_6 \left( z_1z_3 + b_1z_2z_3 + b_2 z_3^2 \right) +
    \alpha_7 \left(z_1 + b_1 z_2 + b_2 z_3 \right) \,\, = \,\, 0  \quad \in \quad \mathbb{R}[z_1, z_2, z_3].
\end{multline}
From the coefficient of $z_1^2$, we see $\alpha_4 = 0$.
Since we are assuming that $M(b)$ has rank $3$, it follows that $\begin{pmatrix} 1 & b_1 & b_2 \\ b_4 & b_5 & b_6 \end{pmatrix}$ has rank $2$. 
So implies that 
third and seventh polynomials in \eqref{eq:alpha-cond} are linearly independent, and since their monomial support is disjoint from that of the other polynomials in \eqref{eq:alpha-cond}, we have $\alpha_3 = \alpha_7 = 0$.
This leaves the first, second, fifth and sixth polynomials in \eqref{eq:alpha-cond}.
Writing out what remains in terms of the monomials $z_1z_2, z_1z_3, z_2^2, z_2z_3, z_3^2$ gives
\begin{equation} \label{eq:coeff-mat}
    \begin{pmatrix}
    b_4 & 0 & 1 & 0 \\[1pt]
    0 & b_4 & 0 & 1 \\[1pt]
    b_5 & 0 & b_1 & 0 \\[1pt]
    b_6 & b_5 & b_2 & b_1 \\[1pt]
    0 & b_6 & 0 & b_2
    \end{pmatrix} 
    \begin{pmatrix} 
    \alpha_1 \\[1pt]
    \alpha_2 \\[1pt]
    \alpha_5 \\[1pt]
    \alpha_6
    \end{pmatrix} \quad = \quad 0.
\end{equation}
Actually, assumption that $\operatorname{rank}(M(b)) = 3$  implies that the coefficient matrix in \eqref{eq:coeff-mat} has rank $4$. 
Indeed, one verifies using computer algebra, \eg Macaulay2 \cite{M2}, that in the ring $\mathbb{R}[z_1,z_2,z_3]$ the radical of the ideal generated by the $4 \times 4$ minors of the matrix in \eqref{eq:coeff-mat} equals the ideal generated by the $3 \times 3$ minors of $M(b)$.  This forces $\alpha_1 = \alpha_2 = \alpha_5 = \alpha_6 = 0$.

Last, notice that requiring a quadric in $\mathbb{R}^3$ to contain a given line is a codimension $3$ condition on the quadric.  Indeed by projective symmetry, the codimension is independent of the specific choice of fixed line; and if we choose the $z_3$-axis, this amounts to requiring the vanishing of the bottom-left $2 \times 2$ submatrix of the quadric's $4 \times 4$ symmetric coefficient matrix.

Combining the last three pagragraphs,  \eqref{eq:F-subspace} consists of the  quadratic polynomials vanishing on the baseline as desired.
 \end{proof}

\medskip
\medskip

\section{Proofs for ``Section~\ref{sec:ImageData}: Ill-Posed Image Data"}
\label{sec:sec4_3}
In this section, we describe the locus of ill-posed image data for the 5-point and 7-point minimal problems in terms of the X.5-point curves.  
The logic is to use the classical epipolar relations in multiview geometry to relate these minimal problems to the task of intersecting a fixed complex projective algebraic variety with a varying linear subspace of complementary dimension.  
Then we apply tools from computational algebraic geometry, which were developed to analyze this task \cite{sturmfels2017hurwitz, burgisser2017condition}.

\smallskip

\subsection{Background from algebraic geometry}
Consider complex projective space $\mathbb{P}^n_{\mathbb{C}}$.
The set of subspaces of $\mathbb{P}^n_{\mathbb{C}}$ of codimension $d$ is naturally an irreducible projective algebraic variety, called the \nolinebreak \textit{Grassmannian}:
\begin{equation*}
\operatorname{Gr}(\mathbb{P}^{n-d}_{\mathbb{C}}, \mathbb{P}^{n}_{\mathbb{C}}) = \left\{ L \subseteq \mathbb{P}_{\mathbb{C}}^n : \dim(L) = n-d \right\}\!.
\end{equation*}
We use two classic coordinate systems for the Grassmannian.  If a point $L \in \operatorname{Gr}(\mathbb{P}^{n-d}_{\mathbb{C}}, \mathbb{P}^{n}_{\mathbb{C}})$ is written as the kernel of a full-rank matrix $M \in \mathbb{C}^{d \times (n+1)}$, then the \textit{primal Pl\"ucker coordinates} for $L$ are defined to be the maximal minors of $M$:
\begin{equation} \label{eq:primal-plucker-def}
p(L) \, = \, \left( p_{\mathcal{I}}(L) = \det(M(:,\mathcal{I})) \, : \, \mathcal{I} \in \binom{[n+1]}{d} \right)
\end{equation}
This gives a well-defined point in $\mathbb{P}_{\mathbb{C}}^{\binom{n+1}{d}-1}$ independent of the choice of $M$.
Meanwhile, if we write $L$ as the row span of a full-rank matrix $N \in \mathbb{C}^{(n-d+1) \times (n+1)}$, then the \textit{dual Pl\"ucker coordinates} for $L$ are defined to be the maximal minors of $N$:
\begin{equation} \label{eq:dual-plucker}
q(L) \, = \, \left( q_{\mathcal{J}}(L) = \det(N(:,\mathcal{J})) \, : \, \mathcal{J} \in \binom{[n+1]}{n+1-d} \right).
\end{equation}
Again this gives a well-defined point $\mathbb{P}_{\mathbb{C}}^{\binom{n+1}{d}-1}$ independent of the choice of $N$.  
The primal and dual coordinates agree up to permutation and sign flips, namely for each $L \in \operatorname{Gr}(\mathbb{P}_{\mathbb{C}}^{n-d}, \mathbb{P}_{\mathbb{C}}^{n})$ it holds 
\begin{equation} \label{eq:switchplucker}
 \left( p_{\mathcal{I}}(L) : \mathcal{I} \in \binom{[n+1]}{d} \right) = \left( (-1)^{n+|\mathcal{I}|}q_{[n+1] \setminus \mathcal{I}}(L) : \mathcal{I} \in \binom{[n+1]}{d}\right),
\end{equation}
where $|\mathcal{I}| := \sum_{i \in \mathcal{I}} i$.

Next let $X \subseteq \mathbb{P}_{\mathbb{C}}^n$ be an irreducible complex projective algebraic variety  of dimension $d$.  
There exists a positive integer $p$, called the \textup{degree} of $X$, such that for Zariski-generic subspaces of complementary dimension, $L \in \operatorname{Gr}(\mathbb{P}_{\mathbb{C}}^{n-d}, \mathbb{P}_{\mathbb{C}}^n)$, the intersection of $L \cap X$ consists  precisely $p$ reduced (complex) intersection points.  
Here, one says that an intersection point $x \in L \cap X$ is \textup{reduced} if $L \cap T_{x}X$ consists of one point, where $T_{x}X$ is the Zariski tangent space to $X$ at $x$ given by 
\begin{equation*}
T_x X \,\, =  \,\, \left\{ v \in \mathbb{P}_{\mathbb{C}}^n \, : \, \left( \frac{\partial f_i(x)}{\partial x_j} \right)_{\substack{i=1, \ldots, t \\ j = 0, \ldots, n}} \, v \,= \, 0 \right\} \,\, \subseteq \,\, \mathbb{P}_{\mathbb{C}}^n
\end{equation*}
for generators $f_1, \ldots, f_t \in \mathbb{C}[x_0, \ldots, x_n]$ of the prime ideal of $X$.

The \textit{Hurwitz form} of $X$ is  defined to be the set of linear subspaces which are exceptional with respect to the property in the preceding paragraph.  More precisely, it is
\begin{equation*}
\mathcal{H}_X \, = \, \left\{ L \in \operatorname{Gr}(\mathbb{P}_{\mathbb{C}}^{n-d}, \mathbb{P}_{\mathbb{C}}^{n})  \, : \, L \cap X  \textup{ does not consist of } p \textup{ reduced intersection points}\right\} \, \subseteq \, \operatorname{Gr}(\mathbb{P}_{\mathbb{C}}^{n-d}, \mathbb{P}_{\mathbb{C}}^n).
\end{equation*}
We will use the following result in the proofs of  Theorems~\ref{thm:image-E} and \ref{thm:F-image}.
\begin{theorem}  \label{thm:sturmfels}
\!\! \textup{\cite[Thm.~1.1]{sturmfels2017hurwitz}}
Let $X$ be an irreducible subvariety of $\mathbb{P}_{\mathbb{C}}^n$ with dimension $d$, degree $p$ and sectional genus $g$.
Assume that $X$ is not a linear subspace. 
Then $\mathcal{H}_X$ is an irreducible hypersurface in  $\operatorname{Gr}(\mathbb{P}_{\mathbb{C}}^{n-d}, \mathbb{P}_{\mathbb{C}}^n)$, and there exists a homogeneous polynomial $\operatorname{Hu}_X$ in the (primal) Pl\"ucker coordinates for $L \in \operatorname{Gr}(\mathbb{P}_{\mathbb{C}}^{n-d}, \mathbb{P}_{\mathbb{C}}^n)$ such that 
\begin{equation*}
L \in \mathcal{H}_X  \quad \Leftrightarrow \quad \operatorname{Hu}_X\left( p(L) \right) = 0.  
\end{equation*}
Further if the singular locus of $X$ has codimension at least $2$, then the degree of  $\operatorname{Hu}_X$ in Pl\"ucker coordinates is $2p + 2g - 2$.
\end{theorem}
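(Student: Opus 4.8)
The plan is to realize $\mathcal{H}_X$ as the closure of the image of an irreducible incidence variety, read off the hypersurface and irreducibility claims from a dimension count together with the fact that the Pl\"ucker coordinate ring of the Grassmannian is a unique factorization domain, and finally compute the degree by a Riemann--Hurwitz argument on a general curve section of $X$.

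\emph{Step 1 (incidence variety).} Let $X_{\mathrm{sm}} \subseteq X$ be the smooth locus and set
\[
  \mathcal{I} \; = \; \overline{\bigl\{ (x, L) \in X_{\mathrm{sm}} \times \operatorname{Gr}(\mathbb{P}^{n-d}_{\mathbb{C}}, \mathbb{P}^n_{\mathbb{C}}) \; : \; x \in L,\; \dim(L \cap T_x X) \ge 1 \bigr\}}.
\]
Projection to the first factor exhibits $\mathcal{I}$ over $X_{\mathrm{sm}}$ as a bundle whose fiber over $x$ is a Schubert variety (codimension-$d$ subspaces through $x$ meeting the fixed $d$-plane $T_xX$ in positive dimension); Schubert varieties are irreducible, so $\mathcal{I}$ is irreducible, hence so is $\mathcal{H}_X := \overline{\pi_2(\mathcal{I})}$. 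A Schubert-calculus count gives fiber dimension $(n-d)d-1$, so $\dim \mathcal{I} = d + (n-d)d - 1 = \dim \operatorname{Gr}(\mathbb{P}^{n-d}_{\mathbb{C}}, \mathbb{P}^n_{\mathbb{C}}) - 1$. One then checks, using that $X$ is not a linear subspace, that $\pi_2$ is generically finite onto its image (over a general $L \in \mathcal{H}_X$ there is a unique tangency point), whence $\mathcal{H}_X$ is a hypersurface. It remains to match $\mathcal{H}_X$ with the set in the statement: a complementary-dimensional $L$ fails to meet $X$ in $p$ reduced points exactly when it is tangent to $X$ at a smooth point or passes through a singular point, and when $\operatorname{codim}_X X_{\mathrm{sing}} \ge 2$ the locus of $L$ meeting $X_{\mathrm{sing}}$ has codimension $\ge 2$ in $\operatorname{Gr}$ and so contributes no extra component.

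\emph{Step 2 (defining polynomial).} The homogeneous coordinate ring of the Grassmannian in its Pl\"ucker embedding is a UFD, so every irreducible hypersurface is the vanishing locus of a single irreducible homogeneous polynomial in the primal Pl\"ucker coordinates; call this polynomial $\operatorname{Hu}_X$. This gives both the existence of $\operatorname{Hu}_X$ and the equivalence $L \in \mathcal{H}_X \Leftrightarrow \operatorname{Hu}_X(p(L)) = 0$. For the degree, note $\deg \operatorname{Hu}_X = \#(\mathcal{H}_X \cap \ell)$ for a general line $\ell$ lying on the Grassmannian, i.e.\ a general pencil $\{L_t\}_{t \in \mathbb{P}^1}$ of codimension-$d$ subspaces. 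Such a pencil has an axis $B$ of codimension $d+1$ and spans a subspace $A$ of codimension $d-1$, so $Y := X \cap A$ is a general curve section of $X$: by a Bertini argument, using $\operatorname{codim}_X X_{\mathrm{sing}} \ge 2$, $Y$ is a smooth irreducible curve of degree $p$ and genus $g$ (the sectional genus). The pencil restricts on $A$ to the pencil of hyperplanes through $B$, inducing a degree-$p$ morphism $\pi : Y \to \mathbb{P}^1$ whose branch points are exactly the $t$ with $L_t$ tangent to $Y$, equivalently $L_t \in \mathcal{H}_X$. For a general pencil $\pi$ has only simple branching and $\ell$ meets $\mathcal{H}_X$ transversally, so Riemann--Hurwitz gives $2g - 2 = p(-2) + \#\{\text{branch points}\}$, i.e.\ $\deg \operatorname{Hu}_X = 2p + 2g - 2$.

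The step I expect to be the main obstacle is the degree computation. It requires combining several standard but delicate facts: (i) a Bertini-type theorem ensuring the general curve section $Y$ is smooth and disjoint from $X_{\mathrm{sing}}$ — this is precisely where the codimension-$2$ hypothesis on the singular locus is indispensable, and also where the geometric genus of $Y$ is identified with the sectional genus $g$; (ii) that a general pencil induces only simple ramification on $Y$ and that its base locus $B$ misses $Y$, so that branch points of $\pi$ correspond bijectively to points of $\mathcal{H}_X \cap \ell$; and (iii) that the scheme-theoretic intersection $\mathcal{H}_X \cap \ell$ is reduced, so that counting branch points each with multiplicity one honestly computes $\deg \operatorname{Hu}_X$. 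Steps 1 and 2 are comparatively routine dimension-theoretic and UFD arguments; the care lies in making the Riemann--Hurwitz identification airtight.
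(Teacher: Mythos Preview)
The paper does not prove this statement; it is quoted verbatim as \cite[Thm.~1.1]{sturmfels2017hurwitz} and used as a black box in the proofs of Theorems~\ref{thm:image-E} and~\ref{thm:F-image}. So there is no ``paper's own proof'' to compare against.

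That said, your sketch is essentially the argument in Sturmfels' original paper: an irreducible incidence variety over $X_{\mathrm{sm}}$ with Schubert-variety fibers, a dimension count giving codimension one in the Grassmannian, the UFD property of the Pl\"ucker coordinate ring to extract a single defining polynomial, and Riemann--Hurwitz on a general curve section for the degree formula. Your identification of the delicate points (Bertini to get a smooth curve section of the correct genus under the codimension-$2$ singular-locus hypothesis, simple ramification for a general pencil, reducedness of $\mathcal{H}_X \cap \ell$) is accurate. One small gap worth flagging: in Step~1 you assert that when $\operatorname{codim}_X X_{\mathrm{sing}} \ge 2$ the locus of $L$ meeting $X_{\mathrm{sing}}$ contributes no extra component, but the theorem as stated does \emph{not} assume this for the irreducibility/hypersurface claims --- only for the degree formula. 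You should check that even without the codimension-$2$ hypothesis, the locus of complementary subspaces meeting $X_{\mathrm{sing}}$ still has codimension $\ge 1$ in the Grassmannian and is contained in the tangency locus (or else argue irreducibility differently).
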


\smallskip

\subsection{Proof of Theorem~\ref{thm:image-E}}
\begin{proof}
Let $\mathcal{E}_{\mathbb{C}} \subseteq \mathbb{P}_{\mathbb{C}}^8$ be the Zariski closure of the set of real essential matrices $\mathcal{E}$ inside complex projective space.  
It is known that $\mathcal{E}_{\mathbb{C}}$  is an irreducible complex projective variety, and its prime ideal is minimally generated by the ten cubic polynomials in~\eqref{eq:essential-ideal}.
By a computer algebra calculation, $\mathcal{E}_{\mathbb{C}}$ has complex dimension $d=5$, degree $p=10$ and sectional genus $g=6$.
By \cite[Prop.~2(i)]{floystad2018chow}, the singular locus of $\mathcal{E}_{\mathbb{C}}$ is a surface  isomorphic to $\mathbb{P}_{\mathbb{C}}^1 \times \mathbb{P}_{\mathbb{C}}^1$ with no real points, and in particular has codimension $3$ in $\mathcal{E}_{\mathbb{C}}$.
Therefore Theorem~\ref{thm:sturmfels} applies, and tells us that the Hurwitz form  $\mathcal{H}_{\mathcal{E}_{\mathbb{C}}}$ is a hypersurface in the Grassmannian $\operatorname{Gr}(\mathbb{P}_{\mathbb{C}}^3, \mathbb{P}_{\mathbb{C}}^8)$ cut out by a polynomial $\operatorname{Hu}_{\mathcal{E}_{\mathbb{C}}}$ which is degree $2 \cdot 10 + 2 \cdot 6 - 2 = 30$ in Pl\"ucker coordinates.

For each $x = \left( (x_1, y_1), \ldots, (x_5, y_5) \right) \in (\mathbb{R}^2 \times \mathbb{R}^2)^{\times 5}$, we define the subspace
\begin{align} \label{eq:big-59-matrix}
 L(x)  \,:= \, 
\operatorname{kernel} \! \begin{pmatrix} 
(x_1)_1(y_1)_1 & (x_1)_2(y_1)_1 & (y_1)_1 & (x_1)_1(y_1)_2 & (x_1)_2(y_1)_2 & (y_1)_2 & (x_1)_1 & (x_1)_2 & 1 \\
& & & & & & & & \\
\vdots & \vdots & \vdots & \vdots & \vdots & \vdots & \vdots & \vdots & \vdots \\
& & & & & & & & \\
(x_5)_1(y_5)_1 & (x_5)_2(y_5)_1 & (y_5)_1 & (x_5)_1(y_5)_2 & (x_5)_2(y_5)_2 & (y_5)_2 & (x_5)_1 & (x_5)_2 & 1 
    \end{pmatrix}_{\!\! 5 \times 9}  \subseteq \, \mathbb{P}_{\mathbb{C}}^8.
\end{align}

Then we define $\mathbf{P}$ as follows:
\begin{equation*}
    \mathbf{P}(x_1,  \ldots, y_5) \, := \, \operatorname{Hu}_{\mathcal{E}_{\mathbb{C}}}(p(L(x))),
\end{equation*}
where $p(L(x))$ are the primal Pl\"ucker coordinates of $L(x)$.
In other words, $\mathbf{P}$ is obtained by substituting the $\binom{9}{5} = 126$ maximal minors of the matrix in \eqref{eq:big-59-matrix} into $\operatorname{Hu}_{\mathcal{E}_{\mathbb{C}}}$.  
Note $\bf{P}$ has degree $30$ separately in each of the ten points $x_1, \ldots, y_5$, because $\operatorname{Hu}_{\mathcal{E}_{\mathbb{C}}}$ has degree $30$ in Pl\"ucker coordinates and each Pl\"ucker coordinate is separately linear in each $x_i$ and each $y_i$.

We shall verify $\bf{P}$ has the property in the third sentence of the theorem statement.
For the remainder of the proof, fix image data $x = \left((x_1, y_1), \ldots, (x_5, y_5) \right) \in (\mathbb{R}^2 \times \mathbb{R}^2)^{\times 5}$ such that $\mathbf{P}(x_1, \ldots, y_5) \neq 0$.
We need to show that at every world scene that is compatible with $x$ the forward Jacobian is invertible.

We show this using the epipolar constraints for two-view geometry \cite[Part II]{hartleyzisserman}.
We consider the following system in $w$:
\begin{equation}
\begin{cases}  \label{eq:w-system}
\Phi(w) = x \\
w \in \mathcal{W},
\end{cases}
\end{equation}
and the following system in $E$:
\begin{equation} 
\begin{cases} \label{eq:epi-E}
  \begin{pmatrix} y_i \\ 1 \end{pmatrix}^{\!\!\top} \!\! E \begin{pmatrix} x_i \\ 1 \end{pmatrix} \, = \, 0 \quad \forall \, i = 1, \ldots, 5 \\[4pt]
   E \in \mathcal{E}.
  \end{cases}
\end{equation}
Each solution to \eqref{eq:epi-E} corresponds to four solutions to \eqref{eq:w-system} via $\Psi$, and there are no other solutions to \eqref{eq:w-system}.  Moreover $w$ depends smoothly on $E$, see \cite[Result~9.19]{hartleyzisserman}.

However solutions to \eqref{eq:epi-E} are the real intersection points in
\begin{equation} \label{eq:slice}
L(x) \cap \mathcal{E}_{\mathbb{C}} \, \subseteq \, \mathbb{P}_{\mathbb{C}}^8,
\end{equation}
because $\mathcal{E} = \mathcal{E}_{\mathbb{C}}  \cap \mathbb{P}_{\mathbb{R}}^8$ (see \cite[Sec.~2.1]{floystad2018chow}).
But we know \eqref{eq:slice} consists of $10$ reduced intersection points, by definition of ${\bf P}$ and the Hurwitz form.
Denote these $\{E_1, \ldots,  E_{10} \}$ where the real intersection points are $E_1, \ldots, E_a$.
Using an appropriate version of the implicit function theorem (see \cite[App.~A]{sommese2005numerical}), there exists an open neighborhood $\mathcal{U}$ of $x$ in $\mathcal{X}$ and differentiable functions $\widetilde{E}_1, \ldots, \widetilde{E}_{10} : \mathcal{U} \rightarrow \mathcal{E}_{\mathbb{C}}$ such that: \textit{(i)} $\widetilde{E}_i(x) = E_i$ for each $i = 1, \ldots, 10$; \textit{(ii)}  for each $x' \in \mathcal{U}$ we have $L(x') \cap \mathcal{E}_{\mathbb{C}} = \{\widetilde{E}_1(x'), \ldots, \widetilde{E}_{10}(x')\}$, these intersection points are all reduced, and the only the first $a$ points are real.

Combining the last two paragraphs, there exists differentiable functions $\widetilde{w}_1, \ldots, \widetilde{w}_{4a} : \mathcal{U} \rightarrow \mathcal{W}$ such that for each $x' \in \mathcal{U}$,
\begin{equation}
\left\{ w \in \mathcal{W} : \Phi(w) = x' \right\} = \left\{ \widetilde{w}_1(x'), \ldots, \widetilde{w}_{4a}(x') \right\}.
\end{equation}
Therefore $\Phi \circ \widetilde{w}_i = \operatorname{id}_{\mathcal{U}}$ for each $i$.  
Differentiating this and evaluating at $x \in \mathcal{X}$ gives
\begin{equation*}
(D\Phi)(\widetilde{w}_i(x)) \circ (D\widetilde{w}_i)(x) = I_{20},
\end{equation*}
so that, in particular, $(D\Phi)(\widetilde{w}_i(x))$ is invertible for each $i$. 

This proves that every world scene compatible with $x$ has an invertible forward Jacobian as we needed.
For a discussion of how to plot the $4.5$-point curve using homotopy continuation, see ``Numerical Computation of the X.5-Point Curves".
\end{proof}

\smallskip

\subsection{Proof of Theorem~\ref{thm:F-image}}

\begin{proof}
This is similar to the proof of Theorem~\ref{thm:image-E}, although somewhat easier.
Let $\mathcal{F}_{\mathbb{C}} \subseteq \mathbb{P}_{\mathbb{C}}^8$ be the Zariski closure of the set of real fundamental matrices $\mathcal{F}$ inside complex projective space. 
Then $\mathcal{F}_{\mathbb{C}}$ consists of all rank-deficient $3 \times 3$ matrices.  
So $\mathcal{F}_{\mathbb{C}}$ is an irreducible complex projective hypersurface, defined by the determinantal cubic equation.
It has dimension $d=7$, degree $p=3$ and sectional genus $g = 1$.  The singular locus of $\mathcal{F}_{\mathbb{C}}$ consists of all rank $1$ matrices, and in particular has codimension $3$.
Therefore Theorem~\ref{thm:sturmfels} applies, and tells us that the Hurwitz form  $\mathcal{H}_{\mathcal{F}_{\mathbb{C}}}$ is a hypersurface in the Grassmannian $\operatorname{Gr}(\mathbb{P}_{\mathbb{C}}^1, \mathbb{P}_{\mathbb{C}}^8)$ cut out by a polynomial $\operatorname{Hu}_{\mathcal{F}_{\mathbb{C}}}$ which is degree $2 \cdot 3 + 2 \cdot 1 - 2 = 6$ in Pl\"ucker coordinates.

For each $x = \left( (x_1, y_1), \ldots, (x_7, y_7) \right) \in (\mathbb{R}^2 \times \mathbb{R}^2)^{\times 7}$, we define the subspace
\begin{align} \label{eq:big-79-matrix}
 L(x)  \,:= \, 
\operatorname{kernel} \! \begin{pmatrix} 
(x_1)_1(y_1)_1 & (x_1)_2(y_1)_1 & (y_1)_1 & (x_1)_1(y_1)_2 & (x_1)_2(y_1)_2 & (y_1)_2 & (x_1)_1 & (x_1)_2 & 1 \\
& & & & & & & & \\
\vdots & \vdots & \vdots & \vdots & \vdots & \vdots & \vdots & \vdots & \vdots \\
& & & & & & & & \\
(x_7)_1(y_7)_1 & (x_7)_2(y_7)_1 & (y_7)_1 & (x_7)_1(y_7)_2 & (x_7)_2(y_7)_2 & (y_7)_2 & (x_7)_1 & (x_7)_2 & 1 
    \end{pmatrix}_{\!\! 7 \times 9}  \subseteq \, \mathbb{P}_{\mathbb{C}}^8.
\end{align}

Then we define $\mathbf{P}$ as follows:
\begin{equation*}
    \mathbf{P}(x_1,  \ldots, y_7) \, := \, \operatorname{Hu}_{\mathcal{F}_{\mathbb{C}}}(p(L(x))),
\end{equation*}
where $p(L(x))$ are the primal Pl\"ucker coordinates of $L(x)$.
In other words, $\mathbf{P}$ is obtained by substituting the $\binom{9}{7} = 36$ maximal minors of the matrix in \eqref{eq:big-79-matrix} into $\operatorname{Hu}_{\mathcal{F}_{\mathbb{C}}}$.  
Note $\bf{P}$ has degree $6$ separately in each of the fourteen points $x_1, \ldots, y_7$, because $\operatorname{Hu}_{\mathcal{F}_{\mathbb{C}}}$ has degree $6$ in Pl\"ucker coordinates and each Pl\"ucker coordinate is separately linear in each $x_i$ and each $y_i$.

The argument that ${\bf P}$ does the job is analogous to that for the calibrated case.
One uses the epipolar constraints for the fundamental matrix, and the correspondence between fundamental matrices and world scenes (which this time is  1-1 due to \cite[Sec.~9.5.2]{hartleyzisserman}).  
Given $x \in \mathcal{X}$ such that ${\bf P}(x) \neq 0$, each compatible fundamental matrix $F \in \mathcal{F}$ is a locally defined smooth function of $x$, by  our choice of ${\bf P}$ and the definition of the Hurwitz form. 
The corresponding world scene $w \in \mathcal{W}$ is smooth as a function of $F$, and therefore a locally defined smooth function of $x$.
Then we conclude with the chain rule again.

In the uncalibrated case, we can explicitly compute the Hurwitz form as follows.  Recall the definition of dual Pl\"ucker coordinates \eqref{eq:dual-plucker}.
Applying elementary row operations, each $L \in \operatorname{Gr}(\mathbb{P}_{\mathbb{C}}^1, \mathbb{P}_{\mathbb{C}}^8)$ with $q_{11,11} \neq 0$ can be written as
\begin{equation*}
L = \operatorname{rowspan} \begin{pmatrix} 1 & 0 & -q_{12,13} & -q_{12,21} & -q_{12,22} & -q_{12,23} & -q_{12,31} & -q_{12,32} & -q_{12,33} \\
0 & 1 & q_{11,13} & q_{11,21} & q_{11,22} & q_{11,23} & q_{11,31} & q_{11,32} & q_{11,33} 
\end{pmatrix},
\end{equation*}
or, upon un-vectorizing, 
\begin{small}
\begin{equation} \label{eq:eq-F1F2}
    L = \operatorname{span} \left\{ F_1, F_2 \right\} \quad \textup{where} \,\,\, F_1 = \begin{pmatrix} 1 & 0 & -q_{12,13} \\ -q_{12,21} & -q_{12,22} & -q_{12,23} \\ -q_{12,31} & -q_{12,32} & -q_{12,33} \end{pmatrix} \,\,\, \textup{and} \,\,\, F_2 = \begin{pmatrix} 0 & 1 & q_{11,13} \\ q_{11,21} & q_{11,22} & q_{11,23} \\ q_{11,31} & q_{11,32} & q_{11,33} \end{pmatrix}.
\end{equation} 
\end{small}
Let $F(t) = F_1 + tF_2$ where $t \in \mathbb{C}$. 
Assume $\det(F_2) \neq 0$.  Then   $L \cap \mathcal{F}_{\mathbb{C}} = \left\{ F(t) : \det(F(t)) = 0 \right\}$
and 
\begin{align*}
  L \in \mathcal{H}_{\mathcal{F}_{\mathbb{C}}} \quad \quad \Leftrightarrow  \quad \quad \det(F(t)) = 0 \textup{ has a double root in } t \quad \quad \Leftrightarrow \quad \quad \operatorname{disc}_t(\det(F(t))) = 0.
\end{align*}
Here $\operatorname{disc}_t$ denotes the usual discriminant for a cubic polynomial in $t$, that is, for $a,b,c,d \in \mathbb{C}$ it is
\begin{equation*}
\operatorname{disc}_t(at^3 + bt^2 + ct + d) = b^2c^2 - 4ac^3 - 4b^3d - 27a^2d^2 + 18abcd.
\end{equation*}

Thus we compute $\operatorname{Hu}_{\mathcal{F}_{\mathbb{C}}}$ by substituting \eqref{eq:eq-F1F2} into the definition of $F(t)$ and then expanding $\operatorname{disc}_t(\det(F(t))$.
We switch from dual Pl\"ucker coordinates to primal Pl\"ucker coordinates using \eqref{eq:switchplucker}.  
Then we reduce the resulting polynomial by the Pl\"ucker relations using Gr\"obner bases.
We carried this calculation out in Macaulay2, and it terminated quickly.  
The result is an explicit expression for $\operatorname{Hu}_{\mathcal{F}_{\mathbb{C}}}$ as a homogeneous degree-$6$ polynomial in the $36$ Pl\"ucker coordinates $p_{11,11}, \ldots, p_{33,33}$ with 1668 terms and integer coefficients with a maximal absolute value of $72$.
The polynomial is posted on our GitHub repository.

Last, consider the $6.5$-point curve. 
Given thirteen specified image points $x_1, y_1, \ldots, x_7 \in \mathbb{R}^2$, we can evaluate the primal Pl\"ucker coordinates \eqref{eq:primal-plucker-def} where only $y_7$ is symbolic.  
Substituting these into the formula for $\operatorname{Hu}_{\mathcal{F}_{\mathbb{C}}}$ gives the defining equation for the $6.5$-point curve as a polynomial in $y_7$. Alternatively the curve can be computed numerically, see the next section.
\end{proof}

\smallskip

\subsection{Numerical Computation of the X.5-Point Curves} \label{subsec:compute-x5}
In the body of the paper, we specified that for the 5-point problem and 7-point problem, if we fix ``4.5" and ``6.5" correspondences as described in the  body, then we can find a degree 30 and 6 curve on the second image indicating the ill-posed positions for the last image point. 
In this section, the generation of the \textbf{X.5-point curves} will be introduced in depth.

Suppose we have two images $I$ and $\bar{I}$.  Consider the essential matrix $E$ and fundamental matrix $F$ representing their relative pose in calibrated and uncalibrated cases respectively. 

For the \textbf{uncalibrated case}, to generate the 6.5-point curve, six known point correspondences $(x^{\top}_i = [(x_i)_1, (x_i)_2], y^{\top}_i = [(y_i)_1, (y_i)_2]), i = 1, \ldots, 6$, are needed. 
These point correspondences satisfy
\begin{align*}
    \begin{pmatrix} y_i \\ 1 \end{pmatrix}^{\!\! \top} \!\! F \begin{pmatrix} x_i \\ 1 \end{pmatrix} = 0, \qquad i = 1, \ldots, 6.
\end{align*}
This equation can then be rewritten as a linear system:
\begin{align}
\left\{ \begin{matrix}
f^{\top} = \left(F_{11}, F_{12}, F_{13}, F_{21}, F_{22}, F_{23}, F_{31}, F_{32}, F_{33}\right) \\[1pt]
    w^T_i f = 0, \qquad  i = 1, \ldots, 6 \\[1pt]
    w_i^{\top} = \left((x_i)_1 ({y}_i)_1, (x_i)_2 ({y}_i)_1, ({y}_i)_1, (x_i)_1 ({y}_i)_2, (x_i)_2 ({y}_i)_2, ({y}_i)_2, (x_i)_1, (x_i)_2, 1\right).	
    \end{matrix}
\label{Eq:linearSystem}
\right. 
\end{align}
Note that the equations $w^T_i \tilde{F} = 0, i = 1, \ldots, 6$ can then be rearranged into the form $W f = 0$, where $W$ is a $6 \times 9$ matrix. 
We extract a basis for the null space of this linear system, by computing three right singular vectors of $W$ with $0$ singular value. 
So the fundamental matrix can be reconstructed as 
\begin{align}
    F = \alpha_1 F_1 + \alpha_2 F_2 + F_3,
\end{align}
where $F_1$, $F_2$ and $F_3$ are the basis of the nullspace  reshaped into $3 \times 3$ matrices; and $\alpha_1$ and $\alpha_2$ are free parameters in building the fundamental matrix $F$. 
From~\cite[Thm.~1]{nister:PAMI:2004}, the fundamental matrix should satisfy 
\begin{align*}
    \det (F) = 0,
\end{align*}
so that we have a polynomial with respect to $\alpha_1$ and $\alpha_2$:
\begin{align}   \label{Eq:nullspace}
    \det (\alpha_1 F_1 + \alpha_2 F_2 + F_3) = 0.
\end{align}
Now consider the final seventh ``0.5" point correspondence between the images, $(x_7 = \left((x_7)_1, (x_7)_2\right), y^{\top}_7 = \left((y_7)_1,(y_7)_2\right))$,
where $x_7$ is known but $y_7$ is not.
We seek the values of $y_7$ such that all of the image data becomes ill-posed.
To cut down on the number of variables, our strategy is to fix various numerical values for $(y_7)_1$ and then determine the corresponding degenerate values for the last coordinate $(y_7)_2$.
Firstly we have the additional linear constraint
\begin{align}   \label{Eq:extralinear}
    \begin{pmatrix} y_7 \\ 1 \end{pmatrix}^{\!\! \top} \!\! (\alpha_1 F_1 + \alpha_2 F_2 + F_3) \begin{pmatrix} x_7 \\ 1 \end{pmatrix} = 0.
\end{align}
Combining \eqref{Eq:nullspace} and~\eqref{Eq:extralinear}, we  have two equations in the three unknowns $\alpha_1, \alpha_2, (y_7)_2$:
\begin{align}
    \left\{
    \begin{matrix}
    \det (\alpha_1 F_1 + \alpha_2 F_2 + F_3) = 0 \\
    ((y_7)_1,(y_7)_2,1) (\alpha_1 F_1 + \alpha_2 F_2 + F_3) (x_7, 1)^{\!\top} = 0.
    \end{matrix}
    \right.
    \label{Eq:twosystem}
\end{align}

To find the points on the 6.5-point degenerate curve, we need 
to also enforce rank-deficiency of the Jacobian of the system \eqref{Eq:twosystem} with respect to $\alpha_1, \alpha_2$.  This Jacobian reads
\begin{align}
J = \begin{pmatrix}
    \frac{\partial \det (\alpha_1 F_1 + \alpha_2 F_2 + F_3)}{\partial \alpha_1}& \frac{\partial \det (\alpha_1 F_1 + \alpha_2 F_2 + F_3)}{\partial \alpha_2}  \\[4pt]
    \frac{\partial ((y_7)_1,(y_7)_2,1) (\alpha_1 F_1 + \alpha_2 F_2 + F_3) (x_7, 1)^{\!\top}}{\partial \alpha_1}& \frac{\partial [(y_7)_1,(y_7)_2,1] (\alpha_1 F_1 + \alpha_2 F_2 + F_3) (x_7, 1)^{\!\top}}{\partial \alpha_2} 
    \end{pmatrix}.
\end{align}
To express rank-deficiency, we introduce a dummy scalar variable $d_1$ to represent the non-trivial null space for $J$.
All together we have the following system of equations now:
\begin{align*}
    \left\{
    \begin{matrix}
    \det (\alpha_1 F_1 + \alpha_2 F_2 + F_3) = 0 \\[3pt]
    ((y_7)_1,(y_7)_2,1) (\alpha_1 F_1 + \alpha_2 F_2 + F_3) (x_7, 1)^{\top} = 0 \\[3pt]
   J (d_1, 1)^{\!\top} = 0.
    \end{matrix}
    \right.
    \label{Eq:finalsystem}
\end{align*}

To plot the 6.5-point curve, we set the parameter $(y_7)_1$ to different real values, \eg to horizontally range over all the pixels in the second image.
For each fixed value of $(y_7)_1$, \eqref{Eq:finalsystem} becomes a square polynomial system in the variables $\alpha_1, \alpha_2, (y_7)_2$.
The real solutions correspond to the intersection of the 6.5-point curve and the corresponding column of the image. 
The curve inside the image boundaries can be obtained by solving these various systems independently, see Figure~\ref{fig:scanning}(a). 
We choose to solve the polynomials using homotopy continuation \cite{sommese2005numerical} as implemented in the  Julia package \cite{breiding2018homotopycontinuation}. 
By linearly connecting the intersection points, the 6.5-point curve is rendered. 
In some possible applications, a full plot of the curve may not be required. 
For example, consider checking the distance from a given point to the curve. 
In that case, we can simply compute the intersection points as $(y_7)_1$ ranges over a small interval around the correspondence candidate, see Figure~\ref{fig:scanning}(b). 
Finally, computations for different columns of the image  are independent, so the described procedures are easily parallelized.

\begin{figure}[h]
    \centering
    (a)\includegraphics[width=0.3\linewidth]{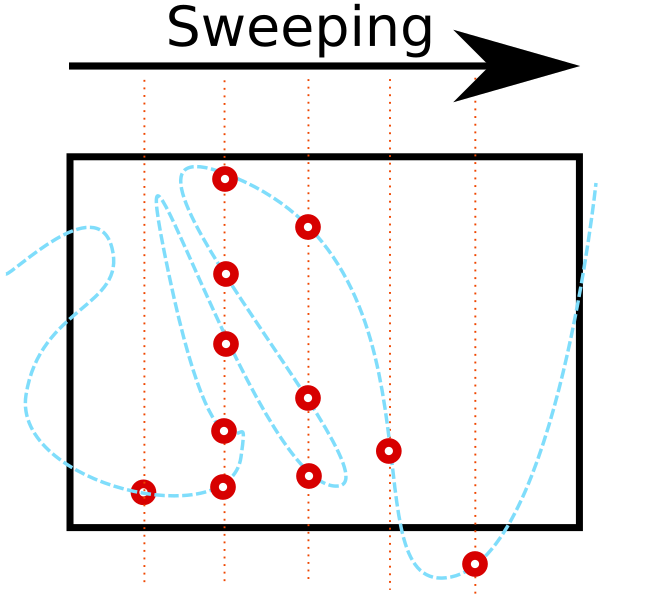}
    (b)\includegraphics[width=0.3\linewidth]{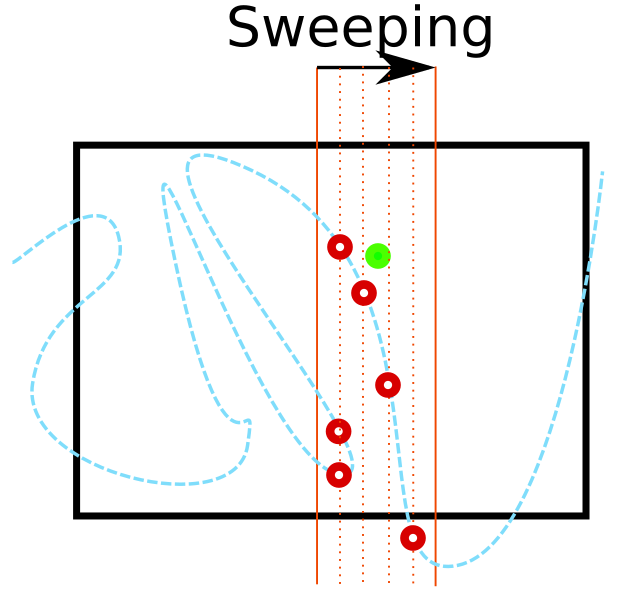} \\
    \includegraphics[width=0.8\linewidth]{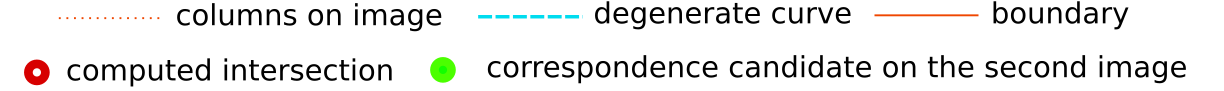}
    \caption{(a) To generate the X.5-point curves on the second image, we can sweep the image column-wise and compute the intersection with vertical lines by solving \eqref{Eq:twosystem}. (b) Given a candidate correspondence, we can  scan just a neighborhood around the candidate point.}
    \label{fig:scanning}
\end{figure}

\vspace{1em}

For the \textbf{calibrated case} (similarly to the uncalibrated case), with four known correspondences, we can build a linear system analogous to ~\eqref{Eq:linearSystem} in the variable $E$. 
Here the null space is $5$-dimensional, so we represent the essential matrix as 
\begin{align}
    E = \alpha_1 E_1 + \alpha_2 E_2 + \alpha_3 E_3 + \alpha_4 E_4 + E_5.
\end{align}
where $E_i$ provide a basis of the null space. 
From~\cite{nister:PAMI:2004}, the essential matrix should also satisfy the following polynomial constraints:
\begin{align*}
    \det(E) = 0 \qquad \textup{and} \qquad E(E^{\top}E) - \frac{1}{2} \operatorname{trace}(EE^{\top})E = 0,
\end{align*}
which are in total 10 cubic equations.
Similarly to uncalibrated case, to find the degenerate configurations, the Jacobian of these constraints with respect to $\alpha_1, \ldots, \alpha_4$ should be rank-deficient.
The Jacobian can be built as follows:
{\footnotesize
\begin{align*}
    J = \begin{pmatrix}
    \frac{\partial \det(E)}{\partial \alpha_1} & \frac{\partial \det (E)}{\partial \alpha_2} & \frac{\partial \det (E)}{\partial \alpha_3} & \frac{\partial \det (E)}{\partial \alpha_4}  \\[4pt]
    \frac{\partial ((y_7)_1,(y_7)_2,1) E (x_7, 1)^{!\top}}{\partial \alpha_1} & \frac{\partial ((y_7)_1,(y_7)_2,1) E (x_7, 1)^{\! \top}}{\partial \alpha_2} & \frac{\partial ((y_7)_1,(y_7)_2,1) E (x_7, 1)^{\! \top}}{\partial \alpha_3} & \frac{\partial ((y_7)_1,(y_7)_2,1) E (x_7, 1)^{\! \top}}{\partial \alpha_4} \\[4pt]
    \frac{\partial \operatorname{vec}(E(E^{\top}E) - \frac{1}{2} trace(EE^{\top})E)}{\partial \alpha_1} &
    \frac{\partial \operatorname{vec}(E(E^{\top}E) - \frac{1}{2} trace(EE^{\top})E)}{\partial \alpha_2} & 
    \frac{\partial  \operatorname{vec}(E(E^{\top}E) - \frac{1}{2} trace(EE^{\top})E)}{\partial \alpha_3} & 
    \frac{\partial  \operatorname{vec}( E(E^{\top}E) - \frac{1}{2} trace(EE^{\top})E)}{\partial \alpha_4}
    \end{pmatrix}
\end{align*}
}
Here $\operatorname{vec}(\cdot)$ represents the vectorization of a $3 \times 3$ matrix into a $9 \times 1$ vector, so that $J$ is a $11 \times 4$ matrix. 
Then, we introduce dummy variables $d_1, d_2, d_3$ to express rank-deficiency of $J$ and build the following system of equations:
{\footnotesize
\begin{align*}
\left\{
\begin{matrix}
\det(E) = 0 \\[4pt]
((y_7)_1,(y_7)_2,1) E (x_7, 1)^{\! \top} = 0 \\[4pt]
 E(E^{\top}E) - \frac{1}{2} trace(EE^{\top})E = 0 \\[4pt]
J (d_1, d_2, d_3, 1)^{\! \top} = 0
\end{matrix}
\right.
\end{align*}
}
where $E = \alpha_1 E_1 + \alpha_2 E_2 + \alpha_3 E_3 + \alpha_4 E_4 + E_5$. Note that we have in total 22 equations and 8 unknowns.
The variables are $\alpha_1$, $\alpha_2$, $\alpha_3$, $\alpha_4$, $(y_7)_1$, $(y_7)_2$, $d_1$, $d_2$ and $d_3$. 
The solutions to this system define the 4.5-point curve. 

By setting $(y_7)_1$ to various different values, we can find the zero-dimensional solution sets following the same approach as in Figure~\ref{fig:scanning}. 
The real solutions for $(y_7)_2$ correspond to the intersection of the 4.5-point curve and a column of the image. 
The solutions to these systems are easily computed using HomotopyContinuation.jl.
Note that the systems have 30 complex solutions, so that we will have at most 30 real intersection points with the various columns of the second image. 

\medskip
\medskip

\section{Additional Experimental Results}
\label{sec:exp}

\smallskip

\subsection{Extra Curve Samples}
The main body showed four sample X.5-point curves for the calibrated and uncalibrated minimal problems. Figure~\ref{fig:sample} shows more synthetic curves.  We have included different cases corresponding to stable and unstable problem instances.

\begin{figure}[h]
    \centering
    (a)\includegraphics[width = 0.3\linewidth]{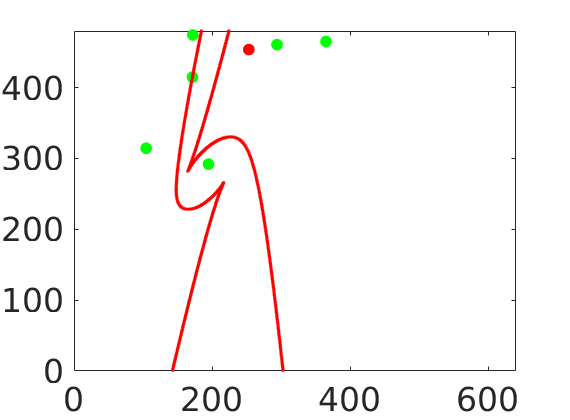}
    \includegraphics[width = 0.3\linewidth]{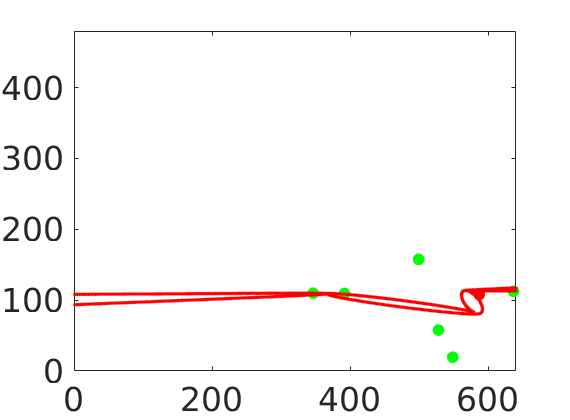}
    \includegraphics[width = 0.3\linewidth]{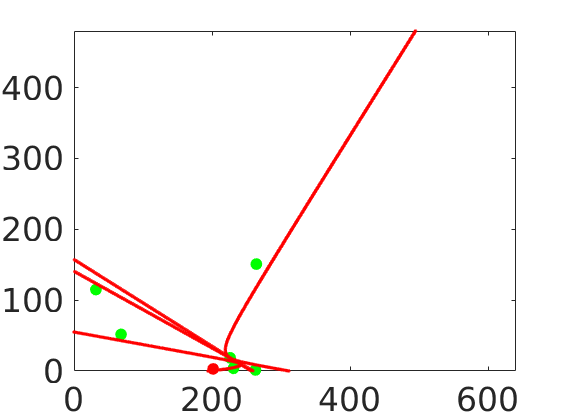} \\
    (b)\includegraphics[width = 0.3\linewidth]{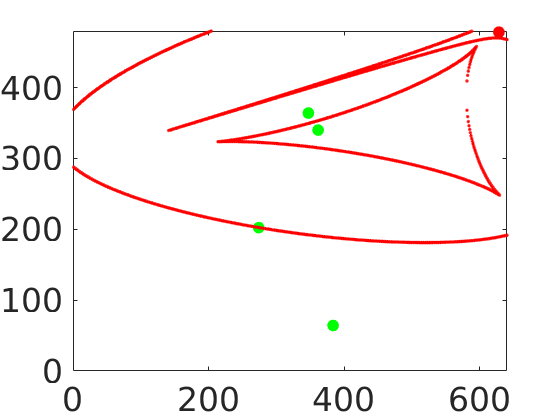}
    \includegraphics[width = 0.3\linewidth]{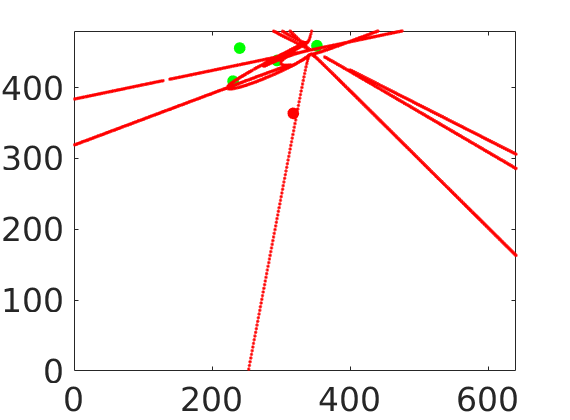}
    \includegraphics[width = 0.3\linewidth]{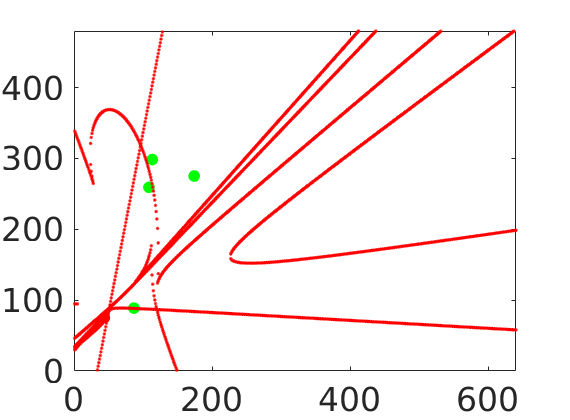}
    \caption{Sample renderings of the $X.5$-point curve. Points used in computing the curve are shown as green; the red point is the 5th/7th correspondence on the second image for calibrated/uncalibrated relative pose estimation; the red curve is the X.5-point curve we computed using homotopy continuation. (a) 6.5-point curves for the uncalibrated case. (b) 4.5-point curves for the calibrated case.}
    \label{fig:sample}
\end{figure}

\smallskip

\subsection{Stability of Curves for Calibrated Case}

\begin{figure}[h]
    \centering
    (a)\includegraphics[width = 0.3\linewidth]{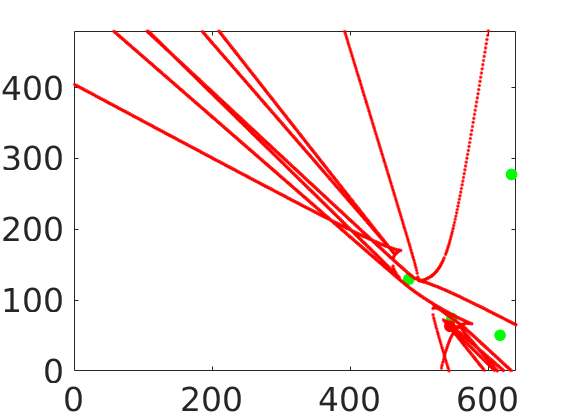}
    (b)\includegraphics[width = 0.3\linewidth]{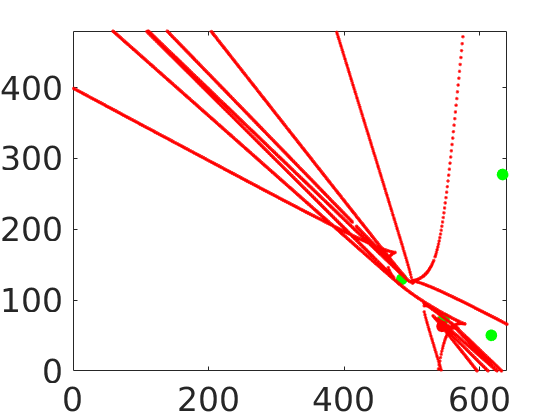}
    \includegraphics[width = 0.3\linewidth]{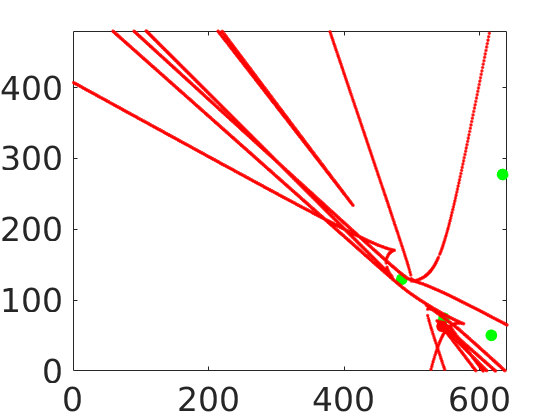}
    (c)\includegraphics[width = 0.3\linewidth]{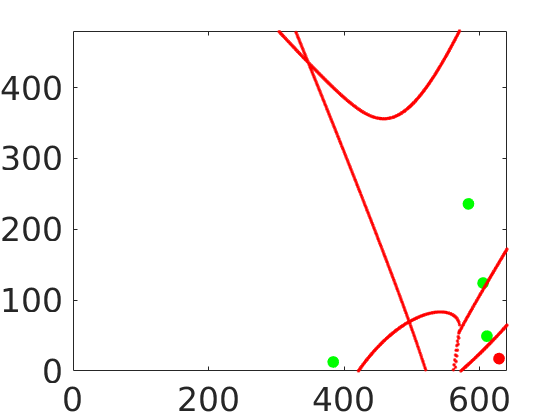}
    (d)\includegraphics[width = 0.3\linewidth]{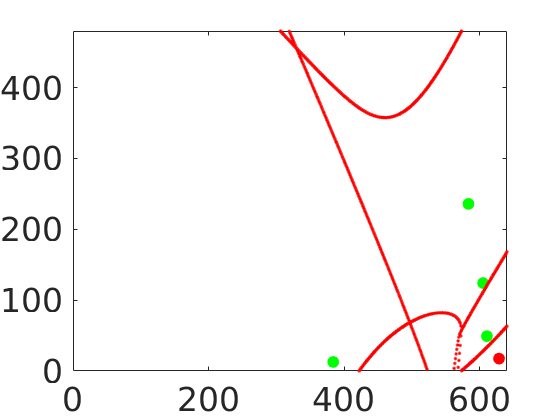}
    \includegraphics[width = 0.3\linewidth]{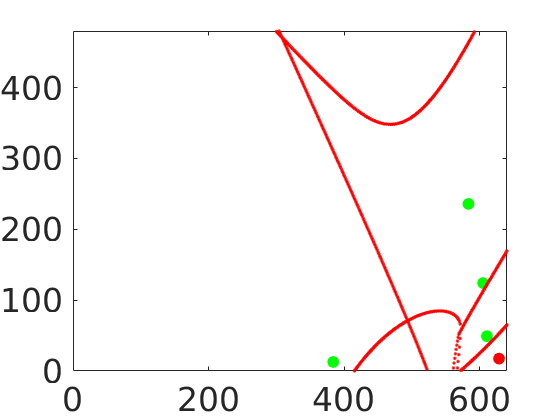}
    \caption{Illustrative result indicating the stability of the 4.5-point curve.  (a) The degenerate curve for an unstable instance of $E$ estimation. (c) The degenerate curve for a stable instance of $E$ estimation. (b) (d) Adding different noise, the curves do not change much. }
    \label{fig:noiseE}
\end{figure}

Here we show sample perturbation results for the 4.5-point curve.
For the synthetic dataset described in the main paper, we add $\mathcal{N}(0,0.5)$-noise on each of the correspondences, then compute the resulting degenerate curve. I
n the main paper, Figure~\ref{fig:my_label} shows the sample perturbation for the uncalibrated case. 
The corresponding cases for the calibrated case are in Figure~\ref{fig:noiseE}. The statistics for the calibrated cases were already included in Figure~\ref{fig:stability_syn}.

\smallskip

\subsection{More Real Image 
Examples}

Figure~\ref{fig:Real} in the  main body showed an example  using the X.5-point curve to indicate unstable configurations on real images. 
In this section, we display more examples on real images, see Tables~\ref{tab:realF} and~\ref{tab:realE}. 
Note that our method is an indication of the stability of a minimal problem instance. 
Here we selected only all-inlier minimal problem instances whose reprojection error using the ground-truth essential matrix is below a threshold (3 pixels are used). 
Then we computed the X.5-point curve on the second image. 
As mentioned in the body, the distance from the point to the curve can be used as a criteria to predict the stability. 
For highly unstable problem instances, the solution corresponding to the ground truth may suffer from large errors.

\begin{table}[h]
    \centering
    \begin{tabular}{c|c|c}
        Ground Truth Epipolar Geometry & Estimated Epipolar Geometry 
         & Degenerate Curve \\
         \hline
         \includegraphics[height=0.13\linewidth]{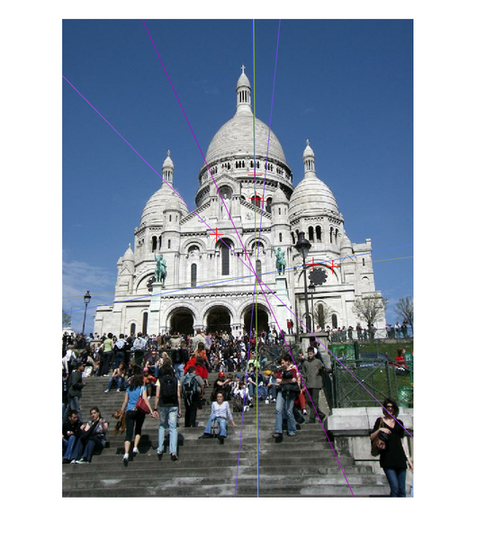}
    \includegraphics[height=0.13\linewidth]{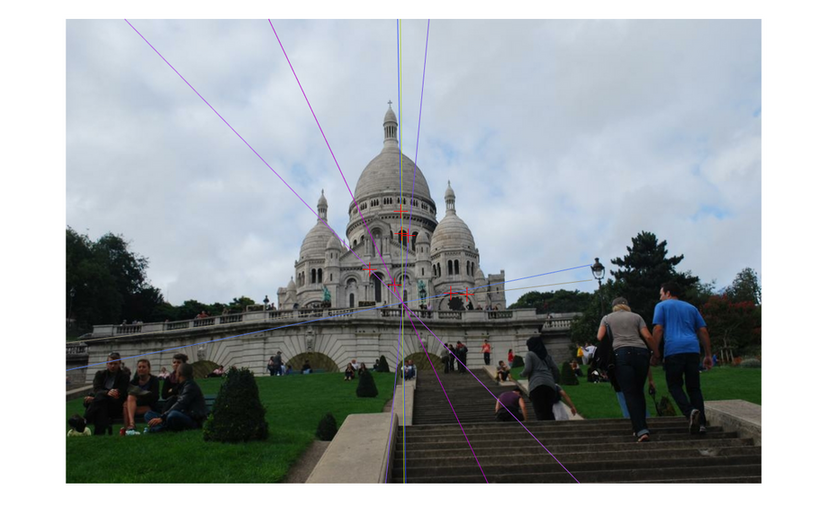} & \includegraphics[height=0.13\linewidth]{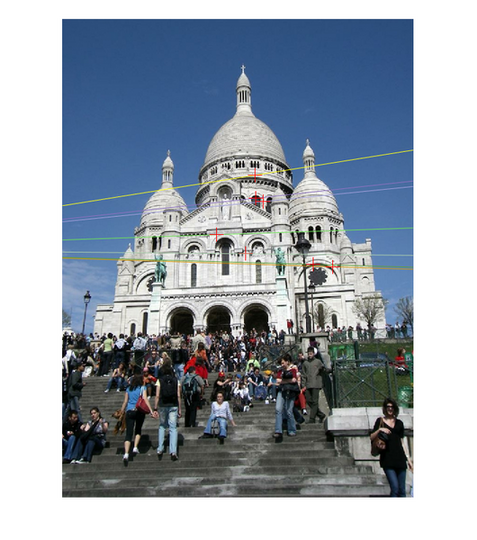}
    \includegraphics[height=0.13\linewidth]{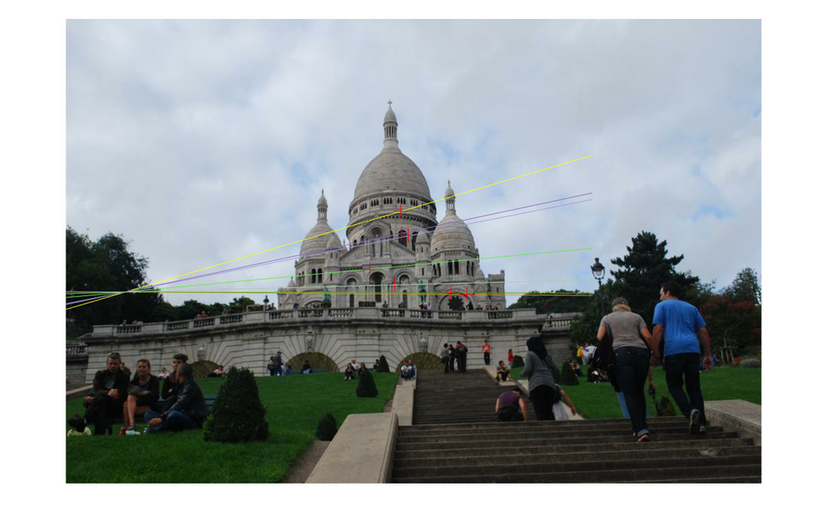} & \includegraphics[height=0.1\linewidth]{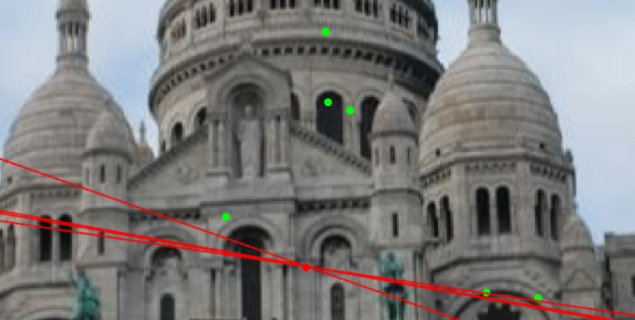} \\
    \hline
    \includegraphics[height=0.2\linewidth]{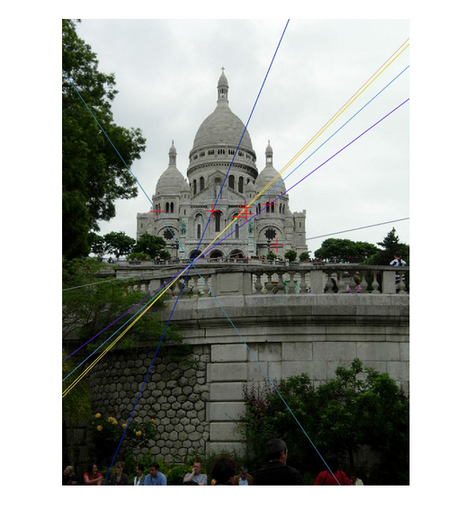}
    \includegraphics[height=0.2\linewidth]{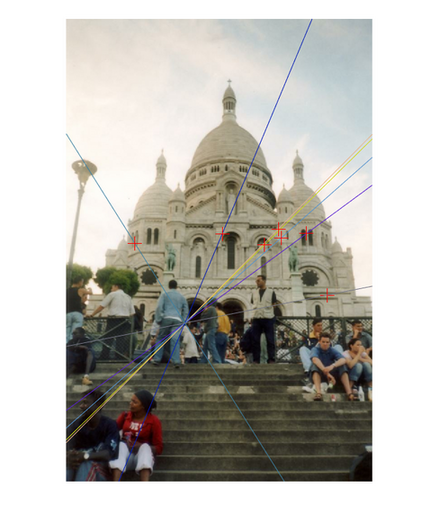} & \includegraphics[height=0.2\linewidth]{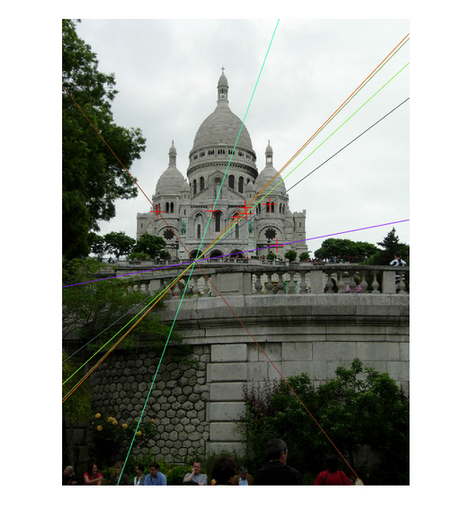}
    \includegraphics[height=0.2\linewidth]{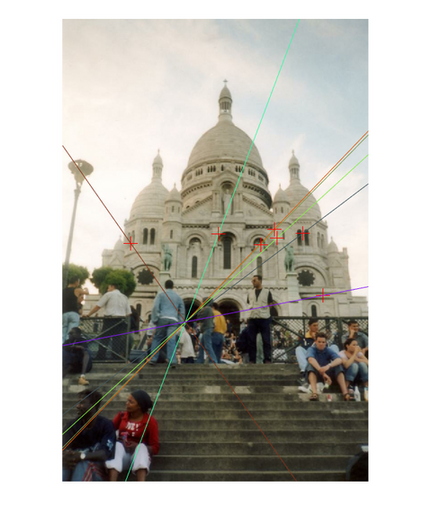} & \includegraphics[height=0.15\linewidth]{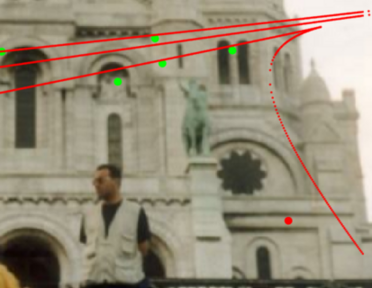} \\
    \hline
    \includegraphics[height=0.1\linewidth]{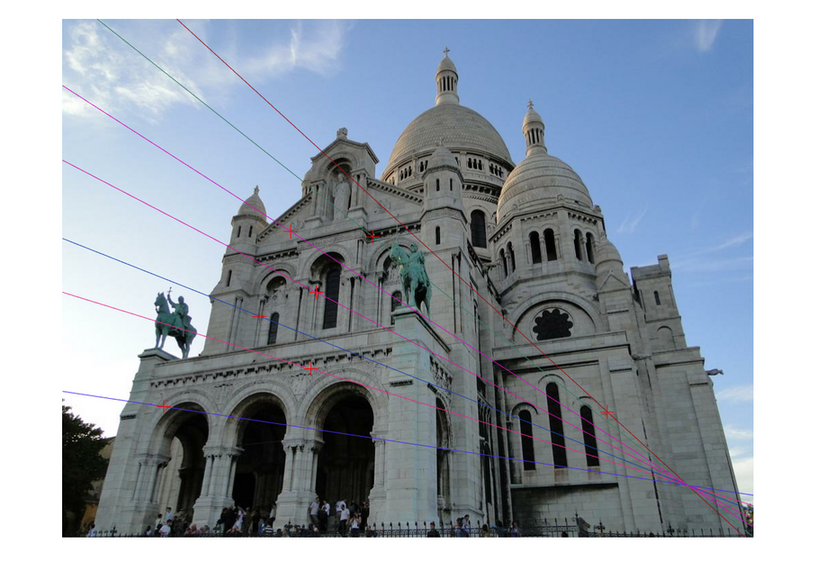}
    \includegraphics[height=0.1\linewidth]{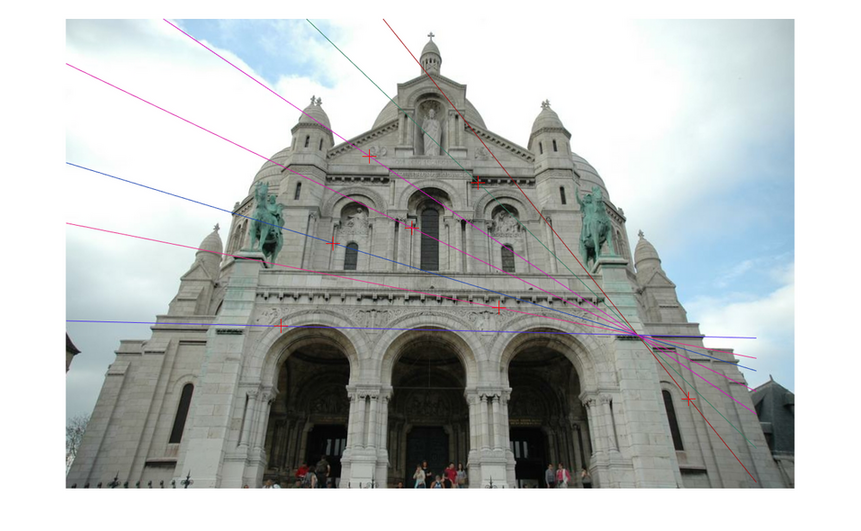} & \includegraphics[height=0.1\linewidth]{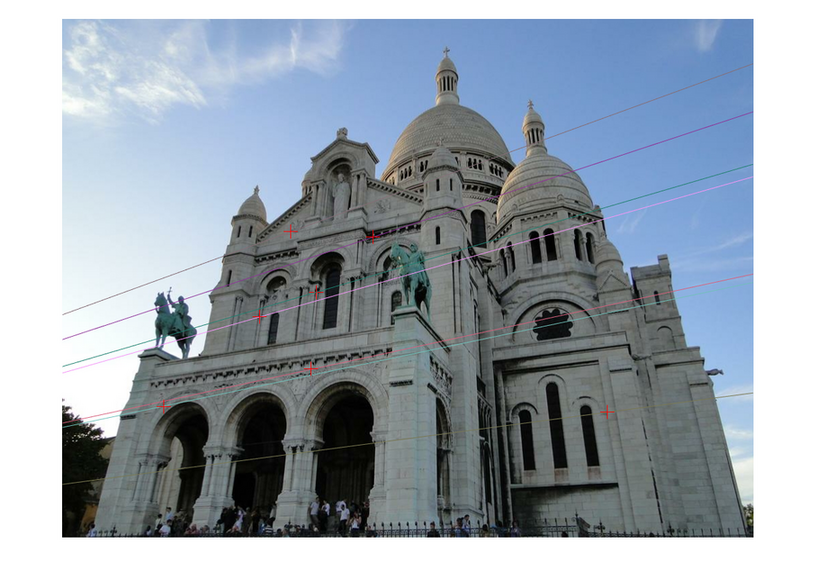}
    \includegraphics[height=0.1\linewidth]{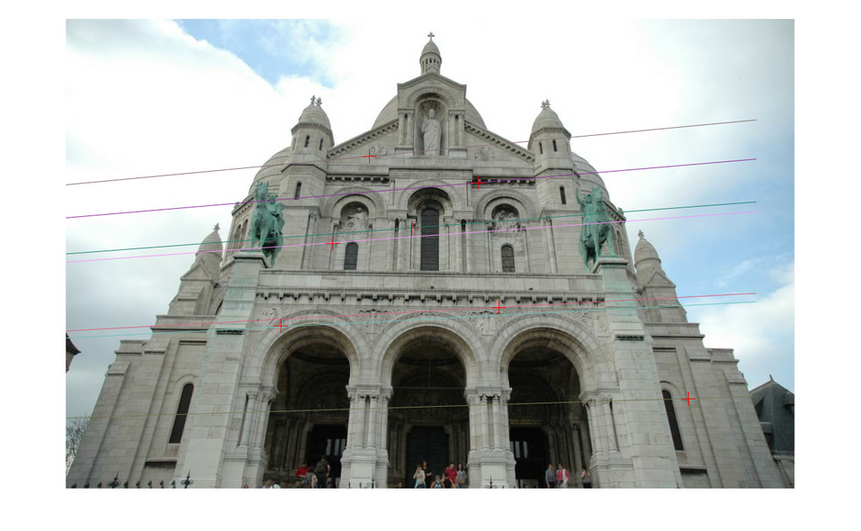} & \includegraphics[height=0.13\linewidth]{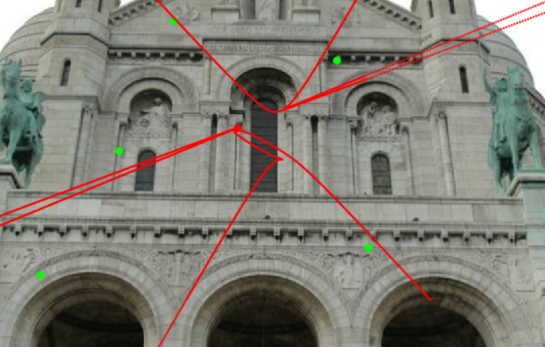} \\
    \hline
    \includegraphics[height=0.1\linewidth]{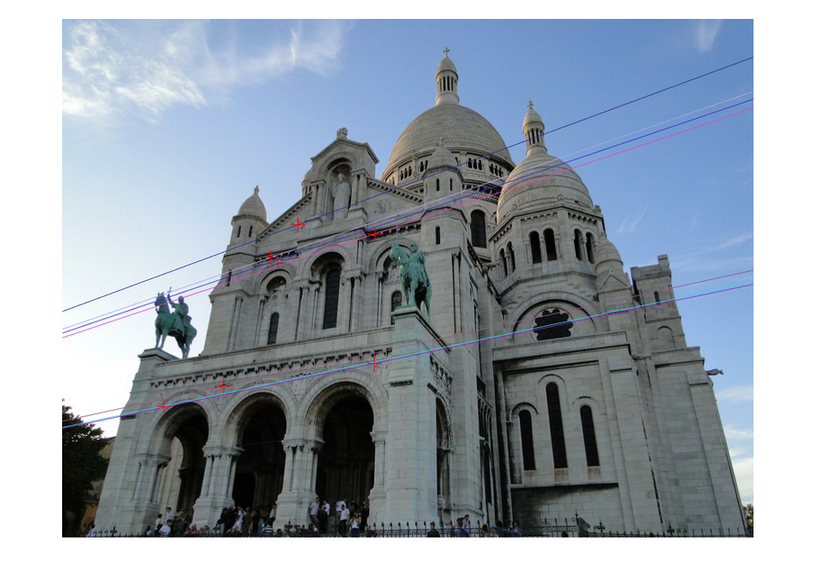}
    \includegraphics[height=0.1\linewidth]{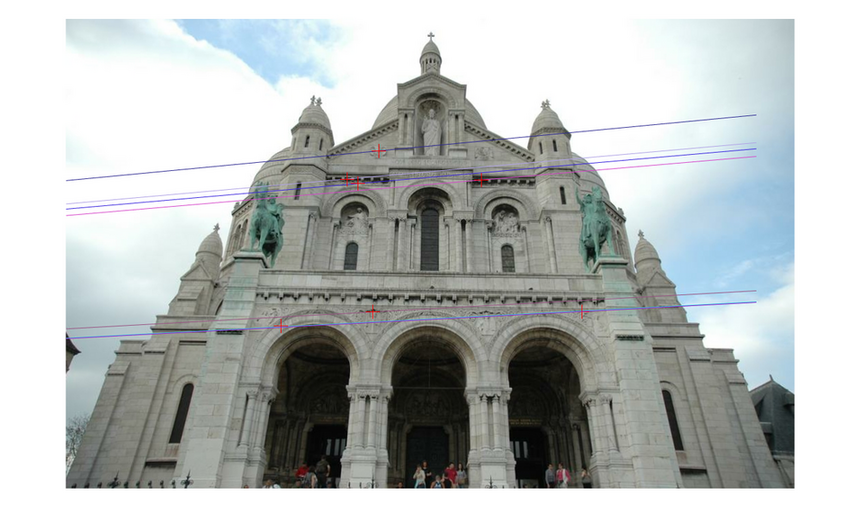} & \includegraphics[height=0.1\linewidth]{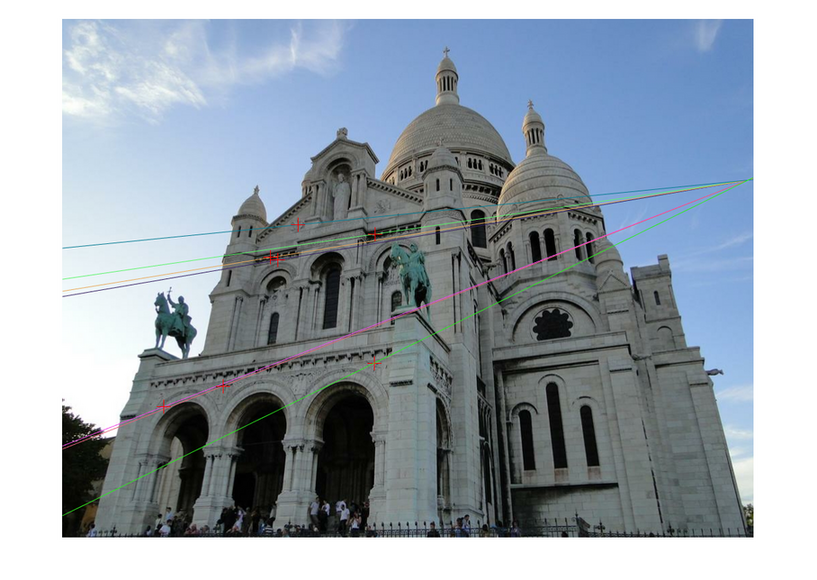}
    \includegraphics[height=0.1\linewidth]{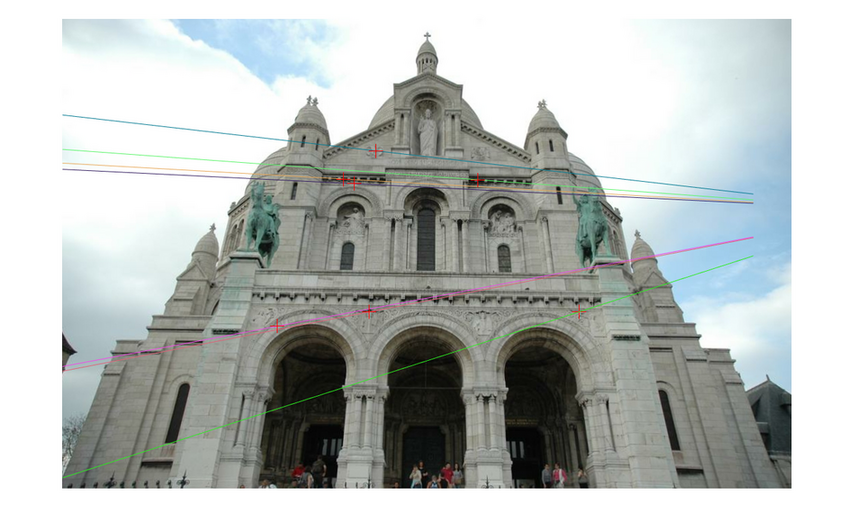} & \includegraphics[height=0.13\linewidth]{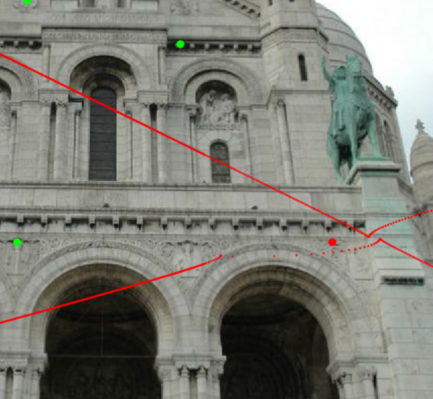} \\
    \hline
    \includegraphics[height=0.1\linewidth]{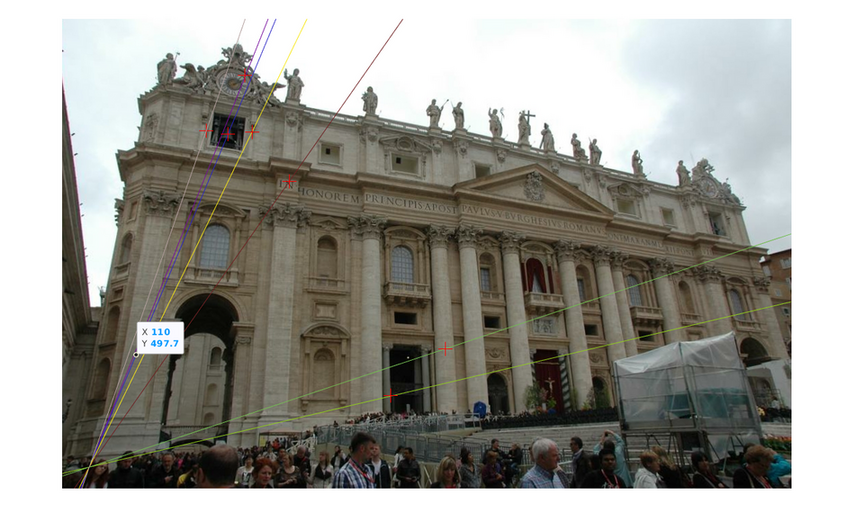}
    \includegraphics[height=0.1\linewidth]{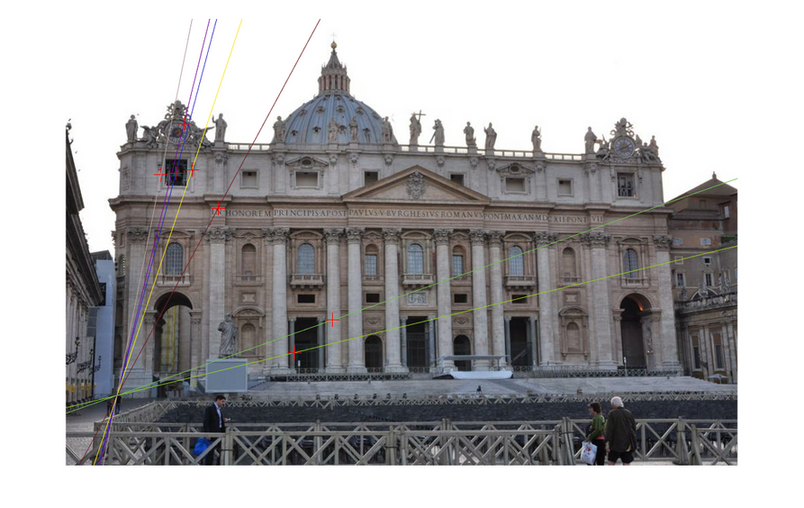} & \includegraphics[height=0.1\linewidth]{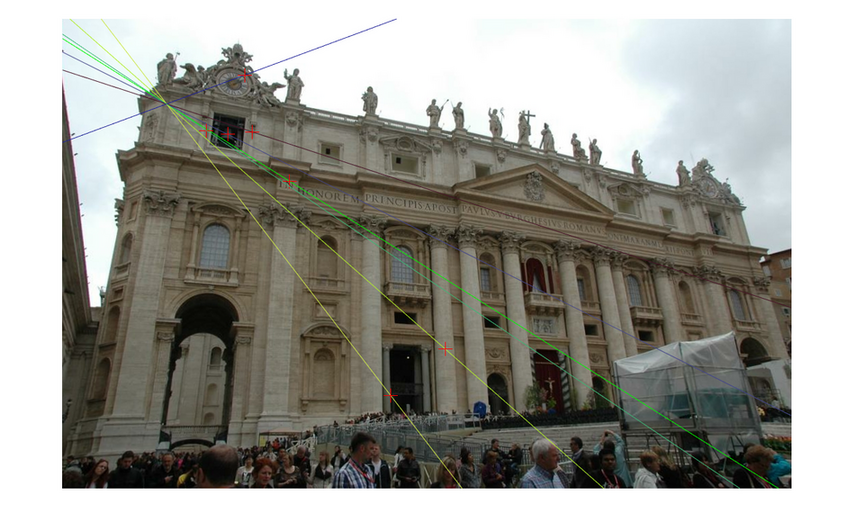}
    \includegraphics[height=0.1\linewidth]{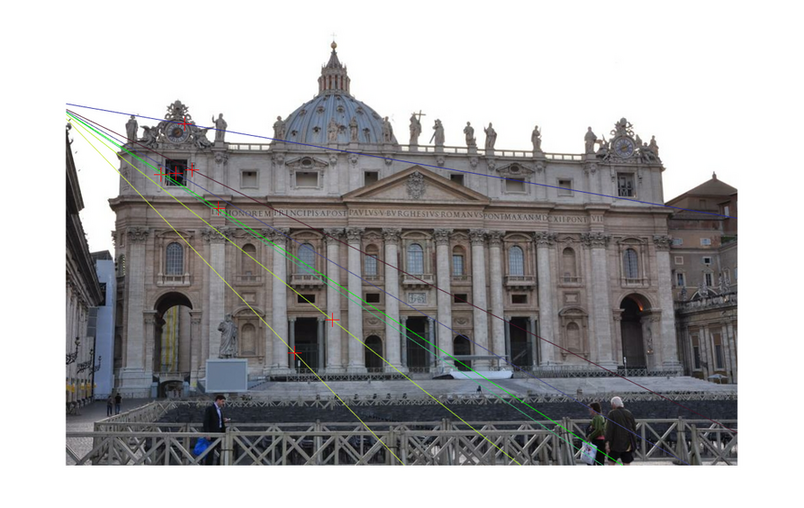} & \includegraphics[height=0.13\linewidth]{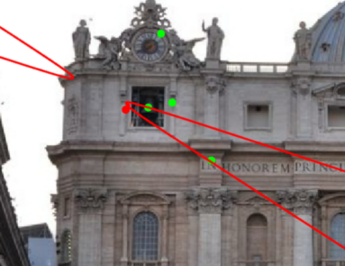} \\
    \hline
    \includegraphics[height=0.13\linewidth]{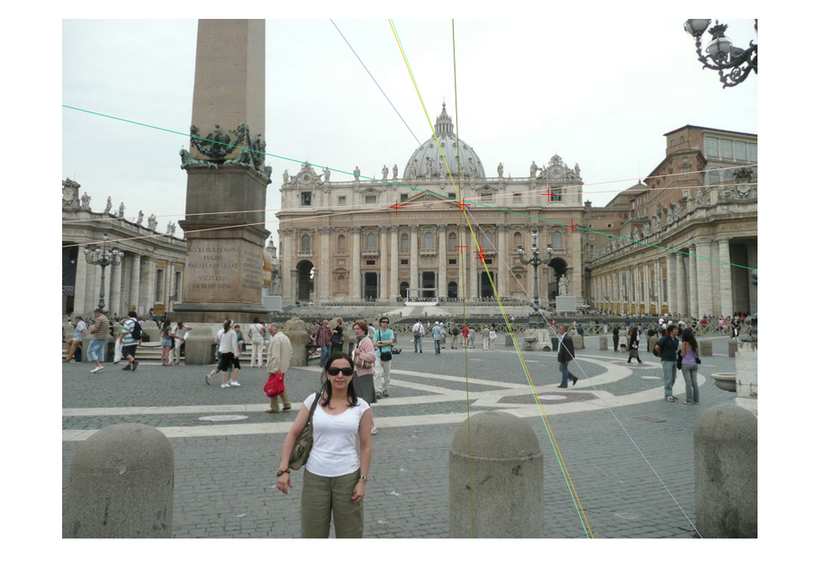}
    \includegraphics[height=0.13\linewidth]{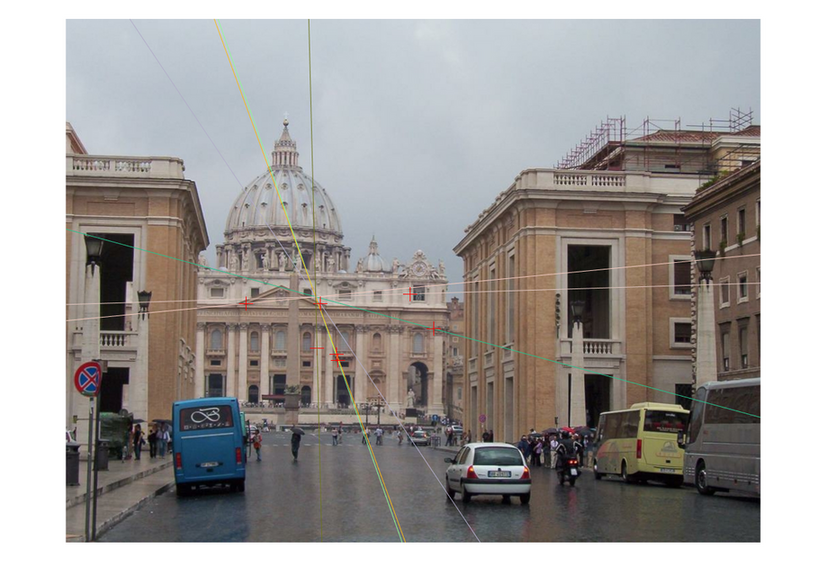} & \includegraphics[height=0.13\linewidth]{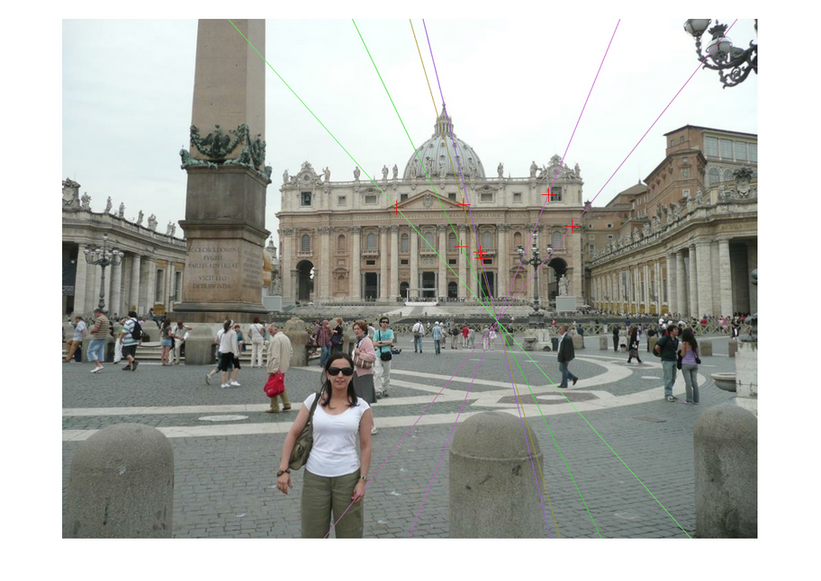}
    \includegraphics[height=0.13\linewidth]{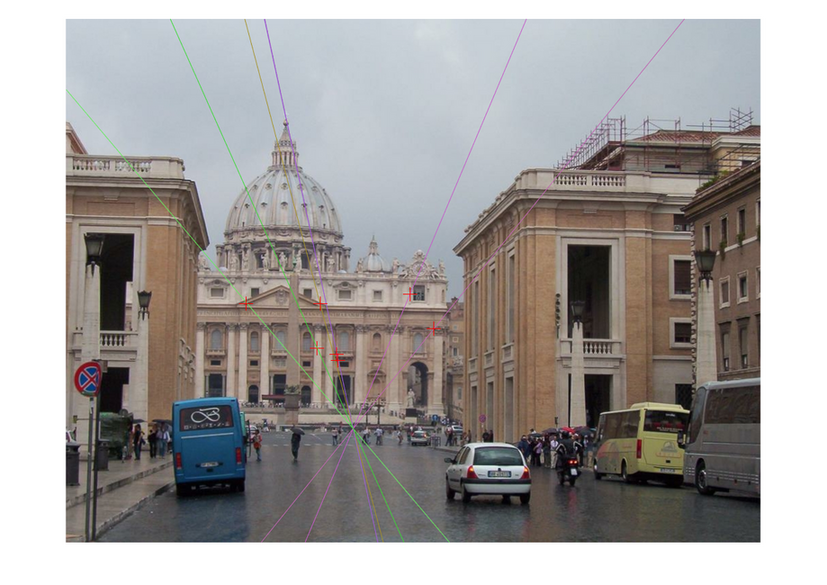} & \includegraphics[height=0.15\linewidth]{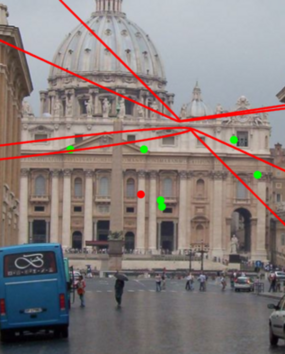} \\
    \hline
    \includegraphics[height=0.13\linewidth]{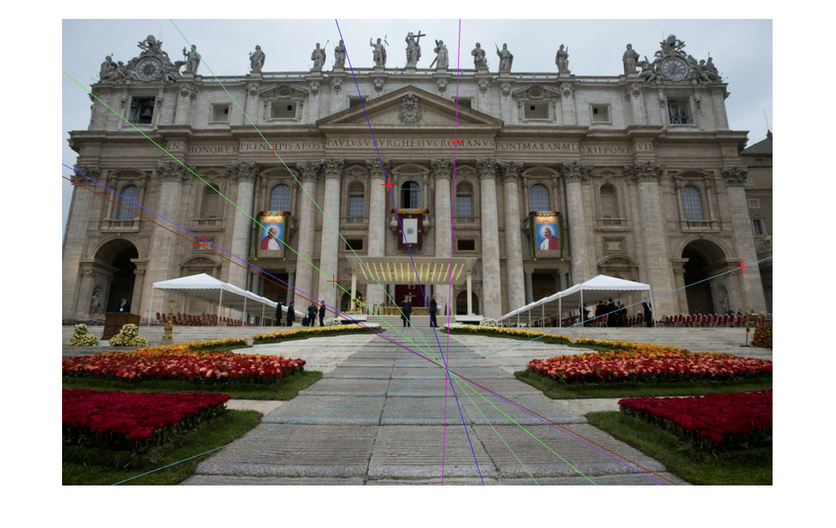}
    \includegraphics[height=0.13\linewidth]{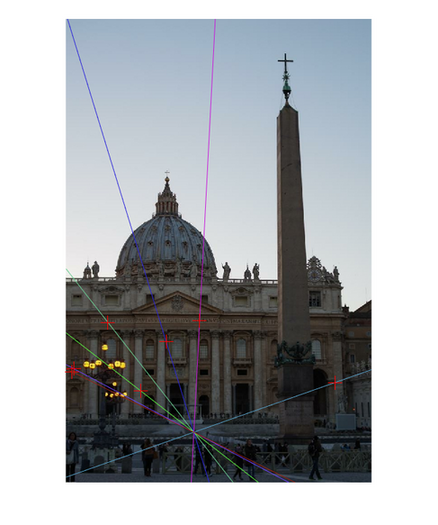} & \includegraphics[height=0.13\linewidth]{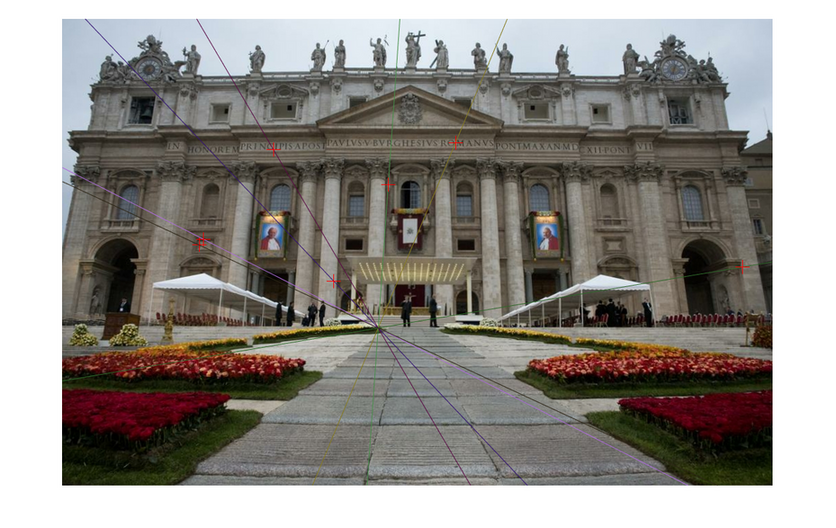}
    \includegraphics[height=0.13\linewidth]{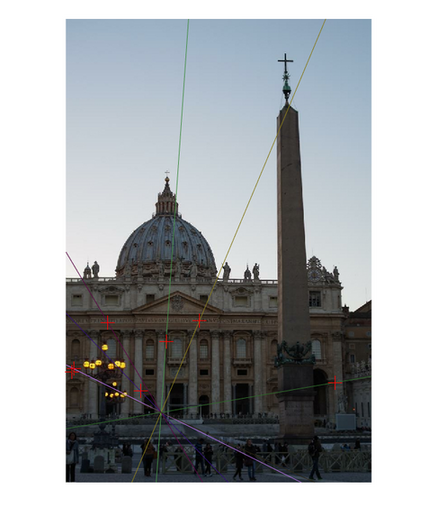} & \includegraphics[height=0.2\linewidth]{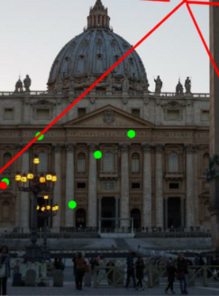} \\
    \end{tabular}
    \caption{Further samples of the 6.5-point degenerate curve on real images. The estimated epipolar geometry is the estimate closest to the ground truth among the multiple solutions to the minimal problem.}
    \label{tab:realF}
\end{table}

\begin{table}[h]
    \centering
    \begin{tabular}{c|c|c}
        Ground Truth Epipolar Geometry & Estimated Epipolar Geometry 
         & Degenerate Curve \\
         \hline
         \includegraphics[height=0.13\linewidth]{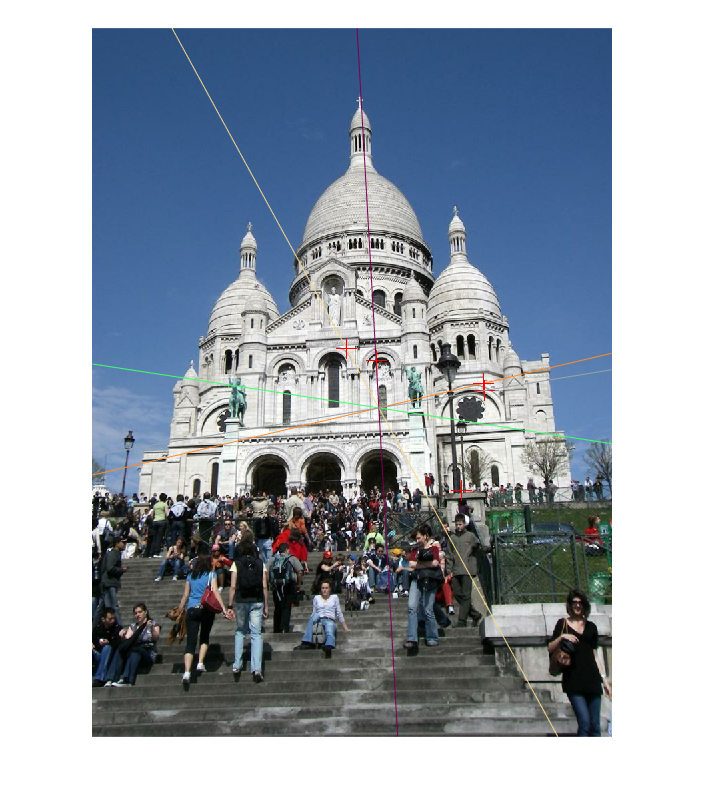}
    \includegraphics[height=0.13\linewidth]{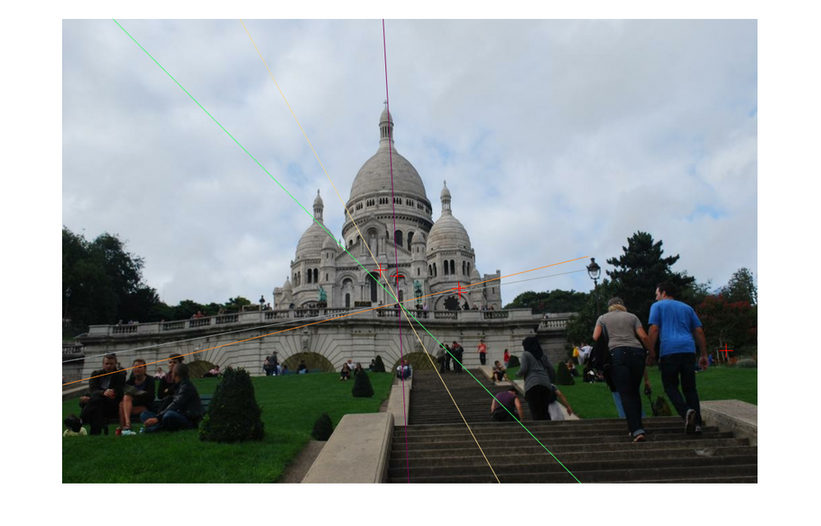} & \includegraphics[height=0.13\linewidth]{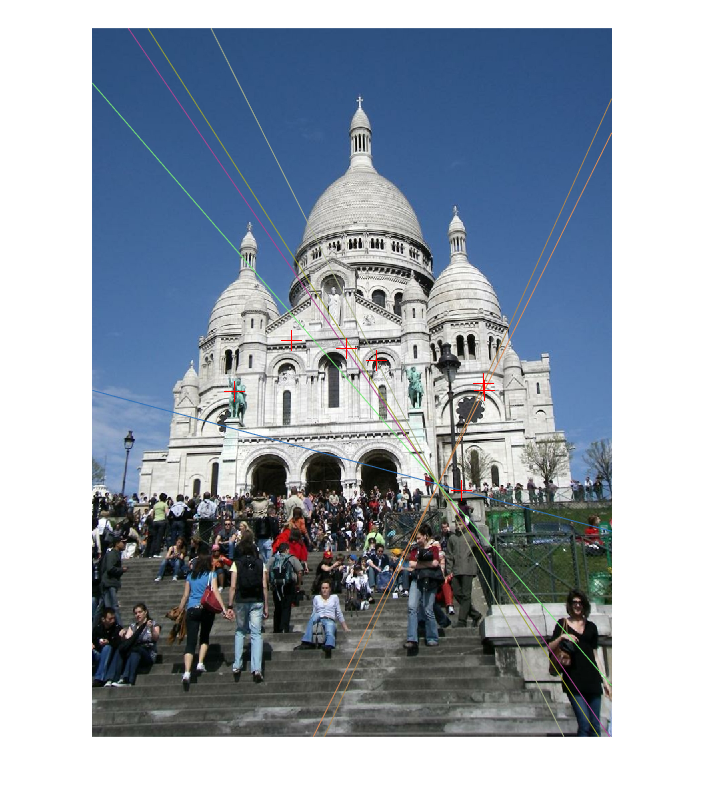}
    \includegraphics[height=0.13\linewidth]{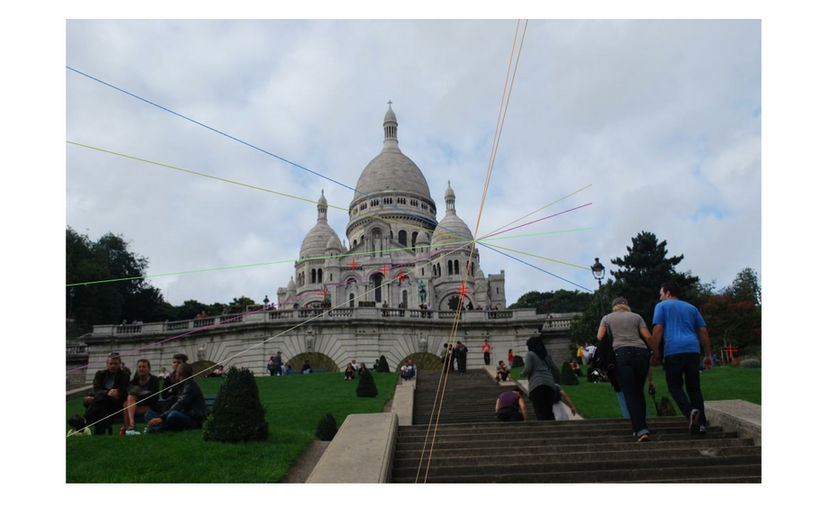} & \includegraphics[height=0.1\linewidth]{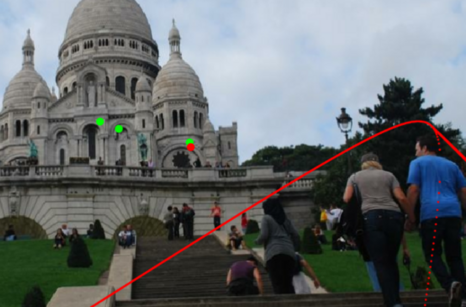} \\
    \hline
    \includegraphics[height=0.15\linewidth]{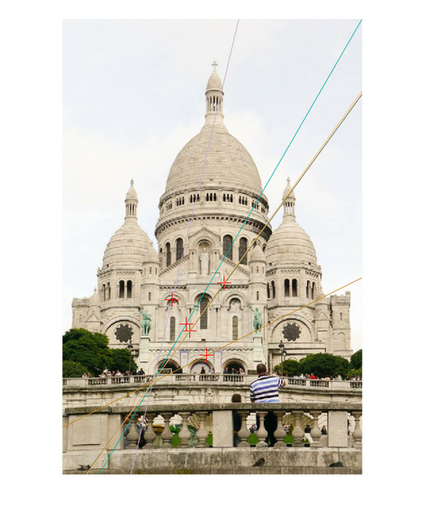}
    \includegraphics[height=0.15\linewidth]{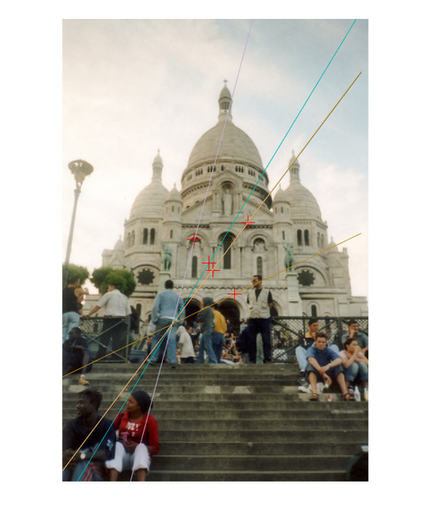} & \includegraphics[height=0.15\linewidth]{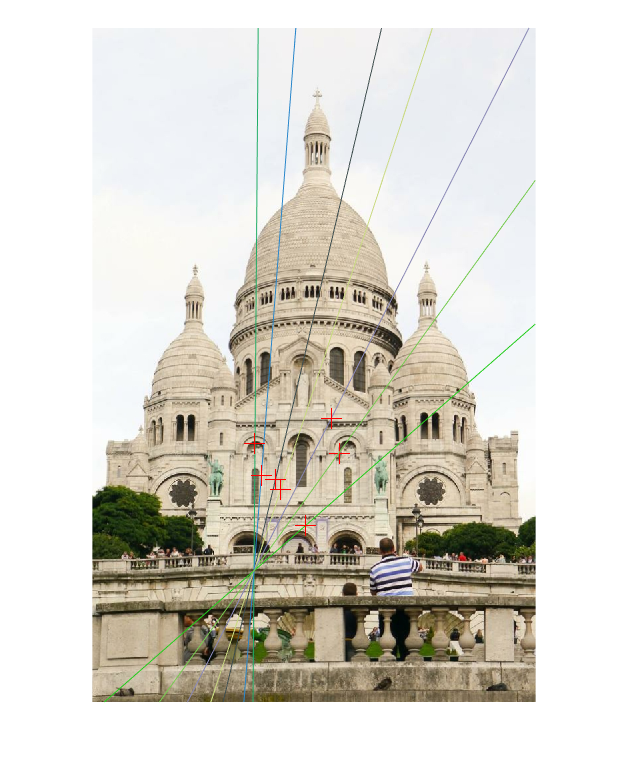}
    \includegraphics[height=0.15\linewidth]{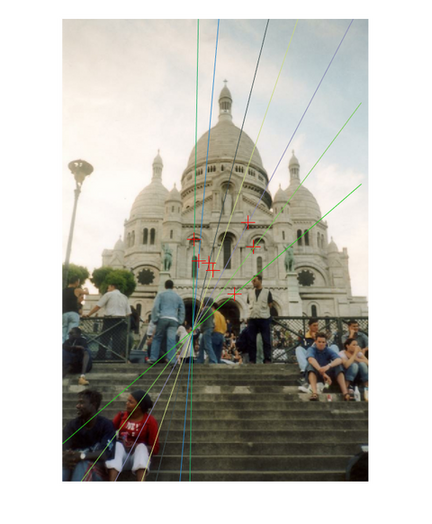} & \includegraphics[height=0.1\linewidth]{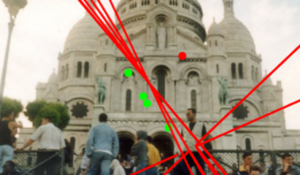} \\
    \hline
    \includegraphics[height=0.1\linewidth]{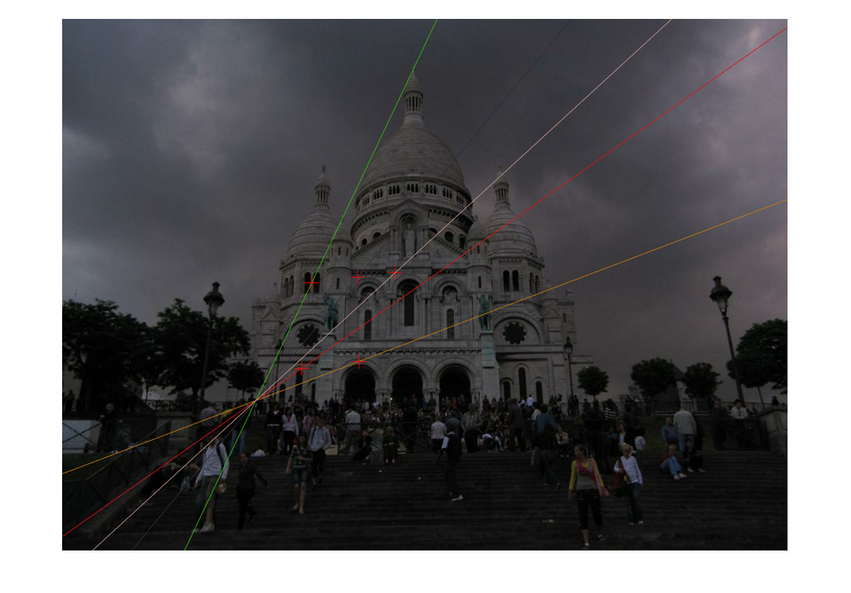}
    \includegraphics[height=0.1\linewidth]{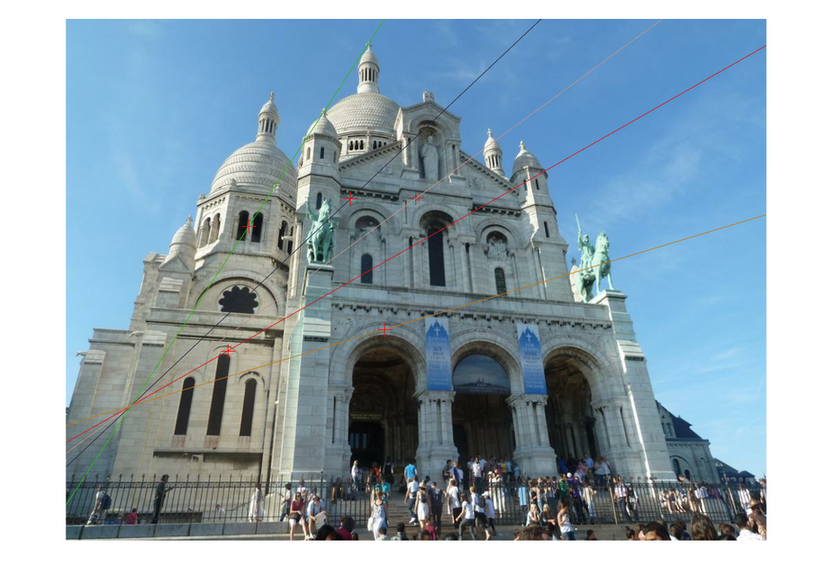} & \includegraphics[height=0.1\linewidth]{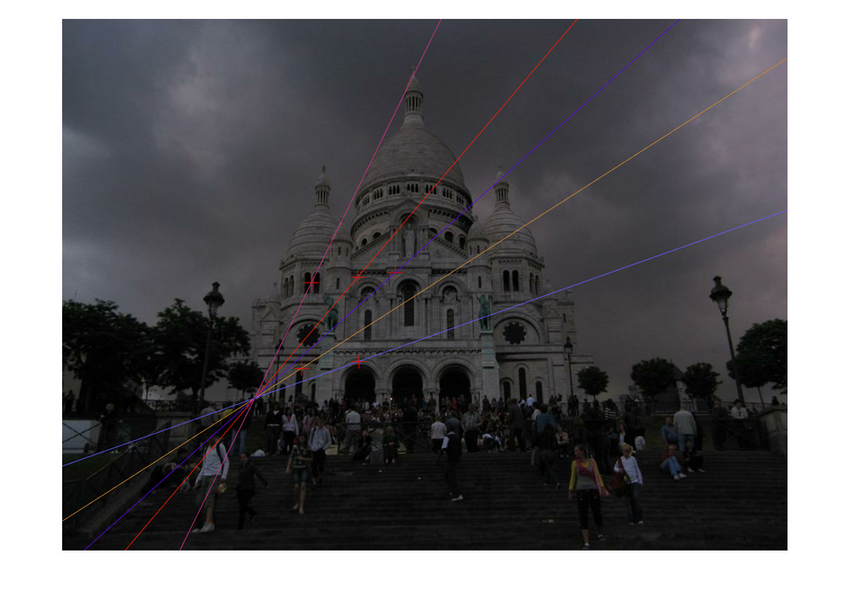}
    \includegraphics[height=0.1\linewidth]{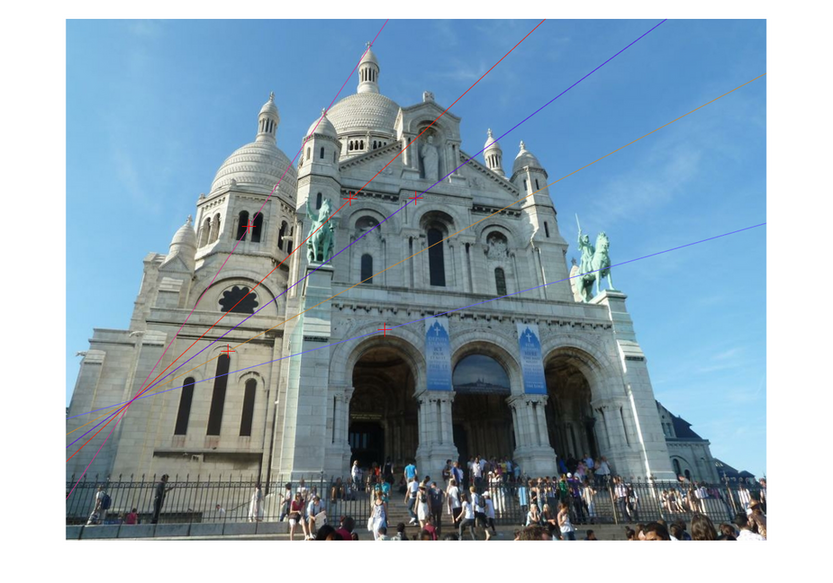} & \includegraphics[height=0.1\linewidth]{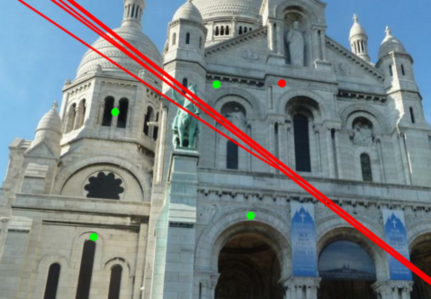} \\
    \hline
    \includegraphics[height=0.1\linewidth]{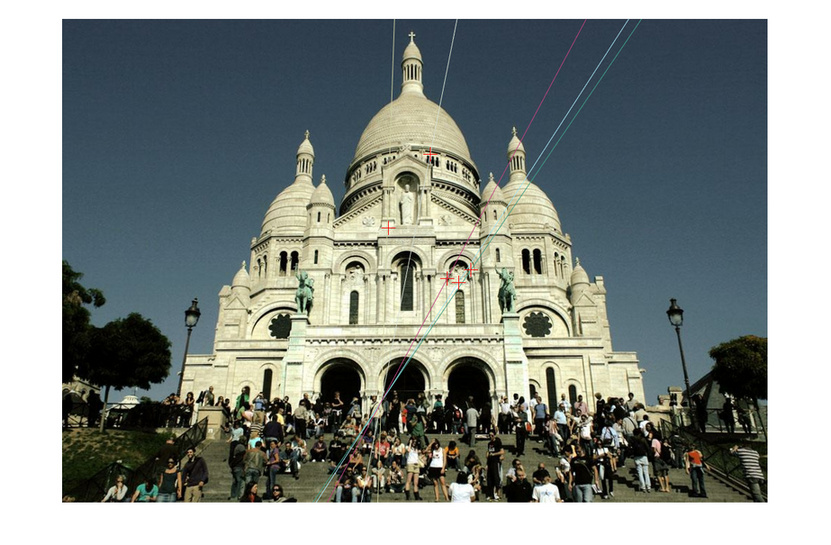}
    \includegraphics[height=0.1\linewidth]{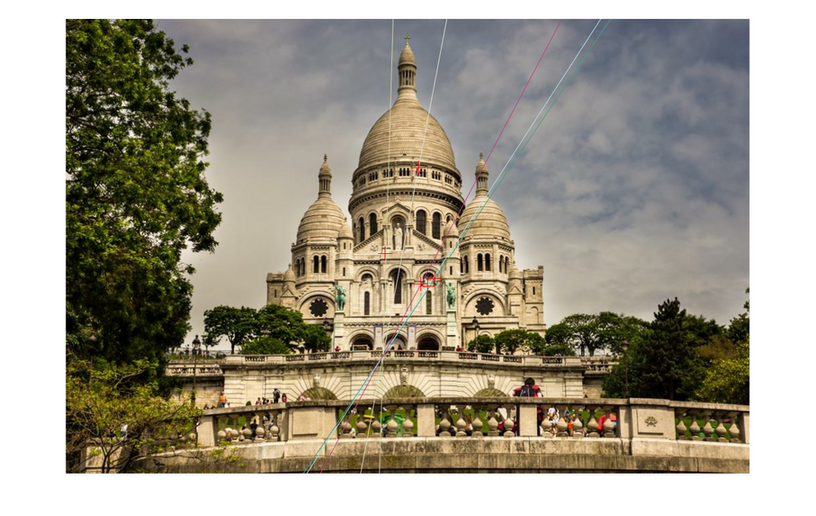} & \includegraphics[height=0.1\linewidth]{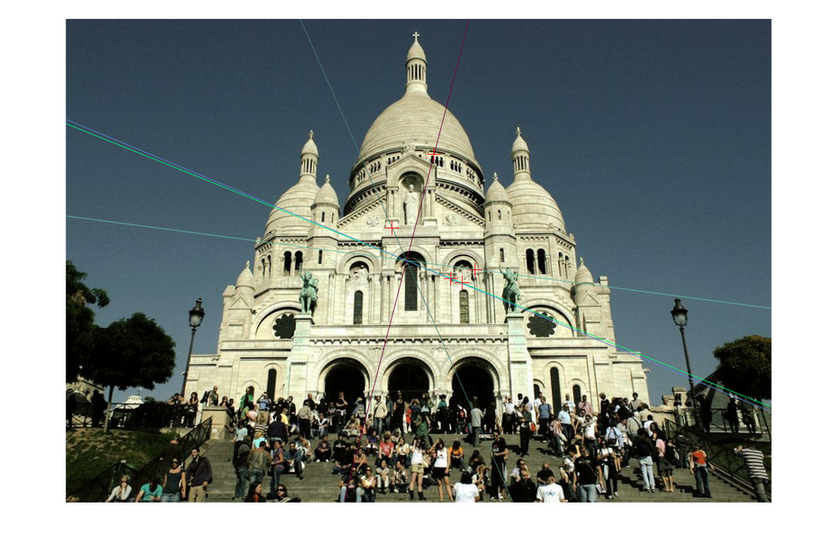}
    \includegraphics[height=0.1\linewidth]{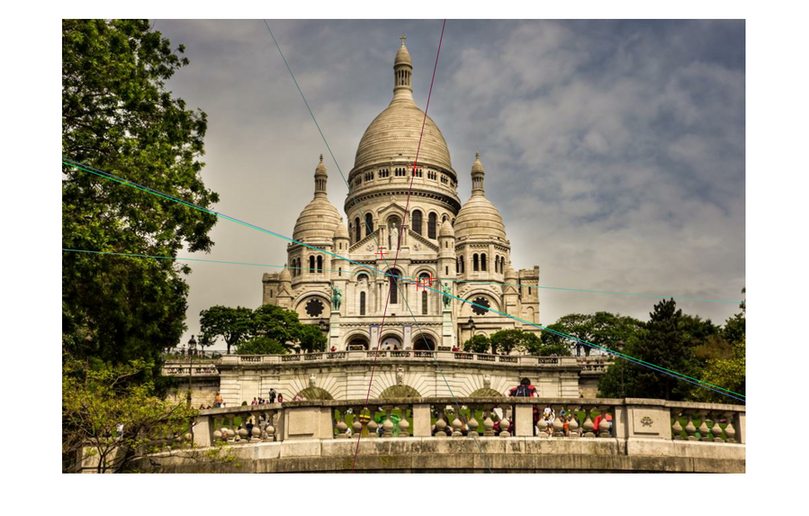} & \includegraphics[height=0.1\linewidth]{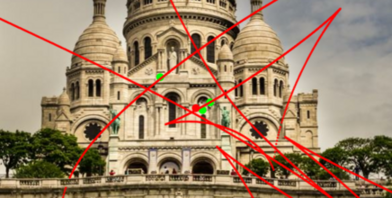} \\
    \hline
    \includegraphics[height=0.1\linewidth]{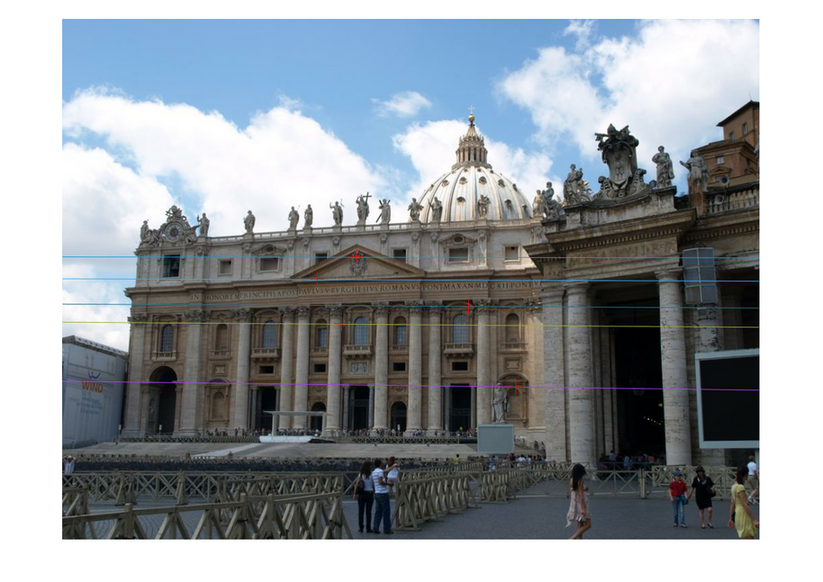}
    \includegraphics[height=0.1\linewidth]{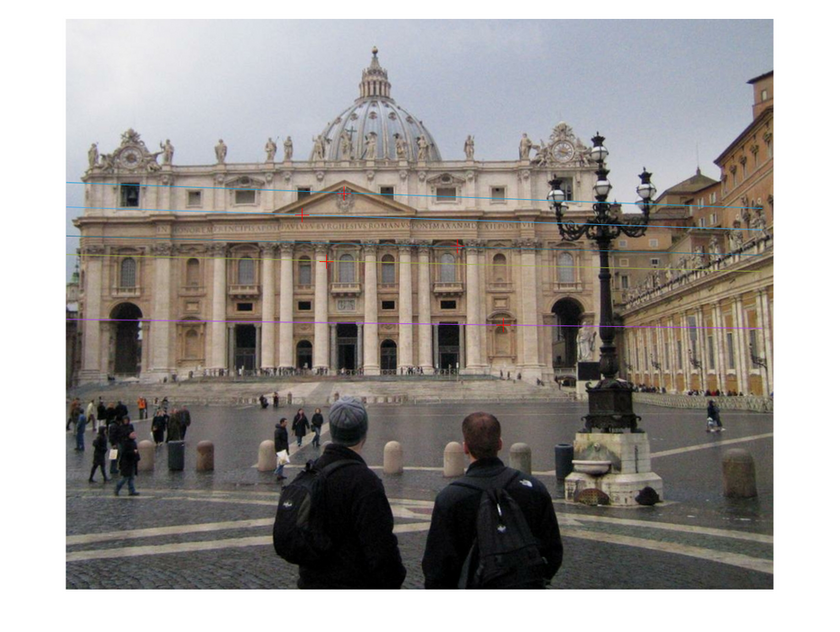} & \includegraphics[height=0.1\linewidth]{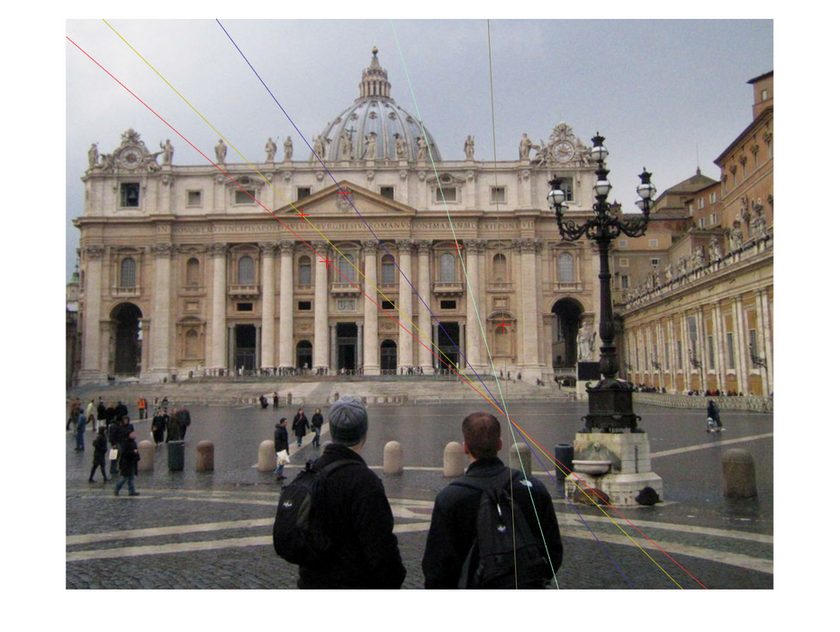}
    \includegraphics[height=0.1\linewidth]{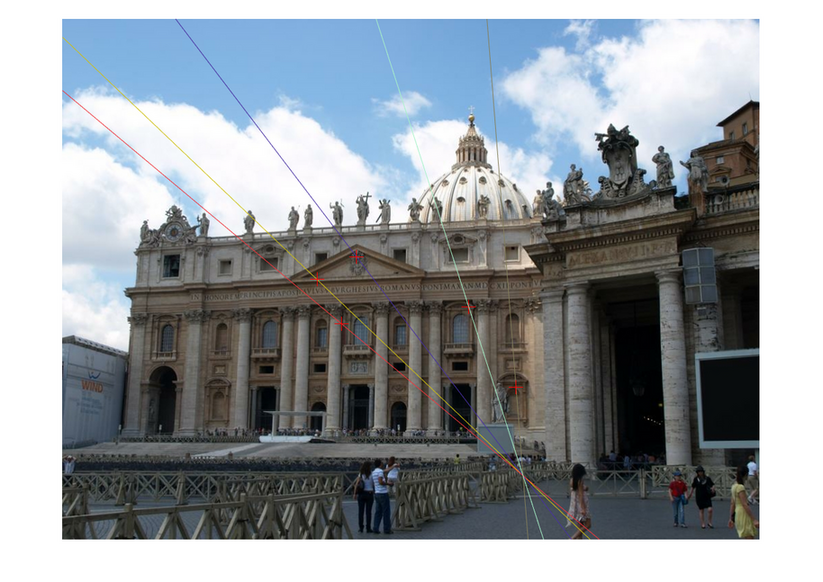} & \includegraphics[height=0.13\linewidth]{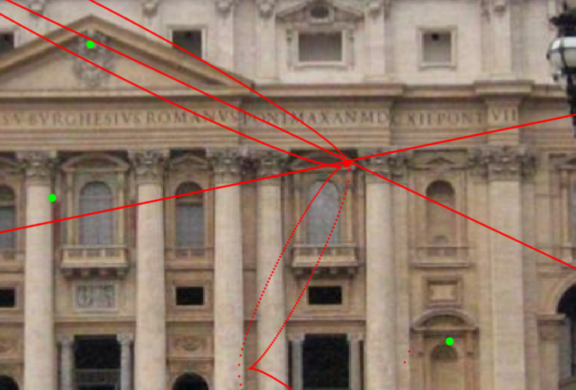} \\
    \hline
    \includegraphics[height=0.1\linewidth]{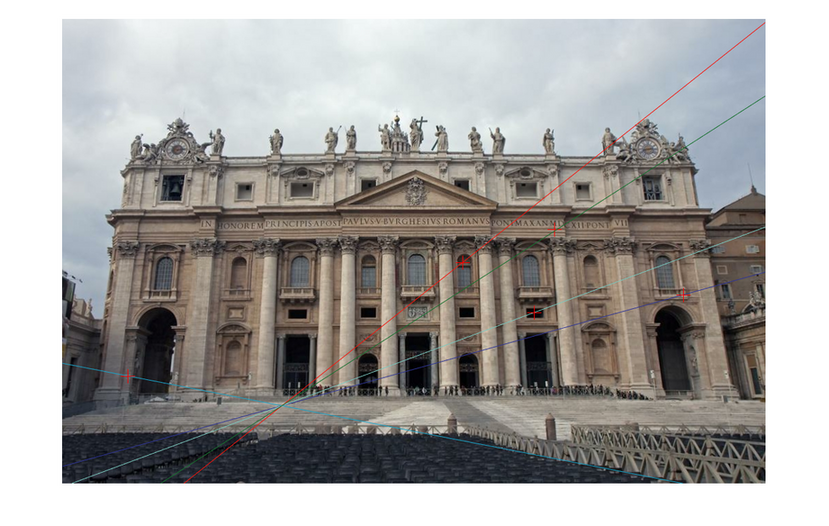}
    \includegraphics[height=0.1\linewidth]{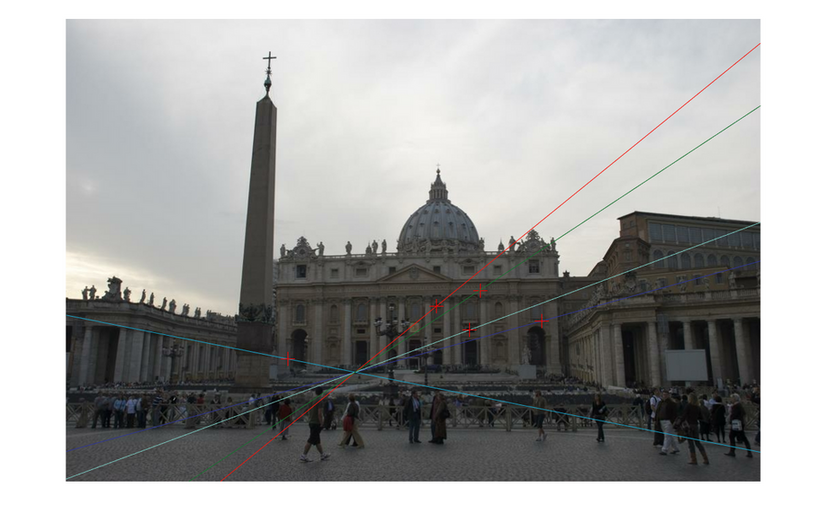} & \includegraphics[height=0.13\linewidth]{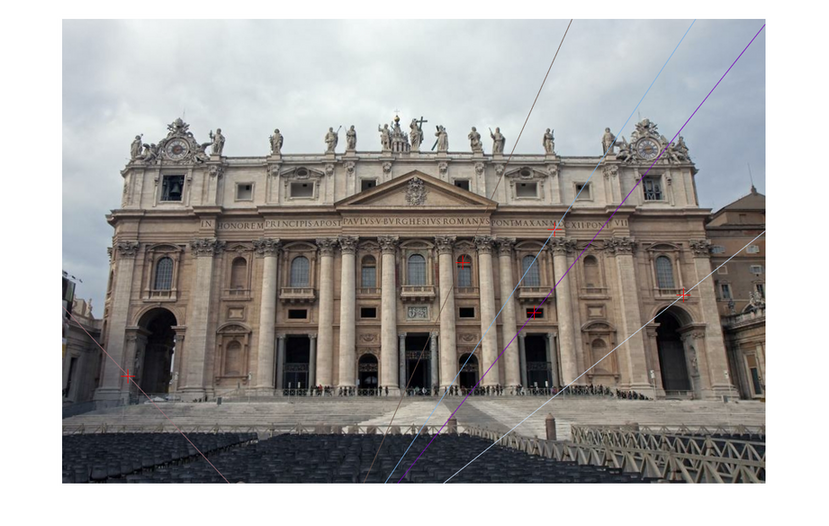}
    \includegraphics[height=0.1\linewidth]{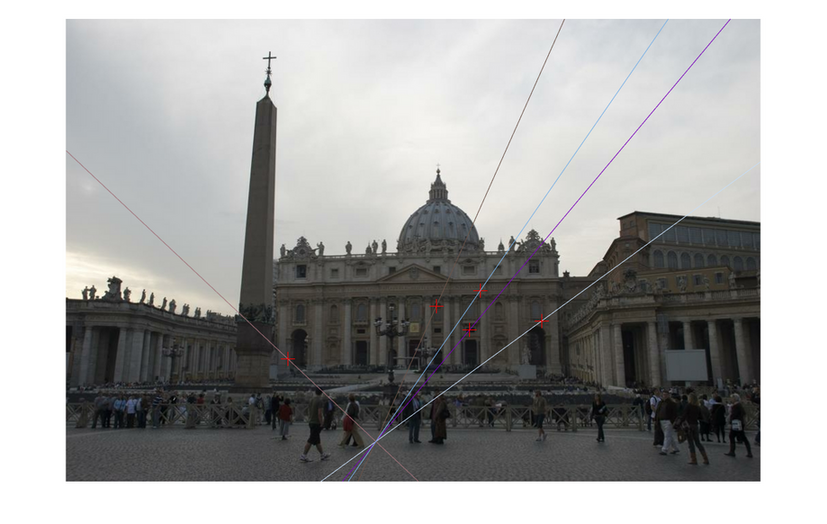} & \includegraphics[height=0.1\linewidth]{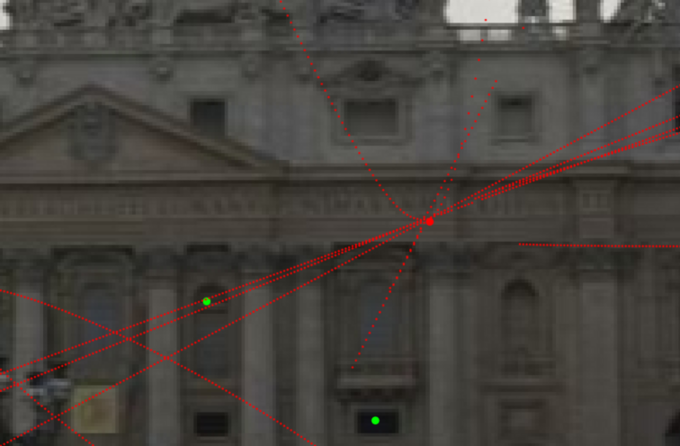} \\
    \hline
    \includegraphics[height=0.13\linewidth]{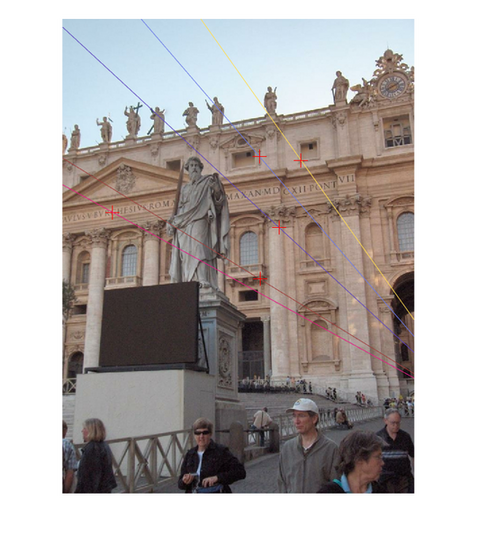}
    \includegraphics[height=0.13\linewidth]{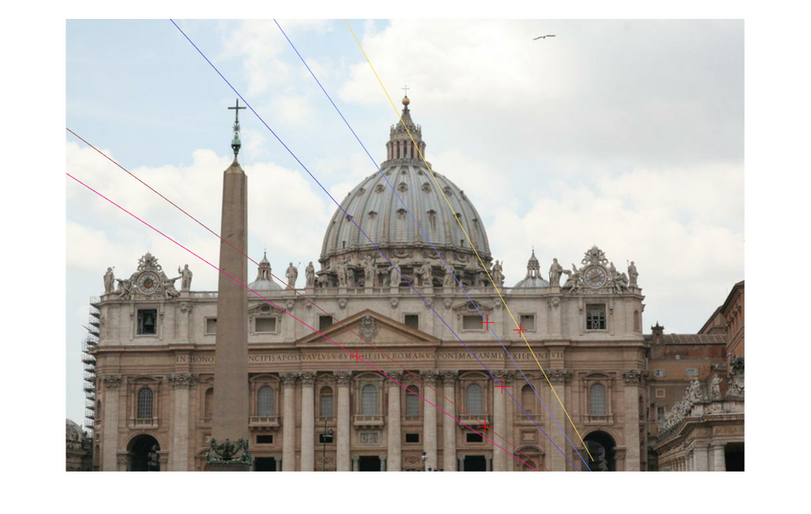} & \includegraphics[height=0.13\linewidth]{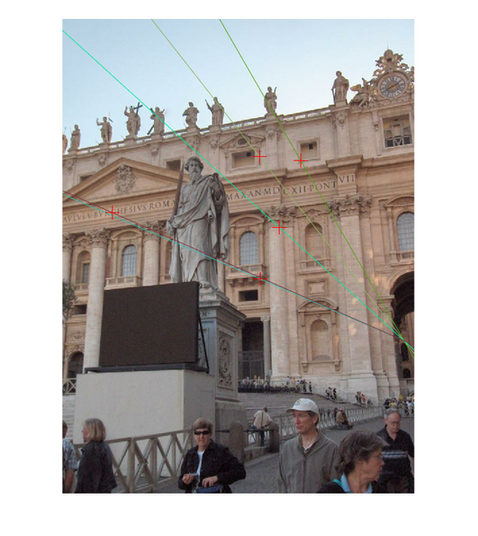}
    \includegraphics[height=0.13\linewidth]{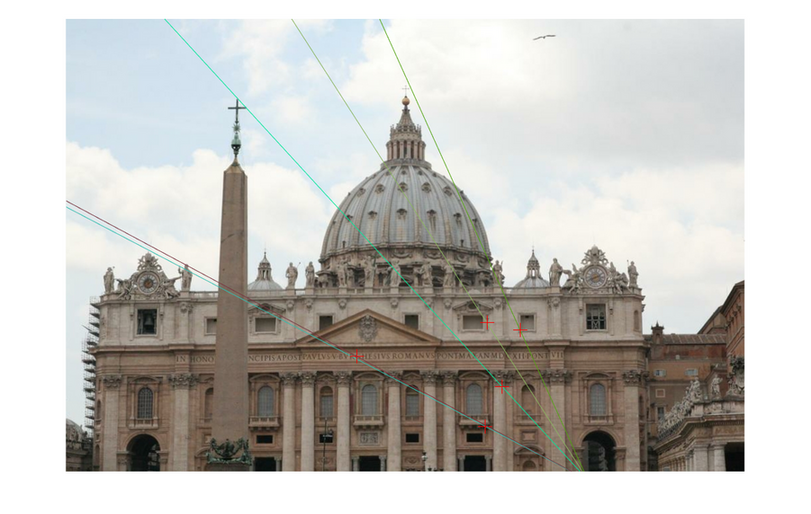} & \includegraphics[height=0.15\linewidth]{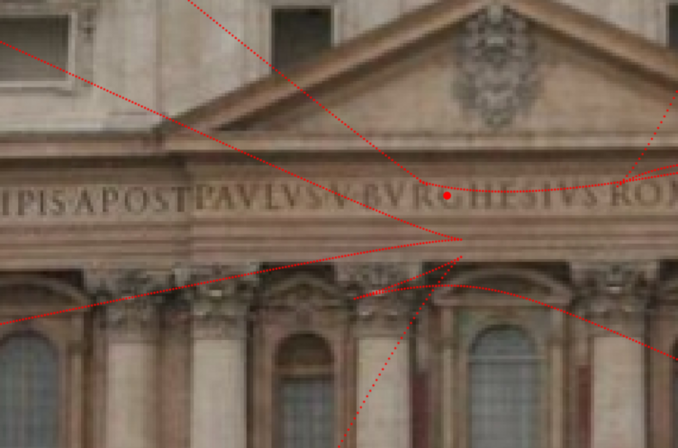} \\
    \end{tabular}
    \caption{Further samples of the 4.5-point degenerate curve on real images. The estimated epipolar geometry is the estimate closest to the ground truth among the multiple solutions to the minimal problem.}
    \label{tab:realE}
\end{table}

\end{document}